\documentclass[twoside]{article}

%
\usepackage[accepted]{aistats2021arxiv}
%


\usepackage[round]{natbib}




\usepackage{hypbmsec}
\usepackage{hyperref}
\usepackage{xcolor}
\usepackage{amssymb}
\usepackage{amsfonts}
\usepackage{xifthen}
\usepackage{xparse}
\usepackage{dsfont}
\usepackage{xspace}
\usepackage{amsmath}
\usepackage{amsthm}
\usepackage{float}

\usepackage{eqnarray}
\usepackage[shortlabels]{enumitem} 

\usepackage{soul}
\usepackage{hl_short}

\newtheorem{theorem}{Theorem}
\newtheorem{lemma}[theorem]{Lemma}
\newtheorem{corollary}[theorem]{Corollary}

\newcommand{\lr}[1]{\left (#1\right)}

\newcommand\kl[2]{KL\left(#1 \mid #2\right)}

\newcommand{\V}{\mathbb{V}}
\newcommand{\hatV}{\hat{\mathbb{V}}}
\NewDocumentCommand{\1}{o}{\mathds 1{\IfValueT{#1}{\lr{#1}}}}
\let\P\undefined
\NewDocumentCommand{\P}{o}{\mathbb P{\IfValueT{#1}{\lr{#1}}}}

\DeclareMathOperator{\MSE}{sq}
\DeclareMathOperator{\zeroone}{0/1}
\DeclareMathOperator{\CE}{ce}
\let\L\undefined
\newcommand{\L}{L}


\begin{document}

%

%

\twocolumn[

\aistatstitle{Diversity and Generalization in Neural Network Ensembles}

\aistatsauthor{ Luis A. Ortega \And Rafael Caba\~{n}as \And  Andrés R. Masegosa }

\aistatsaddress{ Autonomous University of Madrid 
\And    University of Almer\'ia,\\ IDSIA
\And Aalborg University } ]

\begin{abstract}
Ensembles are widely used in machine learning and, usually, provide state-of-the-art performance in many prediction tasks. From the very beginning, the diversity of an ensemble has been identified as a key factor for the superior performance of these models. But the exact role that diversity plays in ensemble models is poorly understood, specially in the context of neural networks. In this work, we combine and expand previously published results in a theoretically sound framework that describes the relationship between diversity and ensemble performance for a wide range of ensemble methods. More precisely, we provide sound answers to the following questions: how to measure diversity, how diversity relates to the generalization error of an ensemble, and how diversity is promoted by neural network ensemble algorithms. This analysis covers three widely used loss functions, namely, the squared loss, the cross-entropy loss, and the 0-1 loss; and two widely used model combination strategies, namely, model averaging and weighted majority vote. We empirically validate this theoretical analysis with neural network ensembles.

\end{abstract}

\section{Introduction}


Ensemble methods are one of the most widely used and studied techniques in machine learning \citep{hansen:90,Bre96,Bre01}. It has been successfully applied in many real-world problems \citep{girshick2014rich,wang2012mining,zhou2014ensemble,ykhlef2017efficient} and is usually part of the winning strategies in many machine learning competitions \cite[e.g.,][]{chen2016xgboost,hoch2015ensemble,puurula2014kaggle,stallkamp:12}. Recently, ensembles are also becoming very popular to improve uncertainty modeling in deep neural networks \citep{lakshminarayanan2017simple,wen2019batchensemble,maddox2019simple,wenzel2020hyperparameter}. 


Ensembles are created by combining several individual predictors. It is widely accepted \citep{dietterich2000ensemble,lu2010ensemble}  that the prediction performance of an ensemble jointly depends of the individual performance and the diversity of its individual members. Intuitively speaking, a set of predictors is diverse when their predictions do not coincide on all the samples. We know that when classifiers are diverse, they tend to make independent errors, therefore when they are aggregated, their errors tend to cancel out \citep{BK16}, which improves the ensemble prediction. For this reason, diversity has long been recognized as a key factor in ensemble performance \citep{kuncheva2003measures,cunningham2000diversity,brown2005managing}. The same cancelation of errors effect happens in the case of neural network ensembles \citep{hansen:90,lee2016stochastic,lakshminarayanan2017simple} where heuristic measures of diversity are usually analyzed to get insights of the ensemble learning algorithms \citep{fort2019deep,wen2019batchensemble,wenzel2020hyperparameter}. %

Unfortunately, there is a lack of consensus surrounding the underlying theory that can explain the role of diversity in the generalization performance of ensembles. The error rate of an ensemble and an individual predictor, for example, is well defined by the use of a loss function, but there is no well-established definition of diversity \citep{kuncheva2003measures}. And it is not well known how exactly the diversity among ensemble members affects the generalization error of the ensemble.

In this work, we introduce a novel theoretical framework that explains the relationship between diversity and the generalization performance of an ensemble. This theoretical framework is derived from previously published results with no direct connection among them \citep{krogh1995validation,masegosa2020learning,masegosa2020second}. The main contribution of our work is to find a theoretically sound way to combine these previous results in a single theoretical framework that explains the role of diversity in the generalization performance of a wide range of different ensembles. In our opinion, this general framework could potentially help the machine learning community to have a better understanding of the underlying trade-offs that have to be considered when designing novel ensemble learning algorithms, specially in the context of neural networks. The detailed contributions of this work are the followings: a general measure of ensemble diversity; a theoretical analysis that shows how the correlation among ensemble members affects diversity; the exact trade-off that exists between this diversity measure, the performance of the individual predictors and the generalization error of the ensemble; an analysis of the strategies used by most of current neural network ensemble learning algorithms to promote diversity; and, finally, an empirical evaluation of this theoretical framework. This analysis covers model averaging and weighted majority vote ensembles under the cross-entropy loss, square error and 0-1 loss.

\section{Related Work}
\label{sec:rw}


\subsection*{Diversity Measures}

Many different names have been given to the concept of diversity, including ambiguity \citep{krogh1995validation}, dependency \citep{zhou2012ensemble}, orthogonality \citep{kuncheva2003measures}, disagreement \citep{masegosa2020second}, and so on. And there is a huge literature proposing many different measures of diversity, mostly employing the predictions made by the individual models \citep{tang2006analysis, kuncheva2003measures,zhou2010multi,chandra2004divace,roli2001methods,buschjager2020generalized}. However, none of these publications presents a generic diversity measure that can be applied to different kind of ensembles and has formal ties to the ensemble generalization error.

\subsection*{Diversity and Generalization}

The first theoretical attempts to explain why diversity reduces the ensemble error appeared in \citep{krogh1995validation,geman1992neural}, in the context of regression ensembles. These proposals show why ensembles with high diversity reduce the prediction error. However, these works do not describe the relationship between the empirical diversity of an ensemble and the generalization error of the ensemble. In this work, we extend the results given by \citet{krogh1995validation} and establish such a link for regression ensembles using PAC-Bayesian bounds  \citep{McA98}. Moreover, our analysis also applies to ensembles of classifiers. 

Many different works have tried to adapt the squared error decomposition of regression ensembles \citep{krogh1995validation} to classification settings \citep{brown2009information, zhou2012ensemble,yu2011diversity,jiang2017generalized}. In this work, we employ upper bounds over the ensemble error to derive a novel decomposition which applies to different loss functions, and which incorporates a term due to the error of individual members as well as a diversity component. 

In the context of majority vote ensembles and PAC-Bayesian theory, \citep{LMR11,GLL+15} proposed a PAC-Bayesian bound which depends on the error rate and the variance of the individual classifiers. This variance term can be interpreted as a diversity measure. However the analysis only applies to binary classifiers.
Recently, \citet{masegosa2020second} presented another theoretical analysis  of majority vote ensembles using a novel PAC-Bayesian bound which applies to multiclass classification problems.
However, this bound only contains a diversity term when applied to binary classification, which coincides with diversity term found by \citet{LMR11,GLL+15}.
In this work, we extend the results of \citet{masegosa2020second} and introduce a PAC-Bayesian bound for multiclass classification which explicitly contains a novel term measuring the diversity of the ensemble. We also establish novel connections with regression ensembles and model averaging of probabilistic classifiers. 

In the context of model averaging of probabilistic classifiers, \citet{masegosa2020learning} introduced a novel theoretical analysis of this problem using a PAC-Bayesian bound which explicitly contains a term measuring the diversity of this kind of ensembles. 
However, \citet{masegosa2020learning} was mainly focused on the analysis of Bayesian model averaging under model misspecification. The role of diversity in ensemble learning was briefly mentioned in this work and was not theoretically or empirically analyzed. We extend this work by doing a considerably more in-depth theoretical and empirical investigation of the role that diversity plays in the generalization performance of various kind of ensembles. 

\subsection*{Diversity and Ensemble Learning}
Virtually, all ensembles methods encourage diversity among their individual models either implicitly or explicitly. Classic techniques like Bagging \citep{Bre96,Bre01} or Boosting \citep{FS96} implicitly encourage diversity by creating different training data sets. Deep ensembles also resort to implicit techniques such as random initialization \citep{lakshminarayanan2017simple,wen2019batchensemble}, tweaking the optimizer \citep{maddox2019simple,zhang2019cyclical,wenzel2020hyperparameter} or employing different hyperparameter settings \citep{wenzel2020hyperparameter}. In this work, we propose a theoretical explanation of why these diversity-promoting strategies result in superior ensemble models. 

A growing number of ensemble learning methods  \citep{liu1999ensemble,jiang2017generalized,buschjager2020generalized, pang2019improving,jain2020maximizing} 
are based on optimizing loss functions which include a term to explicitly induce diversity. However, these loss functions do not have a theoretical connection to the generalization error of the ensemble in contrast to the ones considered here. \citet{masegosa2020learning} introduced an ensemble learning algorithm for the cross-entropy loss based on PAC-Bayesian bounds which explicitly encourage diversity, but it was only applied to ensembles of simple neural networks. In this work, we significantly extend the experimental evaluation presented in \citep{masegosa2020learning} and apply this analysis to regression and weighted majority vote ensembles. 



\section{Preliminaries}
\label{sec:preliminaries}

Let $D=\{(\bmx_1,y_1),\ldots,(\bmx_n,y_n)\}$ be a set of independent and identically distributed data samples according to an unknown distribution $\nu$ over ${\cal X} \times {\cal Y}$, where $\cal{X}$ and $\cal{Y}$ are arbitrary. Let $h_\bmtheta = h(\cdot \ ;\bmtheta)$ denote a single predictor, and $h_\bmtheta(\bmx) = h(\bmx;\bmtheta)$ its prediction for an input \(\bmx \in \cal X\). We assume that predictors are parameterized by $\bmtheta\in \bmTheta$ where \(\bmTheta\) is composed by a finite set of such parameter vectors, \(\bmTheta = \{\bmtheta_1, \dots, \bmtheta_K\}\) \footnote{Appendix \ref{app:4.4} shows that is not a restrictive setting indeed and, also, how to handle  $\bmTheta\subseteq \Re^M$.}.

\emph{Regression ensembles} are defined using the $\rho$-weighted model average predictor and the \emph{squared error} loss, denoted as the $\MSE$-loss. Thus, for an specific data-point \((\bmx, y)\), the loss of an individual regressor is defined as $\ell_{\MSE}(\bmtheta,\bmx,y)=(y - h(\bmx;\bmtheta))^2$ and the loss of this ensemble is defined as $\ell_{\MSE}(\rho, \bmx, y) = (y- \E_\rho [h(\bmx, \bmtheta)])^2$, where the subscript ``$sq$'' is used to distinguish these losses from the rest.


\emph{Weighted majority voting ensembles} are defined using the $\rho$-weighted majority vote predictor and the zero-one loss, denoted as \emph{$\zeroone$-loss}. Thus, for an specific data-point \((\bmx, y)\), the loss of an individual classifier is defined as $\ell_{0/1}(\bmtheta, \bmx, y) = \1[ h(\bmx, \bmtheta) \neq y]$ and the loss of this ensemble is defined as $\ell_{0/1}(\rho, \bmx, y) = \1[\arg \max_{y'} \E_\rho [\1(h(\bmx, \bmtheta)=y')] \neq y]$.


\emph{Model averaging ensembles} are defined using the $\rho$-weighted model average predictor and the \emph{cross entropy loss} (a.k.a. log-loss), denoted as the $\CE$-loss. In this ensemble, the individual models are probabilistic classifiers whose output is a conditional distribution over the class labels ${\cal Y}$ given the sample $\bmx$, i.e. $h(\bmx;\bmtheta)=p(\cdot|\bmx)$. Thus, for an specific data-point \((\bmx, y)\), the loss of an individual predictor is defined as $\ell_{\CE}(\bmtheta, \bmx, y)=-\log p(y|\bmx, \bmtheta)$ and the loss of this ensemble is defined as $\ell_{\CE}(\rho, \bmx, y) = -\log \E_\rho [p(y|\bmx, \bmtheta)]$.


 For any of these loss functions, denoted generically by $\ell(\bmtheta,\bmx,y)$, the empirical loss of an individual model $h_\bmtheta$ over the dataset $D$ is defined by $\hat{L}(\bmtheta, D) = \frac{1}{n}\sum_{i=1}^n \ell(\bmtheta,\bmx,y)$  and the population-level or expected counterpart is the expected loss of $h_\bmtheta$ defined by $L(\bmtheta) = \E_\nu[\ell(\bmtheta,\bmx,y)]$ where the expectation $\E_\nu$ is over the random choice of $(\bmx, y) \sim \nu$. Similarly, we can define the expected loss of an ensemble, denoted $L(\rho)$, as $L(\rho) = \E_\nu[\ell(\rho,\bmx,y)]$, for any of the previous ensemble loss functions $\ell(\rho,\bmx,y)$. 
 
 Throughout the rest of the paper, the terms $\L(\rho)$, $L(\bmtheta)$ and $\hat{\L}(\bmtheta,D)$ usually appear without any subscript. In this case, we highlight that the performed analyses apply to any of these three ensemble settings. When the corresponding subscript is attached (i.e. $\MSE$, $\zeroone$ and $\CE$), we refer to the specific ensemble model.

\section{Diversity and Generalization}

\subsection{Decomposing the Loss of an Ensemble Using an Upper Bound}

In this section, we introduce the following upper bounds of an ensemble's expected loss $L(\rho)$ as a way to decompose this error function. These upper bounds are expressed in terms of the $\rho$-average loss of the individual models $\E_\rho[L(\bmtheta)]$ and our newly proposed diversity measure, denoted as $\mathbb{D}(\rho)$.

\begin{theorem}
\label{thm:decomposition}
Under the settings given in Section \ref{sec:preliminaries}, we have that
\[L(\rho) \leq \alpha(\E_\rho[\L(\bmtheta)] - \mathbb{D}(\rho)), \]
\noindent where $\alpha$ equals $1$ if we consider the $\MSE$-loss or the $\CE$-loss, and $4$ if we consider the $\zeroone$-loss. Furthermore, for the $\MSE$-loss, this inequality becomes an equality. The expression of the diversity measure for each of these loss functions is:
    \begin{eqnarray*}
    \mathbb{D}_{\MSE}(\rho) &=& \E_\nu\Big[\V_\rho(h_R(\bmx;\bmtheta))\Big],\\
    \mathbb{D}_{\CE}(\rho) &=& \E_\nu\left[\V_\rho\left(\frac{p(y\mid\bmx,\bmtheta)}{\sqrt{2} \max_\bmtheta p(y\mid\bmx,\bmtheta)}\right)\right],\\
    \mathbb{D}_{\zeroone}(\rho) &=& \E_\nu\Big[\V_\rho\Big(\1(h_W(\bmx;\bmtheta)\neq y)\Big)\Big],
    \end{eqnarray*}
\noindent where $\V_\rho(\cdot)$ denotes the variance of a function w.r.t. the data generating distribution $\nu$, and for $\mathbb{D}_{\CE}(\rho)$ to be well-defined we need that $0<\max_{\bmtheta\in \bmTheta} p(y \mid \bmx,\bmtheta)\leq 1$ for every \((\bmx, y) \in supp(\nu)\). Finally, note that all diversity terms described above can be written as 
    \[
        \mathbb{D}(\rho) = \E_\nu \Big[ \mathbb{V}_{\rho} \left( f(y,\bmx;\bmtheta) \right)\Big],
    \]
with a specific function \(f\) for each of the loss functions. 
\end{theorem}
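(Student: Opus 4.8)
The plan is to prove each of the three bounds by a pointwise argument: for fixed $(\bmx,y)$ I would establish the corresponding relation between the ensemble loss $\ell(\rho,\bmx,y)$ and $\alpha\big(\E_\rho[\ell(\bmtheta,\bmx,y)]-\V_\rho(f(y,\bmx;\bmtheta))\big)$, and then take $\E_\nu$ of both sides; the concluding sentence of the statement is then immediate by reading off $f$ in each case. For the \textbf{squared loss} this is the classical ambiguity decomposition. Fixing $(\bmx,y)$ and writing $\bar h=\E_\rho[h(\bmx;\bmtheta)]$, I would expand $\E_\rho[(y-h(\bmx;\bmtheta))^2]$ around $\bar h$; the cross term vanishes because $\E_\rho[\bar h-h(\bmx;\bmtheta)]=0$, leaving the identity $\E_\rho[(y-h(\bmx;\bmtheta))^2]=(y-\bar h)^2+\V_\rho(h(\bmx;\bmtheta))$, i.e. $\ell_{\MSE}(\rho,\bmx,y)=\E_\rho[\ell_{\MSE}(\bmtheta,\bmx,y)]-\V_\rho(h(\bmx;\bmtheta))$. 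Taking $\E_\nu$ gives the claimed equality with $\alpha=1$ and $f=h_R$.

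For the \textbf{cross-entropy loss} the key is a second-order refinement of Jensen's inequality for $\log$. First I would prove the elementary lemma: for $x,\mu\in(0,M]$, Taylor's theorem with Lagrange remainder applied to $t\mapsto\log t$ (whose second derivative $-1/\xi^2$ is at most $-1/M^2$ on $(0,M]$) gives $\log x\le \log\mu+\frac{x-\mu}{\mu}-\frac{(x-\mu)^2}{2M^2}$. Apply this pointwise with $x=p(y\mid\bmx,\bmtheta)$, $\mu=\E_\rho[p(y\mid\bmx,\bmtheta)]$ and $M=\max_{\bmtheta}p(y\mid\bmx,\bmtheta)$ — note both $x$ and $\mu$ lie in $(0,M]$ under the stated hypothesis — then take $\E_\rho$ so the linear term cancels, and rearrange to obtain $-\log\E_\rho[p(y\mid\bmx,\bmtheta)]\le\E_\rho[-\log p(y\mid\bmx,\bmtheta)]-\frac{1}{2M^2}\V_\rho(p(y\mid\bmx,\bmtheta))$. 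Since $\frac{1}{2M^2}\V_\rho(p(y\mid\bmx,\bmtheta))=\V_\rho\big(p(y\mid\bmx,\bmtheta)/(\sqrt{2}\,M)\big)$, taking $\E_\nu$ yields the bound with $\alpha=1$.

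For the \textbf{0-1 loss} I would fix $(\bmx,y)$ and set $W=\E_\rho[\1(h(\bmx;\bmtheta)\neq y)]$, the $\rho$-weighted point error. Step one: the weighted majority vote errs on $(\bmx,y)$ only if $W\ge 1/2$ — indeed, if the aggregated prediction is some $y'\neq y$ then $\E_\rho[\1(h(\bmx;\bmtheta)=y')]\ge\E_\rho[\1(h(\bmx;\bmtheta)=y)]$ (it is an $\arg\max$), and since the per-class weights are nonnegative and sum to $1$, this forces $\E_\rho[\1(h(\bmx;\bmtheta)=y)]\le 1/2$, hence $W\ge 1/2$ (tie-breaking does not affect the inequality). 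Step two: the pointwise Markov-type estimate $\1(\text{MV errs})\le\1(W\ge 1/2)=\1(W^2\ge 1/4)\le 4W^2$. Step three: since $\1(h(\bmx;\bmtheta)\neq y)$ is $\{0,1\}$-valued, $\V_\rho(\1(h(\bmx;\bmtheta)\neq y))=W-W^2$, so $4W^2=4\big(\E_\rho[\1(h(\bmx;\bmtheta)\neq y)]-\V_\rho(\1(h(\bmx;\bmtheta)\neq y))\big)$. Combining and taking $\E_\nu$ gives the bound with $\alpha=4$ and $f=\1(h_W\neq y)$. Finally, the closing sentence of the theorem holds because in all three cases $\mathbb{D}(\rho)$ has already been written as $\E_\nu[\V_\rho(f(y,\bmx;\bmtheta))]$ with $f$ equal to $h_R$, to $p(y\mid\bmx,\bmtheta)/(\sqrt{2}\max_{\bmtheta}p(y\mid\bmx,\bmtheta))$, and to $\1(h_W\neq y)$ respectively.

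The routine parts are the squared-loss expansion and the Markov step for the 0-1 loss. I expect the delicate point to be the cross-entropy case: pinning down the refined-Jensen lemma with exactly the constant $1/(2M^2)$, keeping the direction of the Taylor remainder correct, and checking that both $p(y\mid\bmx,\bmtheta)$ and its $\rho$-average stay in $(0,M]$ so the remainder control is legitimate (with the degenerate case $p(y\mid\bmx,\bmtheta)=0$ handled separately, where the bound is vacuous). A secondary subtlety is the multiclass plurality-vote argument, including tie handling, needed to guarantee $W\ge 1/2$ for the 0-1 loss.
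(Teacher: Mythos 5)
Your proposal is correct and follows essentially the same route as the paper: the ambiguity decomposition for the squared loss, a second-order Taylor expansion of $\log$ with the remainder controlled via $\xi\le\max_\bmtheta p(y\mid\bmx,\bmtheta)$ for the cross-entropy loss, and the second-order Markov inequality $\1(W\ge 1/2)\le 4W^2$ combined with $\V_\rho(\1(h_W\neq y))=W-W^2$ for the $\zeroone$-loss. Your multiclass plurality-vote argument showing that an ensemble error forces $W\ge 1/2$ is in fact spelled out more carefully than in the paper, which simply asserts that step.
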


The $\MSE$-version of Theorem \ref{thm:decomposition} is equivalent to the well-known decomposition of the squared error of a regression ensemble \citep{krogh1995validation}. On the other hand, the $\CE$-version is equivalent to the one previously proposed by \citep{masegosa2020learning} based on second-order Jensen inequalities \citep{becker2012variance,liao2019sharpening}. In Appendix \ref{app:4.1}, we also detail a tighter variant of this inequality proposed by \citep{masegosa2020learning}. Lastly, the $\zeroone$-version of Theorem \ref{thm:decomposition} is novel and based on the analysis given by \citep{masegosa2020second}, which, in turn, is based on second-order Markov inequalities.  

Even though the three versions of this upper-bound are derived using completely different approaches, here we show that their expressions are surprisingly similar. Showing that the loss of an ensemble $L(\rho)$ can be upper bounded by (or is equal to) the $\rho$-average loss of the individual models $\E_\rho[L(\bmtheta)]$ minus the diversity $\mathbb{D}(\rho)$ among the individual models of the ensemble.


\subsection{How to Measure the Diversity of an Ensemble?}

In this work, we propose the use of the diversity term $\mathbb{D}(\rho)$ given in Theorem \ref{thm:decomposition} as a diversity measure of an ensemble. We start showing that this diversity measure satisfies some intuitive properties:
\begin{lemma}
\label{lemma:varianceproperties}
The diversity terms $\mathbb{D}(\rho)$ defined in Theorem \ref{thm:decomposition} satisfy the following properties:
\begin{enumerate}[i)]
    \item If all the ensemble members provide the same predictions, or if \(\rho\) outs all its probability mass on a single predictor, then $\mathbb{D}(\rho)$ is null.
    \item $0\leq \mathbb{D}(\rho) \leq \E_\rho[\L(\bmtheta)]$.
    \item $\mathbb{D}(\rho)$ is invariant to reparametrizations.
\end{enumerate}
\end{lemma}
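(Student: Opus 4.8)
All three properties follow quickly from the unified representation $\mathbb{D}(\rho) = \E_\nu[\V_\rho(f(y,\bmx;\bmtheta))]$ established at the end of Theorem~\ref{thm:decomposition}, where $f(y,\bmx;\bmtheta)$ is $h_R(\bmx;\bmtheta)$ for the $\MSE$-loss, $p(y\mid\bmx,\bmtheta)\big/\big(\sqrt{2}\,\max_{\bmtheta}p(y\mid\bmx,\bmtheta)\big)$ for the $\CE$-loss, and $\1(h_W(\bmx;\bmtheta)\neq y)$ for the $\zeroone$-loss. The strategy is to reduce (i) and the lower bound in (ii) to elementary properties of the conditional variance $\V_\rho(\cdot)$, to obtain the upper bound in (ii) directly from Theorem~\ref{thm:decomposition}, and to read off (iii) from the observation that in every case $f$ depends on $\bmtheta$ only through the model output.

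\textbf{Items (i) and the lower bound in (ii).} Since a variance is non-negative, $\V_\rho(f(y,\bmx;\bmtheta))\geq 0$ for each $(\bmx,y)$, and averaging over $\nu$ preserves this, so $\mathbb{D}(\rho)\geq 0$. For (i): if $\rho$ is a Dirac mass on a single $\bmtheta_0\in\bmTheta$, then $\bmtheta\mapsto f(y,\bmx;\bmtheta)$ is $\rho$-almost surely constant and hence its variance vanishes for every $(\bmx,y)$; and if all members produce identical outputs on $supp(\nu)$, then on $supp(\nu)$ the map $\bmtheta\mapsto f(y,\bmx;\bmtheta)$ is constant in each of the three cases (for the $\CE$-loss the numerator then equals the maximum in the denominator, so $f\equiv 1/\sqrt{2}$), so again the inner variance is zero $\nu$-almost surely. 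In both situations $\mathbb{D}(\rho)=\E_\nu[0]=0$.

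\textbf{Upper bound in (ii).} Here I would invoke Theorem~\ref{thm:decomposition}, which gives $L(\rho)\leq\alpha\big(\E_\rho[\L(\bmtheta)]-\mathbb{D}(\rho)\big)$ with $\alpha\in\{1,4\}$. Each ensemble loss is non-negative: the squared loss and the $0/1$ loss trivially, and $\ell_{\CE}(\rho,\bmx,y)=-\log\E_\rho[p(y\mid\bmx,\bmtheta)]\geq 0$ because $p(y\mid\bmx,\bmtheta)\leq 1$ forces $\E_\rho[p(y\mid\bmx,\bmtheta)]\leq 1$. Hence $0\leq L(\rho)\leq\alpha\big(\E_\rho[\L(\bmtheta)]-\mathbb{D}(\rho)\big)$, and dividing by the positive constant $\alpha$ yields $\mathbb{D}(\rho)\leq\E_\rho[\L(\bmtheta)]$.

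\textbf{Item (iii).} In every case $f(y,\bmx;\bmtheta)$ depends on $\bmtheta$ only through the predictor output $h(\bmx;\bmtheta)$ (equivalently through $p(\cdot\mid\bmx,\bmtheta)$ in the $\CE$ case). I would read a reparametrization as an output-preserving bijection $g$ between parameter sets --- the predictor indexed by $g(\bmtheta)$ returns the same function as the one indexed by $\bmtheta$ --- under which the ensemble weights $\rho$ are transported to the push-forward $\rho'=g_{\ast}\rho$. Then for every $(\bmx,y)$ the law of $f(y,\bmx;\bmtheta)$ under $\rho$ coincides with the law of $f(y,\bmx;\cdot)$ under $\rho'$, so the inner variances agree pointwise and therefore $\mathbb{D}(\rho')=\mathbb{D}(\rho)$. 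I expect (iii) to be the only point requiring any care, purely because ``reparametrization'' is not formalized in the statement; once it is read as above --- and once the $\CE$ special case in (i) is checked --- the rest is bookkeeping.
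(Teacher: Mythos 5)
Your proof is correct, and for items (i) and (ii) it follows essentially the same route as the paper: (i) is the observation that $\bmtheta\mapsto f(y,\bmx;\bmtheta)$ becomes $\rho$-a.s.\ constant so the inner variance vanishes, and (ii) combines non-negativity of the variance with Theorem~\ref{thm:decomposition} and the non-negativity of each ensemble loss (the paper compresses this to one sentence; your version usefully spells out why $\ell_{\CE}(\rho,\bmx,y)\geq 0$). The only genuine divergence is in (iii). The paper formalizes a reparametrization as an injective differentiable map $\phi:\bm\Omega\to\bmTheta$ with non-vanishing Jacobian and invokes the change-of-variables theorem for densities, whereas you read it as an output-preserving bijection $g$ under which $\rho$ is transported to the push-forward $g_{\ast}\rho$, so that the law of $f(y,\bmx;\cdot)$ --- and hence each inner variance --- is unchanged. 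These are two formalizations of the same invariance, but yours is arguably cleaner in this setting: it needs no smoothness or Jacobian assumptions (which sit awkwardly with the paper's finite $\bmTheta$ / mixture-of-deltas framework), and it makes explicit the key structural fact that $f$ depends on $\bmtheta$ only through the predictor's output. Your check of the $\CE$ case in (i) (that the numerator then equals the maximum in the denominator) is a detail the paper glosses over and is worth keeping.
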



The above properties show that $\mathbb{D}(\rho)$ can be considered as a measure of the diversity of an ensemble. The first property follows the intuition of diversity as a measure of the difference among ensemble's members errors, while the second property is something which has been empirically found in the literature: the diversity of an ensemble usually decreases when the predictive error of individual members is reduced \citep{fort2019deep}. The last property is a desirable result for any diversity measure. 

Another common knowledge about diversity is that it decreases when the predictors are highly correlated \citep{BK16,brown2009information, zhou2012ensemble,yu2011diversity}. The following result shows how this diversity measure nicely captures this relationship of diversity and correlation among predictors:

\begin{theorem}
\label{theorem:diversity_as_variance}
The diversity terms $\mathbb{D}(\rho)$ defined in Theorem \ref{thm:decomposition} can be written as
    \begin{align*}
        \mathbb{D} (\rho) &= \V_{\nu\times\rho}\Big(f(y,\bmx;\bmtheta)\Big)\\
            &\quad- \E_{\rho\times\rho}\Big[Cov_{\nu}(f(y,\bmx;\bmtheta),f(y,\bmx;\bmtheta'))\Big], 
    \end{align*}
    \noindent where $\rho\times\rho$ denotes the joint distribution over $\bmTheta \times\bmTheta$, $\rho\times\nu$ denotes the joint distribution over $\bmTheta \times({\cal\mathbf{X}},{\cal Y})$, and $Cov_{\nu}(\cdot,\cdot)$ is the co-variance between two models with respect to the data generating distribution $\nu$.
\end{theorem}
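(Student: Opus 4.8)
The plan is to read the claimed identity as the law of total variance (the between--within / ANOVA decomposition) applied to the random variable $F := f(y,\bmx;\bmtheta)$ under the product law $\nu\times\rho$, combined with an ``independent copies'' rewriting of the between-group variance as an expected covariance.

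First I would fix notation: let $(\bmx,y)\sim\nu$ and $\bmtheta\sim\rho$ be drawn independently, put $F := f(y,\bmx;\bmtheta)$, and let $g(\bmx,y) := \E_\rho[f(y,\bmx;\bmtheta)]$ be the $\rho$-mean with $(\bmx,y)$ held fixed. Conditioning on $(\bmx,y)$, the law of total variance gives
\[
\V_{\nu\times\rho}(F) \;=\; \E_\nu\big[\V_\rho(f(y,\bmx;\bmtheta))\big] \;+\; \V_\nu\big(g(\bmx,y)\big).
\]
The first summand on the right is exactly $\mathbb{D}(\rho)$ by the definition in Theorem \ref{thm:decomposition}, so it remains only to show that the between-group term equals the claimed expected covariance, namely $\V_\nu(g(\bmx,y)) = \E_{\rho\times\rho}\big[Cov_\nu(f(y,\bmx;\bmtheta),f(y,\bmx;\bmtheta'))\big]$; rearranging the display then yields the statement.

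Second, to obtain that identity I would expand $\V_\nu(g) = \E_\nu[g^2] - (\E_\nu[g])^2$ and introduce a second copy $\bmtheta'\sim\rho$ independent of $\bmtheta$ and of $(\bmx,y)$, so that $(\bmtheta,\bmtheta')\sim\rho\times\rho$. Then $g(\bmx,y)^2 = \E_{\rho\times\rho}[f(y,\bmx;\bmtheta)\,f(y,\bmx;\bmtheta')]$ and $\E_\nu[g(\bmx,y)] = \E_\rho[\E_\nu[f(y,\bmx;\bmtheta)]]$. Interchanging the $\nu$- and $\rho\times\rho$-expectations turns $\E_\nu[g^2]$ into $\E_{\rho\times\rho}[\E_\nu[f(y,\bmx;\bmtheta)f(y,\bmx;\bmtheta')]]$ and $(\E_\nu[g])^2$ into $\E_{\rho\times\rho}[\E_\nu[f(y,\bmx;\bmtheta)]\,\E_\nu[f(y,\bmx;\bmtheta')]]$; subtracting the two, the integrand is precisely $Cov_\nu(f(y,\bmx;\bmtheta),f(y,\bmx;\bmtheta'))$, which closes the argument.

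I expect the only delicate point --- the main obstacle --- to be justifying the interchange of expectations (Fubini--Tonelli) and, relatedly, making the ``independent copies'' step precise, i.e.\ being explicit that $\rho\times\rho$ is the joint law of $(\bmtheta,\bmtheta')$ with $\bmtheta,\bmtheta'$ i.i.d.\ $\rho$ and independent of $(\bmx,y)$, and that all relevant integrals are finite. Here I would invoke the standing hypotheses of Theorem \ref{thm:decomposition}: for the $\zeroone$- and $\MSE$-versions $f$ is bounded on $supp(\nu)$ (an indicator, respectively a residual under the assumptions of Section \ref{sec:preliminaries}), while for the $\CE$-version the well-definedness condition $0<\max_{\bmtheta\in\bmTheta}p(y\mid\bmx,\bmtheta)\leq 1$ makes $f(y,\bmx;\bmtheta)=p(y\mid\bmx,\bmtheta)/(\sqrt2\,\max_\bmtheta p(y\mid\bmx,\bmtheta))$ bounded on $supp(\nu)$ as well; boundedness together with finiteness of $\bmTheta$ makes every integral finite, so the swaps are legitimate. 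The remainder is the routine algebra of variances and covariances sketched above.
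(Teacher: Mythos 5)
Your proof is correct and is essentially the paper's argument: the paper likewise introduces an independent copy $\bmtheta'\sim\rho$, rewrites $\E_\nu\big[\E_\rho[f^2]\big]$ as $\V_{\nu\times\rho}(f)+\E_{\nu\times\rho}[f]^2$, and identifies the leftover cross-terms with $\E_{\rho\times\rho}\big[Cov_\nu(f(y,\bmx;\bmtheta),f(y,\bmx;\bmtheta'))\big]$ --- which is exactly the content of your law-of-total-variance step plus the between-group-variance-as-expected-covariance identity. Your packaging via the ANOVA decomposition is a slightly cleaner presentation of the same algebra, and your explicit justification of the Fubini swaps (boundedness of $f$ in all three cases and finiteness of $\bmTheta$) is more careful than the paper's proof, which performs those interchanges silently.
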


A proof of this novel result is provided in Appendix \ref{app:4.2}. This result states that our diversity measure increases as we reduce the correlation among ensembles. Eventually, we will have much higher diversity if ensembles are anti-correlated (i.e. negative covariance). But that above result also introduces a novel insight: ensemble diversity is no only about the correlation among individual models. The above decomposition of the diversity also shows, through the $\V_{\nu\times\rho}\Big(f(y,\bmx;\bmtheta)\Big)$ term, that ensembles with high diversity should provide different predictions across the different individual models and across the different data samples. Although this is out the scope of this paper, this last term could potentially be used to study why randomization approaches to build ensembles (e.g. Random Forests \cite{breiman2001random}) give rise to high diverse ensembles, because they directly try to maximize this term, which is positively related to diversity. 




    
    
    
    
    
    

\subsection{How is Diversity Related to the Performance of an Ensemble?}
\label{sec:diversitygeneralization}

The role that diversity plays in the performance of an ensemble is described by Theorem \ref{thm:decomposition}. According to this result, the generalization error of an ensemble $L(\rho)$ should be reduced if we increase $\mathbb{D}(\rho)$, which is a measure of the diversity of the ensemble as shown in the previous section. However, Theorem \ref{thm:decomposition} provides additional novel insights. 

The next result formalizes a empirically observed phenomenon \citep{dietterich2000ensemble,lu2010ensemble} that higher ensemble diversity induces a higher gap between the average loss of the individual models (i.e., $\E_\rho[\L(\bmtheta)]$)  and the expected loss of the ensemble (i.e. $\L(\rho)$). In other words, the higher the diversity, the higher is the advantage of combining these models. 
\begin{corollary}
\label{collorary:gap}
Under the settings given in Section \ref{sec:preliminaries}, we have that
\[\mathbb{D}(\rho) \leq \E_\rho[\L(\bmtheta)] - \frac{1}{\alpha}\L(\rho),\]
\noindent where $\alpha$ is equal to $1$ if we consider the $\MSE$-loss or the $\CE$-loss, and $4$ if we consider the $\zeroone$-loss. For the $\MSE$-loss, this inequality becomes an equality. 
\end{corollary}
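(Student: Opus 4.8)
The plan is to derive this corollary directly from Theorem~\ref{thm:decomposition} by elementary rearrangement, since both statements involve exactly the same three quantities $\L(\rho)$, $\E_\rho[\L(\bmtheta)]$ and $\mathbb{D}(\rho)$ together with the same constant $\alpha$. First I would invoke Theorem~\ref{thm:decomposition}, which gives $\L(\rho)\le\alpha\bigl(\E_\rho[\L(\bmtheta)]-\mathbb{D}(\rho)\bigr)$ for each of the three loss functions, with $\alpha=1$ for the $\MSE$- and $\CE$-losses and $\alpha=4$ for the $\zeroone$-loss. Since $\alpha>0$ in every case, I would divide both sides by $\alpha$ to get $\tfrac{1}{\alpha}\L(\rho)\le\E_\rho[\L(\bmtheta)]-\mathbb{D}(\rho)$, and then move $\mathbb{D}(\rho)$ to the left-hand side and $\tfrac{1}{\alpha}\L(\rho)$ to the right-hand side, which is precisely the claimed inequality.

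For the equality assertion I would simply note that Theorem~\ref{thm:decomposition} already states that the bound holds with equality for the $\MSE$-loss; since the manipulations above are reversible equivalences whenever the starting relation is an equality, the resulting relation $\mathbb{D}_{\MSE}(\rho)=\E_\rho[\L(\bmtheta)]-\L(\rho)$ is again an equality.

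There is essentially no obstacle here: the corollary is a one-line algebraic consequence of Theorem~\ref{thm:decomposition}, stated separately only to make explicit the ``diversity gap'' reading --- namely that $\mathbb{D}(\rho)$ lower-bounds the advantage $\E_\rho[\L(\bmtheta)]-\tfrac{1}{\alpha}\L(\rho)$ of aggregating the models relative to the average individual loss. The only points worth minor care are keeping the constant $\alpha$ on the correct side of the division and observing that for the $\zeroone$-loss the factor $\alpha=4$ weakens this interpretation: one controls $\E_\rho[\L(\bmtheta)]-\tfrac{1}{4}\L(\rho)$ rather than $\E_\rho[\L(\bmtheta)]-\L(\rho)$.
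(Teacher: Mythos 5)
Your proposal is correct and matches the paper's intent exactly: the corollary is an immediate rearrangement of Theorem~\ref{thm:decomposition} (divide by $\alpha>0$ and move $\mathbb{D}(\rho)$ across), with the $\MSE$ equality inherited from the equality case of that theorem. The paper offers no separate proof precisely because this one-line algebraic consequence is the intended argument.
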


Another open question in the ensemble's literature is under which situations an ensemble of models outperforms a single model. Next result establishes that this occurs when the ensemble's diversity is large enough.

\begin{corollary}
\label{collorary:singlemodel}
Under the settings given in Section \ref{sec:preliminaries}, we have that an ensemble of models weighted according to a distribution $\rho$ performs better than a single model $\bmtheta^\star$, i.e. $\L(\rho)< \L(\bmtheta^\star)$, if 
\[\E_\rho[\L(\bmtheta)] - \frac{1}{\alpha} L(\bmtheta^\star) < \mathbb{D}(\rho)\]
\noindent where $\alpha$ is equal to $1$ if we consider the $\MSE$-loss or the $\CE$-loss, and $4$ if we consider the $\zeroone$-loss. For the $\MSE$-loss, the inverse implication also holds.
\end{corollary}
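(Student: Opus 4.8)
The plan is to obtain this as an immediate consequence of the upper bound in Theorem~\ref{thm:decomposition}. First I would start from the hypothesis $\E_\rho[\L(\bmtheta)] - \frac{1}{\alpha} L(\bmtheta^\star) < \mathbb{D}(\rho)$ and simply rearrange it: since $\alpha \in \{1,4\}$ is positive, multiplying through by $\alpha$ preserves the direction of the inequality and gives $\alpha\big(\E_\rho[\L(\bmtheta)] - \mathbb{D}(\rho)\big) < L(\bmtheta^\star)$.

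Next I would invoke Theorem~\ref{thm:decomposition}, which guarantees $L(\rho) \leq \alpha\big(\E_\rho[\L(\bmtheta)] - \mathbb{D}(\rho)\big)$ for each of the three loss functions with the stated value of $\alpha$. Chaining the two displays yields $L(\rho) \leq \alpha\big(\E_\rho[\L(\bmtheta)] - \mathbb{D}(\rho)\big) < L(\bmtheta^\star)$, hence $L(\rho) < L(\bmtheta^\star)$, which is exactly the claim. The only bookkeeping point is that the strict inequality from the hypothesis composes with the non-strict bound of Theorem~\ref{thm:decomposition} to still produce a strict inequality in the conclusion.

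For the converse under the $\MSE$-loss, I would use the fact — also recorded in Theorem~\ref{thm:decomposition} — that there the bound is an equality with $\alpha = 1$, i.e.\ $L(\rho) = \E_\rho[\L(\bmtheta)] - \mathbb{D}(\rho)$. Then $L(\rho) < L(\bmtheta^\star)$ is equivalent to $\E_\rho[\L(\bmtheta)] - \mathbb{D}(\rho) < L(\bmtheta^\star)$, which rearranges to $\E_\rho[\L(\bmtheta)] - L(\bmtheta^\star) < \mathbb{D}(\rho)$, precisely the hypothesis with $\alpha = 1$; this gives the stated equivalence.

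Since all the substantive content is already carried by Theorem~\ref{thm:decomposition}, there is no genuine obstacle here: the proof is a one-line algebraic manipulation of that bound, and the only thing to be careful about is the handling of $\alpha$ and the distinction between strict and non-strict inequalities.
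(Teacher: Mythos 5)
Your proof is correct and follows exactly the route the paper intends: the corollary is an immediate rearrangement of the bound $L(\rho) \leq \alpha\left(\E_\rho[\L(\bmtheta)] - \mathbb{D}(\rho)\right)$ from Theorem~\ref{thm:decomposition}, with the $\MSE$-equivalence following from the equality case. The handling of $\alpha$ and of the strict/non-strict inequalities is right.
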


However, $\mathbb{D}(\rho)$ is defined in terms of the unknown data-generating distribution $\nu$. As a result, $\mathbb{D}(\rho)$ can not be computed. To address this issue, we propose  the use of the empirical version of $\mathbb{D}(\rho)$, denoted by $\hat{\mathbb{D}}(\rho,D)$, which directly depends on the empirical distribution defined by the data sample $D$. $\hat{\mathbb{D}}(\rho,D)$ satisfies the same properties as $\mathbb{D}(\rho)$, as shown in Appendix \ref{app:4.3}, and measures the diversity of an ensemble in a given data sample $D$. The relevant question now is how this empirical diversity measure relates to the generalization performance of an ensemble. To answer this question we will rely on PAC-Bayesian bounds. 


PAC-Bayesian analysis \citep{McA98,See02,LST02} is based on \textit{Probably Approximate Correct} upper bounds over the generalization error of a model. These PAC-Bayes bounds depend on the empirical error of the model. More precisely, in this work, we consider a new family of PAC-Bayesian bounds which upper bounds the generalization error of an ensemble (i.e. $\L(\rho)$) based on the $\rho$-weighted empirical errors of the individual models ($\E_\rho[\hat\L(\bmtheta,D)]$) and the empirical diversity of their predictions ($\hat{\mathbb{D}}(\rho,D)$). PAC-Bayesian  bounds hold with high probability over random realizations of the training data sample. 




\begin{theorem}[\textbf{PAC-Bayes bounds}]
\label{thm:2ndPACBayes}For any prior distribution $\pi$ over $\bmTheta$ independent of $D$ and for any $\xi\in (0,1)$ and any $\lambda>0$, with probability at least $1-\xi$ over draws of training data $D\sim \nu^n$, for all distribution $\rho$ over $\bmTheta$, simultaneously,
\[L(\rho) \leq \alpha\Big(\E_\rho[\hat L(\bmtheta,D)] - \hat{\mathbb{D}}(\rho,D) + \frac{2\kl{\rho}{\pi}+   \epsilon}{\lambda\, n}\Big)\label{eq:pac:general},\]
\noindent where $\alpha$ is equal to $1$ if we consider the $\MSE$-loss or the $\CE$-loss, and $4$ is we consider the $\zeroone$-loss.
$KL$ refers to the Kullback-Leibler divergence between $\rho$ and the prior $\pi$. And $\epsilon>0$ is a function of \(\nu,\pi,\lambda,n\) and \(\xi\), which is independent of $\rho$ but also depends on the specific loss (In Appendix \ref{app:4.3}, we detail the functional forms of these $\epsilon$ terms).
\end{theorem}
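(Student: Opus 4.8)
The plan is to derive the bound from Theorem~\ref{thm:decomposition} together with a single application of a standard PAC-Bayesian concentration inequality, applied not to the individual loss but to a symmetrized ``pair loss''. The starting observation is that, in all three settings, Theorem~\ref{thm:decomposition} and the definition of $\hat{\mathbb{D}}$ share a common form: writing
\[
  g(\bmx,y;\rho)\;:=\;\E_\rho[\ell(\bmtheta,\bmx,y)]-\V_\rho\bigl(f(y,\bmx;\bmtheta)\bigr),
\]
with $f$ the loss-specific function of Theorem~\ref{thm:decomposition}, Theorem~\ref{thm:decomposition} reads $L(\rho)\le\alpha\,\E_\nu[g(\bmx,y;\rho)]$, and since $\hat L(\bmtheta,D)$ and $\hat{\mathbb{D}}(\rho,D)$ are empirical averages over $D$, we also have $\E_\rho[\hat L(\bmtheta,D)]-\hat{\mathbb{D}}(\rho,D)=\tfrac1n\sum_{i=1}^n g(\bmx_i,y_i;\rho)$. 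So it suffices to control $\E_\nu[g(\bmx,y;\rho)]$ by $\tfrac1n\sum_i g(\bmx_i,y_i;\rho)$, uniformly over $\rho$.

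The key step is to \emph{linearise} $g$ over a product measure. Using $\V_\rho(f)=\tfrac12\,\E_{(\bmtheta,\bmtheta')\sim\rho\times\rho}\bigl[(f(y,\bmx;\bmtheta)-f(y,\bmx;\bmtheta'))^2\bigr]$, define the pair loss
\[
  \phi\bigl((\bmtheta,\bmtheta');\bmx,y\bigr)\;:=\;\tfrac12\bigl(\ell(\bmtheta,\bmx,y)+\ell(\bmtheta',\bmx,y)\bigr)-\tfrac12\bigl(f(y,\bmx;\bmtheta)-f(y,\bmx;\bmtheta')\bigr)^2,
\]
so that $g(\bmx,y;\rho)=\E_{\rho\times\rho}[\phi((\bmtheta,\bmtheta');\bmx,y)]$, and hence $\E_\nu[g(\bmx,y;\rho)]$ and $\tfrac1n\sum_i g(\bmx_i,y_i;\rho)$ are, respectively, the population and empirical $(\rho\times\rho)$-averaged risks of $\phi$. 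I would then invoke a standard ``$\lambda$-form'' PAC-Bayes theorem (Catoni / Alquier--Germain type) for the hypothesis space $\bmTheta\times\bmTheta$, the loss $\phi$, the $D$-independent prior $\pi\times\pi$ and an arbitrary posterior $\rho\times\rho$: with probability at least $1-\xi$, simultaneously over all $\rho$,
\[
  \E_{\rho\times\rho}\bigl[\E_\nu[\phi]\bigr]\;\le\;\E_{\rho\times\rho}\Bigl[\tfrac1n{\textstyle\sum_{i=1}^n}\phi\Bigr]+\frac{\kl{\rho\times\rho}{\pi\times\pi}+\epsilon}{\lambda n},
\]
where $\epsilon$ collects $\log(1/\xi)$ and the log--moment-generating-function term associated with $\phi$ under the prior. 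Additivity of the KL divergence on product measures gives $\kl{\rho\times\rho}{\pi\times\pi}=2\,\kl{\rho}{\pi}$ --- this is precisely the origin of the factor $2$ in the statement --- and substituting the identities of the first paragraph and multiplying through by $\alpha$ yields the claim; uniformity over $\rho$ (and hence over the empirical diversity term) is inherited from the Donsker--Varadhan variational step inside the PAC-Bayes theorem.

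I expect the main obstacle to be controlling $\epsilon$, i.e.\ the log-MGF term, which is where the three losses genuinely differ. For the $\zeroone$-loss, $\phi$ takes values in $[-\tfrac12,1]$, so a Hoeffding-type argument makes the MGF term the classical $O(\lambda^2/n)$ quantity and $\epsilon$ fully explicit; the $\MSE$-case is analogous once the regressors are bounded (the reduction covering $\bmTheta\subseteq\Re^M$ and unbounded outputs being the one deferred to the appendix). The $\CE$-case is the delicate one: although the variance part of $\phi$ stays in $[0,\tfrac12]$ because $f_{\CE}\in[0,1/\sqrt2]$, the term $\ell_{\CE}=-\log p(y\mid\bmx,\bmtheta)$ is unbounded above, so the MGF is finite only under a moment (sub-exponential) assumption on $\ell_{\CE}$ under $\pi$, and $\epsilon$ then carries the corresponding prior-dependent constant --- which is exactly why the statement says $\epsilon$ ``depends on the specific loss''. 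The rest is routine bookkeeping: checking that $\pi\times\pi$ is genuinely data-independent, that the invoked PAC-Bayes bound is the posterior-uniform version, and that the directions of the second-order Jensen/Markov inequalities behind Theorem~\ref{thm:decomposition} are consistent with the direction used here; the explicit forms of the $\epsilon$'s would be recorded in Appendix~\ref{app:4.3}.
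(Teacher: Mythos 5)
Your proposal follows essentially the same route as the paper: the paper also lifts the problem to the product space $\bmTheta\times\bmTheta$ via a ``tandem'' loss whose $\rho\times\rho$-expectation equals $\E_\rho[L(\bmtheta)]-\mathbb{D}(\rho)$ (your symmetrized pair loss $\phi$ is an equivalent reformulation), applies an Alquier--Germain-type $\lambda$-form PAC-Bayes bound with prior $\pi\times\pi$, and obtains the factor $2$ from $\kl{\rho\times\rho}{\pi\times\pi}=2\kl{\rho}{\pi}$, with $\epsilon$ the log-MGF term plus $\log(1/\xi)$. Your added discussion of when the MGF term is actually controllable (boundedness for $\zeroone$/$\MSE$, sub-exponential assumptions for $\CE$) is a point the paper leaves implicit, but it does not change the argument.
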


The $\MSE$ and $\zeroone$ versions of the PAC-Bayesian bound of Theorem \ref{thm:2ndPACBayes} are novel. While the $\CE$-version of Theorem \ref{thm:2ndPACBayes} was previously proposed in \citep{masegosa2020learning}.

The preceding result provides a clear theoretical explanation for the widely observed phenomena of accurate and diversified models leading to ensemble methods with low generalization error \citep{dietterich2000ensemble}. High accurate models imply lower values of $\E_\rho[\hat\L(\bmtheta,D)]$, while highly diverse models implies higher values of $\hat{\mathbb{D}}(\rho,D)$. Consequently, the combined effect of both, as described by the second-order PAC-Bayesian bounds of Theorem \ref{thm:2ndPACBayes}, induces a lower upper bound over the generalization error of the ensemble.

\subsection{How to Exploit Diversity to Learn Ensembles?}
\label{sec:diversity:learning}

The PAC-Bayesian bounds of Theorem \ref{thm:2ndPACBayes} provide a sound framework for ensemble learning. As they simultaneously hold for all distributions $\rho$, we can choose the distribution $\rho$ which minimizes these high-probability bounds over the generalization error of the ensemble \citep{McA98,See02,LST02}. In Appendix~\ref{app:4.4}, we provide a complete description of how this approach precisely applies to ensembles of neural networks (i.e how each distribution $\rho$ defines an ensemble of neural networks), following the ideas introduced by \citep{masegosa2020learning}. 

As discussed in Section \ref{sec:rw}, a growing number of ensemble learning approaches  employs learning objectives which, besides promoting individual models with low error (i.e. small  $\E_{\rho}[\hat{L}(\bmtheta,D)]$), they also include a term which explicitly promotes diversity. In consequence, this work provides a theoretical justification for these approaches: promoting diversity can improve the generalization performance of the ensemble. We elaborate on this analysis in Appendix \ref{app:4.4}.

This is not the case for ensembles of deep neural networks. In this context, individual models are large and operate in the interpolation regime \citep{zhang2016understanding} and, in consequence, the $\E_\rho[\hat{L}(\bmtheta,D)]$ (present in Theorem~\ref{thm:2ndPACBayes}) will be very close to zero, if not null. Moreover, using Lemma \ref{lemma:varianceproperties}, the empirical diversity term $\hat{\mathbb{D}}(\rho,D)$ of this bound will be very close to zero too. As a result, ensemble learning algorithms based on the direct minimization of Theorem~\ref{thm:2ndPACBayes} become useless because the diversity term ($\hat{\mathbb{D}}(\rho,D)$) will hardly influence the learning process.  In Appendix~\ref{app:4.4}, we provide a mathematical formulation of this issue and, in the next section, we also provide empirical evidence.

Given this, we seem to arrive to a contradiction because, according to this analysis, having high diversity is essential for the generalization performance of an ensemble. And, at the same time, ensembles of deep neural networks have almost null empirical diversity and strong generalization performance. However, the key point is that having low diversity over the training data (i.e. $\hat{\mathbb{D}}(\rho,D)\approx 0$) does not imply we will not have diversity over the test data (i.e. $\mathbb{D}(\rho)>0$), which is what we really want to have according to Theorem~\ref{thm:decomposition}. 

\begin{figure}
\centering
\begin{tabular}{cc}
  \hspace{-5pt}\includegraphics[width=0.5\linewidth]{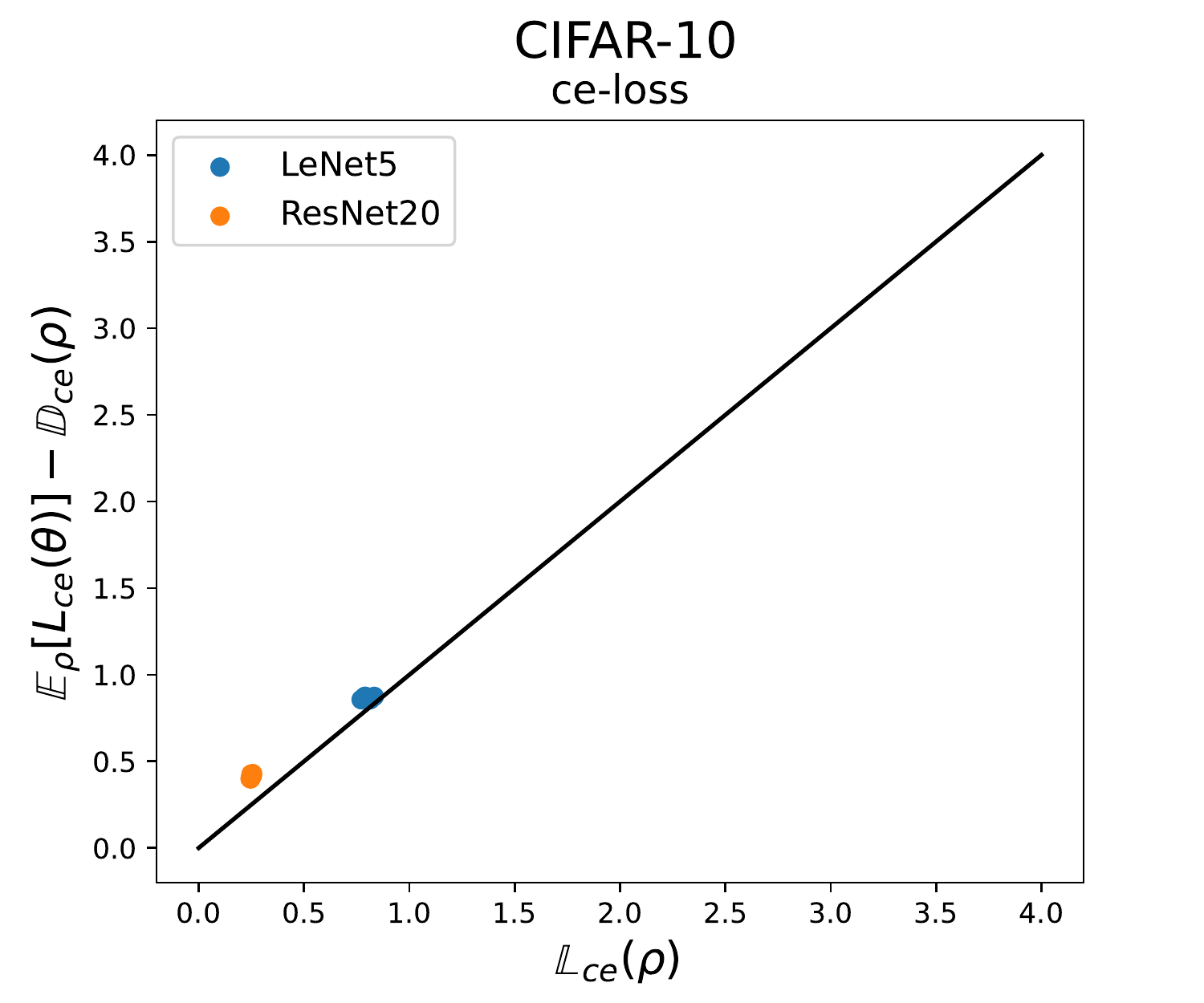}
  & 
  \hspace{-5pt}\includegraphics[width=0.5\linewidth]{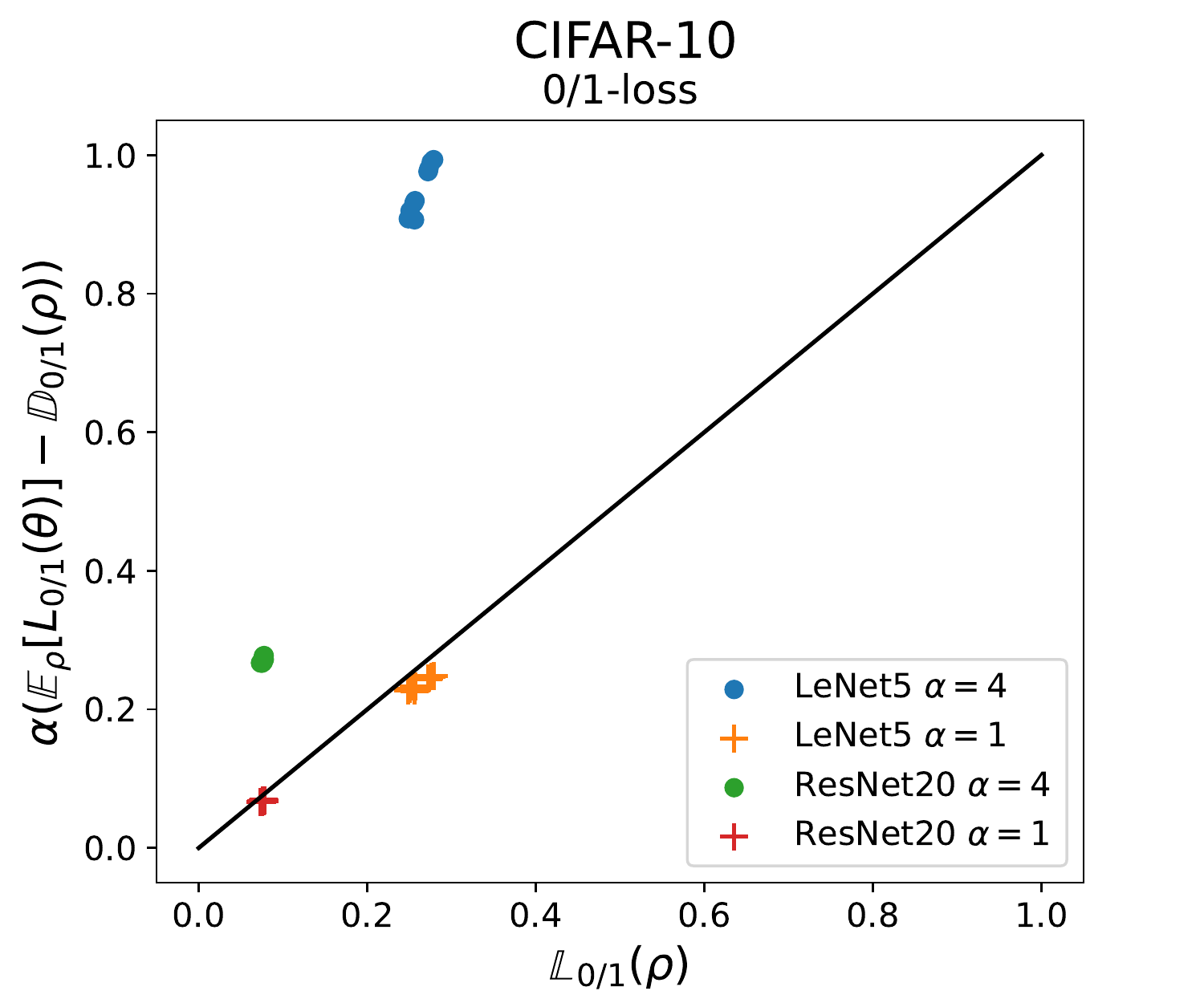}
  
  \\
  \hspace{-5pt}\includegraphics[width=0.5\linewidth]{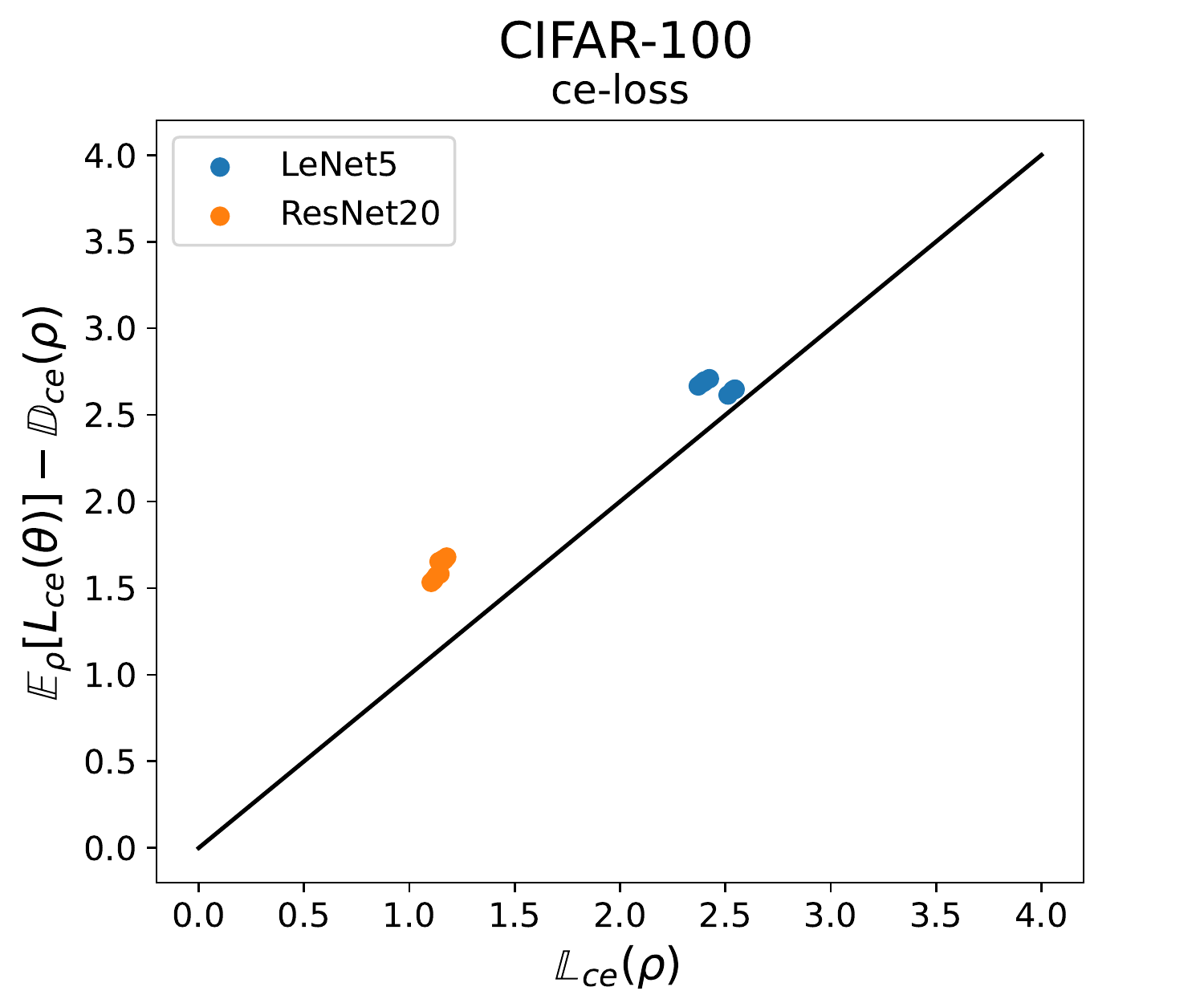}
  & 
  \hspace{-5pt}\includegraphics[width=0.5\linewidth]{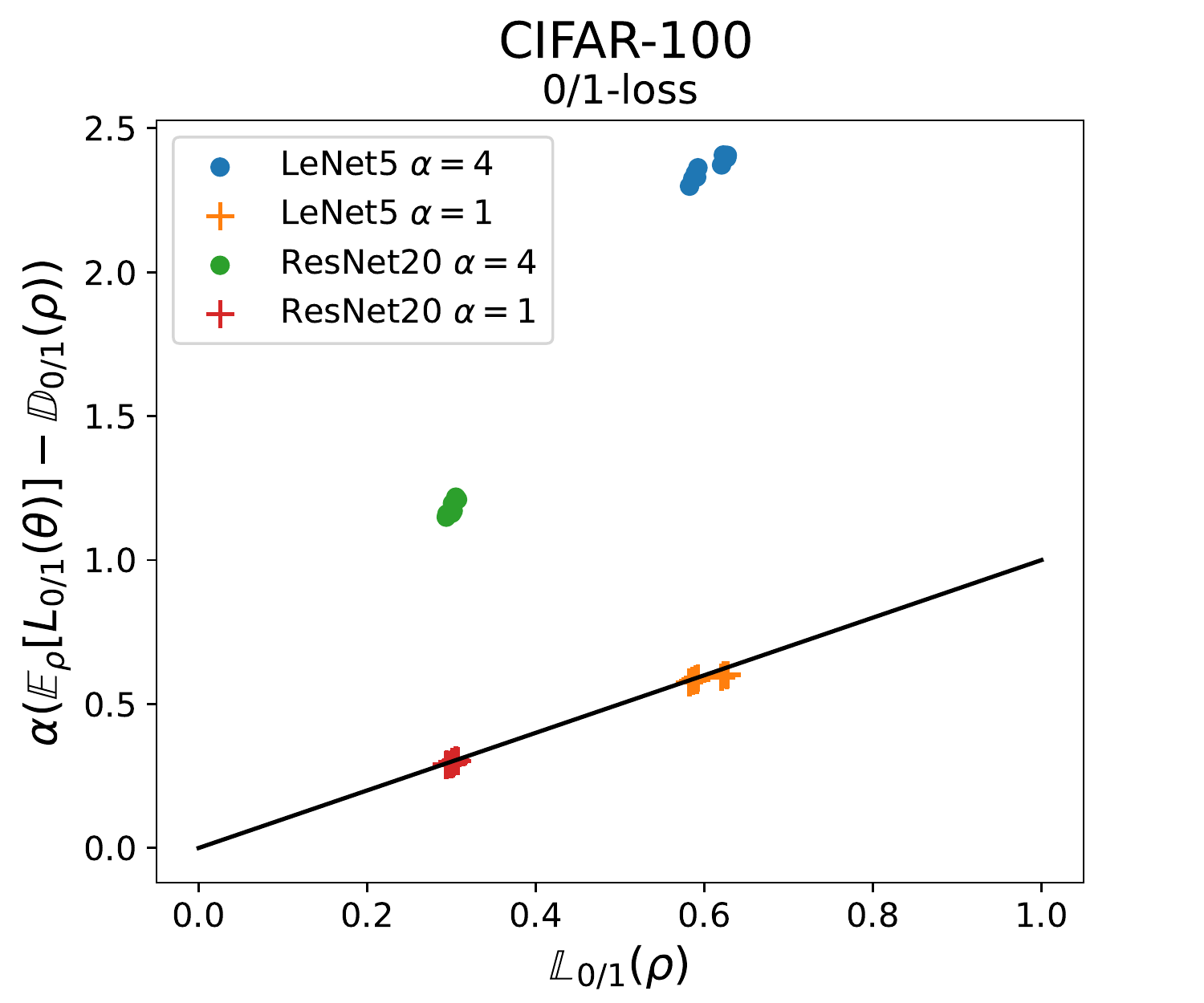}
  
\end{tabular}
\caption{\textbf{Evaluation of Theorem \ref{thm:decomposition}}. Each point is an ensemble model learn with \textit{Ensemble} or \textit{PAC2B-Ensemble}. The distance to the black line measures the tightness of the upper bounds of Theorem \ref{thm:decomposition}.} \label{fig:thm:decomposition}
\end{figure}

Most of the current state-of-the-art deep ensemble learning algorithms follow this general scheme: they independently learn each neural network of the ensemble by minimizing the provided loss function (usually the $\CE$-loss or the $\MSE$-loss) using some randomization method (e.g. random initialization of the parameters  \citep{lakshminarayanan2017simple,wen2019batchensemble} , or different hyper-parameters for the gradient descent algorithm \citep{wenzel2020hyperparameter}, etc.) in order to force the gradient descent algorithm to converge to different local minimum of the loss function. Current state-of-the-art deep ensemble learning algorithms exploit the highly multi-modal landscape of the loss function \citep{fort2019deep} to achieve that. 

In consequence, when the ensemble is composed by $K$ models $\{\bmtheta_1,\ldots,\bmtheta_K\}$ defining different predictive functions, then the expected diversity $\mathbb{D}(\rho)$ will be positive, as stated in the following result:

\begin{lemma}
If  there exists $\bmtheta_i\neq \bmtheta_j$ and an input sample $\bmx\in supp(\nu)$, where $supp(\nu)$ denotes the support of the data generating function, such that $h(\bmx;\bmtheta_i)\neq h(\bmx;\bmtheta_j)$, we then have that $\mathbb{D}(\rho)>0$. 
\end{lemma}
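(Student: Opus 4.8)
The plan is to argue by contradiction, leaning on the single identity $\mathbb{D}(\rho)=\E_\nu\big[\V_\rho(f(y,\bmx;\bmtheta))\big]$ from Theorem~\ref{thm:decomposition} and the bound $\mathbb{D}(\rho)\ge 0$ from Lemma~\ref{lemma:varianceproperties}. Since the integrand is a variance it is nonnegative, so if $\mathbb{D}(\rho)=0$ then $\V_\rho(f(y,\bmx;\bmtheta))=0$ for $\nu$-almost every $(\bmx,y)$. A vanishing variance means that $\bmtheta\mapsto f(y,\bmx;\bmtheta)$ is $\rho$-almost surely constant; since (as in the surrounding discussion) $\rho$ puts positive mass on every member of the $K$-model ensemble, in particular on $\bmtheta_i$ and $\bmtheta_j$, this forces $f(y,\bmx;\bmtheta_i)=f(y,\bmx;\bmtheta_j)$ for $\nu$-almost every $(\bmx,y)$.

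Next I would translate this equality back into equality of the raw predictions using the explicit forms of $f$ listed in Theorem~\ref{thm:decomposition}: for the $\MSE$-loss $f(y,\bmx;\bmtheta)=h_R(\bmx;\bmtheta)$, so the equality is exactly $h_R(\bmx;\bmtheta_i)=h_R(\bmx;\bmtheta_j)$; for the $\CE$-loss $f(y,\bmx;\bmtheta)=p(y\mid\bmx,\bmtheta)/\big(\sqrt{2}\,\max_{\bmtheta}p(y\mid\bmx,\bmtheta)\big)$, whose denominator does not depend on $\bmtheta$, so the equality is $p(y\mid\bmx,\bmtheta_i)=p(y\mid\bmx,\bmtheta_j)$; and for the $\zeroone$-loss $f(y,\bmx;\bmtheta)=\1(h_W(\bmx;\bmtheta)\neq y)$, where, whenever $h_W(\bmx;\bmtheta_i)\neq h_W(\bmx;\bmtheta_j)$, choosing $y=h_W(\bmx;\bmtheta_i)$ makes the two indicators differ. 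Hence in every case $\mathbb{D}(\rho)=0$ would imply that $h(\cdot;\bmtheta_i)$ and $h(\cdot;\bmtheta_j)$ agree on the relevant pairs for $\nu$-almost every $(\bmx,y)$.

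Finally I would contradict the hypothesis. It only supplies one point $\bmx_0\in supp(\nu)$ with $h(\bmx_0;\bmtheta_i)\neq h(\bmx_0;\bmtheta_j)$, whereas the previous step gives agreement only $\nu$-almost everywhere. I would close the gap by invoking that the predictors $\bmx\mapsto h(\bmx;\bmtheta)$ are continuous (as they are for neural networks): the set on which $h(\cdot;\bmtheta_i)$ and $h(\cdot;\bmtheta_j)$ coincide — for the $\CE$ case, the set where the conditional distributions coincide; for the $\zeroone$ case, handled through the chosen label — is then closed, and having full $\nu$-measure it must contain $supp(\nu)$, hence $\bmx_0$, a contradiction. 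Therefore $\mathbb{D}(\rho)>0$.

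The step I expect to be the real obstacle is precisely this last bridge from ``one disagreement point in $supp(\nu)$'' to ``a set of positive $\nu$-measure'', which is what the integral $\E_\nu[\cdot]$ genuinely requires: an isolated point carries no mass unless $\nu$ has an atom there. Making it rigorous needs a regularity hypothesis — continuity of $\bmx\mapsto h(\bmx;\bmtheta)$ turns pointwise disagreement on the support into disagreement on an open, hence positive-measure, neighbourhood — and for the $\CE$- and $\zeroone$-losses one additionally needs the disagreeing label to lie in the conditional support of $\nu$ at $\bmx_0$. The cleanest formulation simply reads the hypothesis as ``$h(\cdot;\bmtheta_i)$ and $h(\cdot;\bmtheta_j)$ differ on a set of positive $\nu$-measure'', under which the first two paragraphs already finish the argument with no topological input.
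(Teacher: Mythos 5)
The paper never actually proves this lemma: it is stated in Section 4.4 without a pointer to the appendix, and Appendix \ref{app:4.4} contains no argument for it, so there is nothing to compare your proof against line by line. Your contrapositive route --- $\mathbb{D}(\rho)=0$ forces $\V_\rho(f(y,\bmx;\bmtheta))=0$ for $\nu$-almost every $(\bmx,y)$, hence (since $\rho$ charges every ensemble member) $f(y,\bmx;\bmtheta_i)=f(y,\bmx;\bmtheta_j)$ $\nu$-a.e., which you then unwind into agreement of the raw predictions for each of the three choices of $f$ --- is the natural skeleton and is sound as far as it goes. Your reading of the $\CE$ case (the $\max_\bmtheta$ normalisation is constant in the variance variable, so vanishing variance is equivalent to equality of the conditional probabilities) and of the $\zeroone$ case (the indicators separate exactly when $y$ is one of the two disagreeing predicted labels) is also correct.

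The gap you flag in your last paragraph is real, and it is a gap in the lemma as stated rather than in your argument. The hypothesis supplies a single $\bmx_0\in supp(\nu)$ where the predictors disagree, but $\E_\nu[\cdot]$ only sees sets of positive measure: if $\nu$ is atomless and the predictors are allowed to be arbitrary measurable functions, one can have $h(\cdot;\bmtheta_i)=h(\cdot;\bmtheta_j)$ $\nu$-a.e.\ while they differ at one point of the support, giving $\mathbb{D}(\rho)=0$ and falsifying the claim. So some regularity must be invoked; continuity of $\bmx\mapsto h(\bmx;\bmtheta)$ (true for the neural networks the paper has in mind) does turn pointwise disagreement on the support into disagreement on an open, hence positive-measure, neighbourhood for the $\MSE$ and $\CE$ cases. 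Two further implicit hypotheses you correctly identify must also be made explicit: $\rho$ must place positive mass on $\bmtheta_i$ and $\bmtheta_j$ (otherwise the variance is blind to them), and for the $\CE$- and $\zeroone$-losses the label at which the predictions differ must carry positive conditional probability under $\nu(\cdot\mid\bmx)$ near $\bmx_0$. For the $\zeroone$-loss there is one additional wrinkle worth noting: $h_W$ is an argmax of continuous score functions, so its disagreement set need not be open at tie points; one should either argue at a point where both argmaxima are strict or accept the conclusion only generically. Your closing suggestion --- restate the hypothesis as ``the predictors differ on a set of positive $\nu$-measure'' --- is the cleanest repair and makes the first two paragraphs of your argument a complete proof.
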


This analysis shows that ensembles of deep neural network promote diversity by learning neural networks which induce different predicitve functions. And this is achieved, in general, by using randomization strategies that exploits the highly multi-modal landscape of the loss function of deep neural networks . In the next section, we empirically illustrate this analysis. But the evidence already given in some previous works \citep{fort2019deep} clearly aligned with these conclusions.

\section{Experimental Evaluation}\label{sec:experiments}

\begin{figure*}[!hbt]
\centering
\begin{tabular}{ccc}
\multicolumn{1}{c}{\hspace{-15pt}\includegraphics[width=0.35\linewidth]{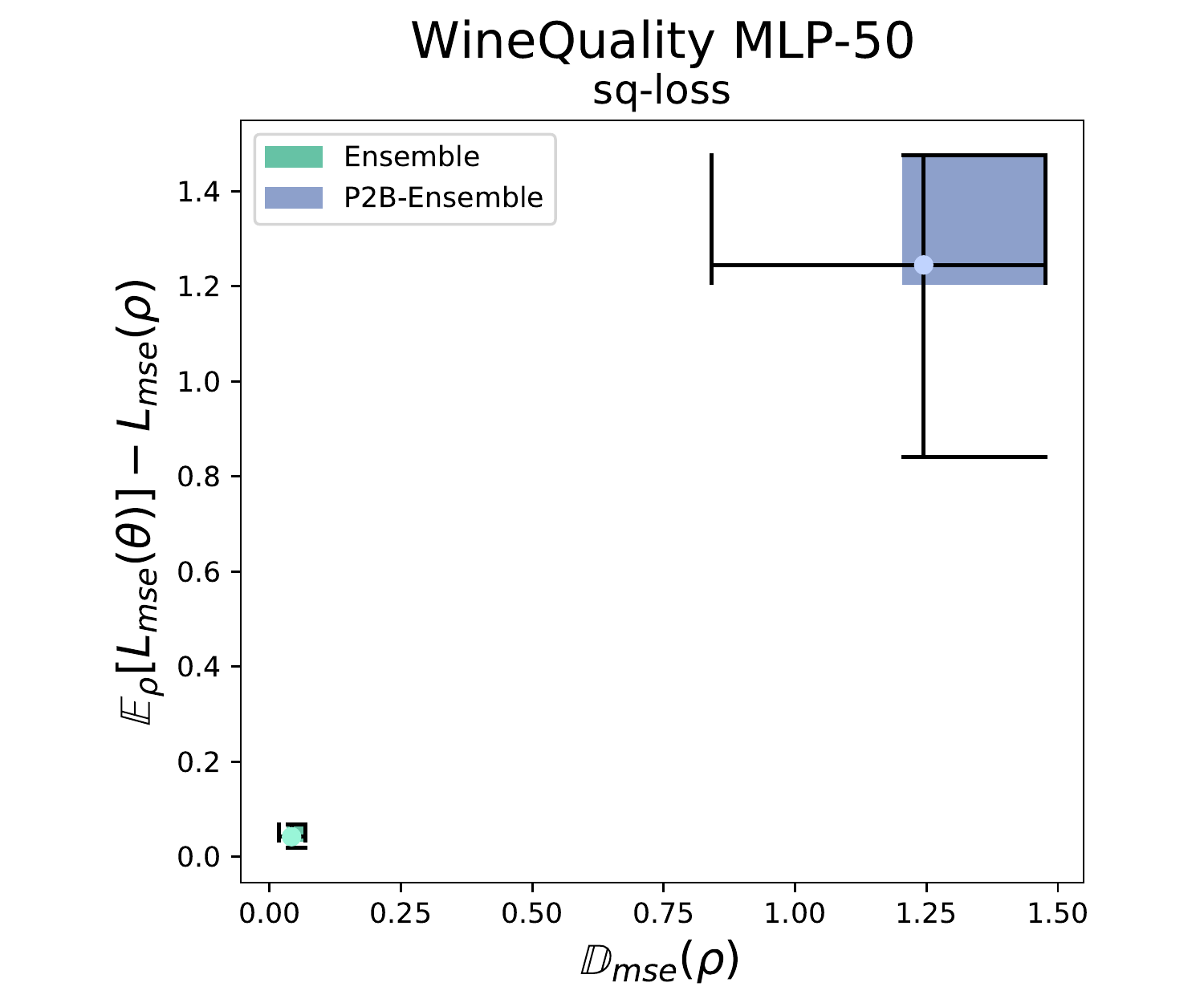}}
&
\multicolumn{1}{c}{\hspace{-15pt}\includegraphics[width=0.35\linewidth]{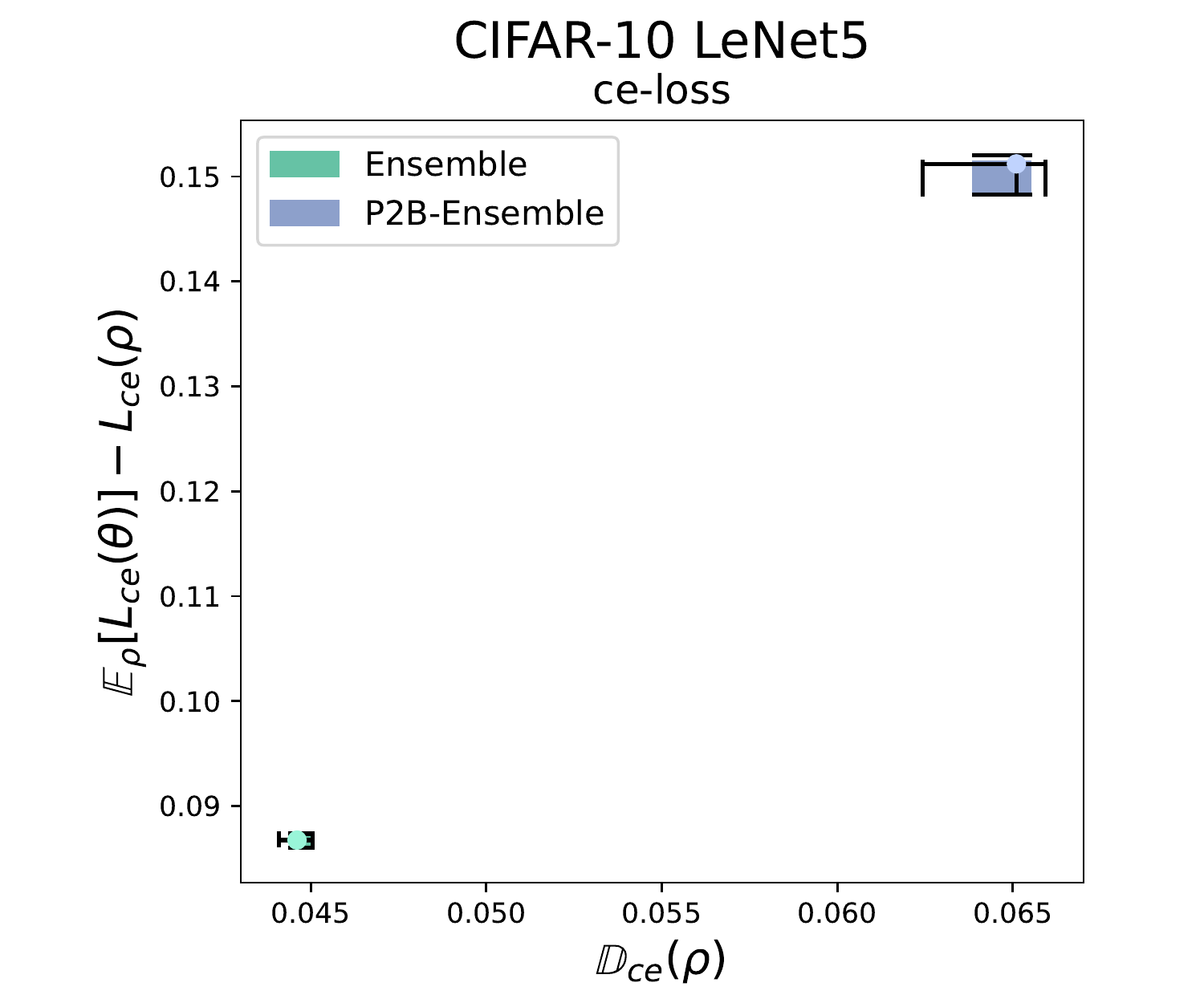}}
&
\multicolumn{1}{c}{\hspace{-15pt}\includegraphics[width=0.35\linewidth]{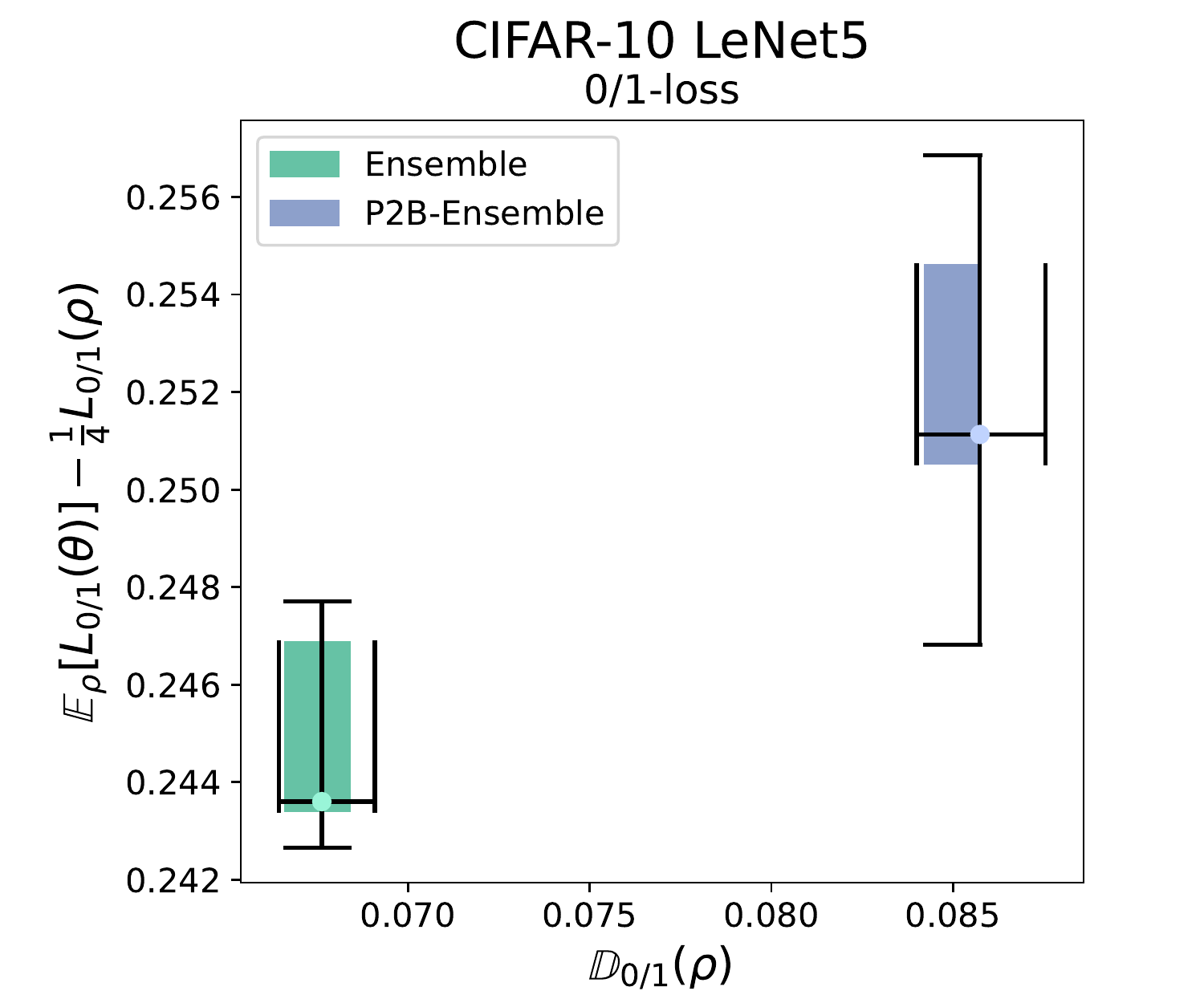}}
\\
\multicolumn{1}{c}{\hspace{-15pt}\includegraphics[width=0.35\linewidth]{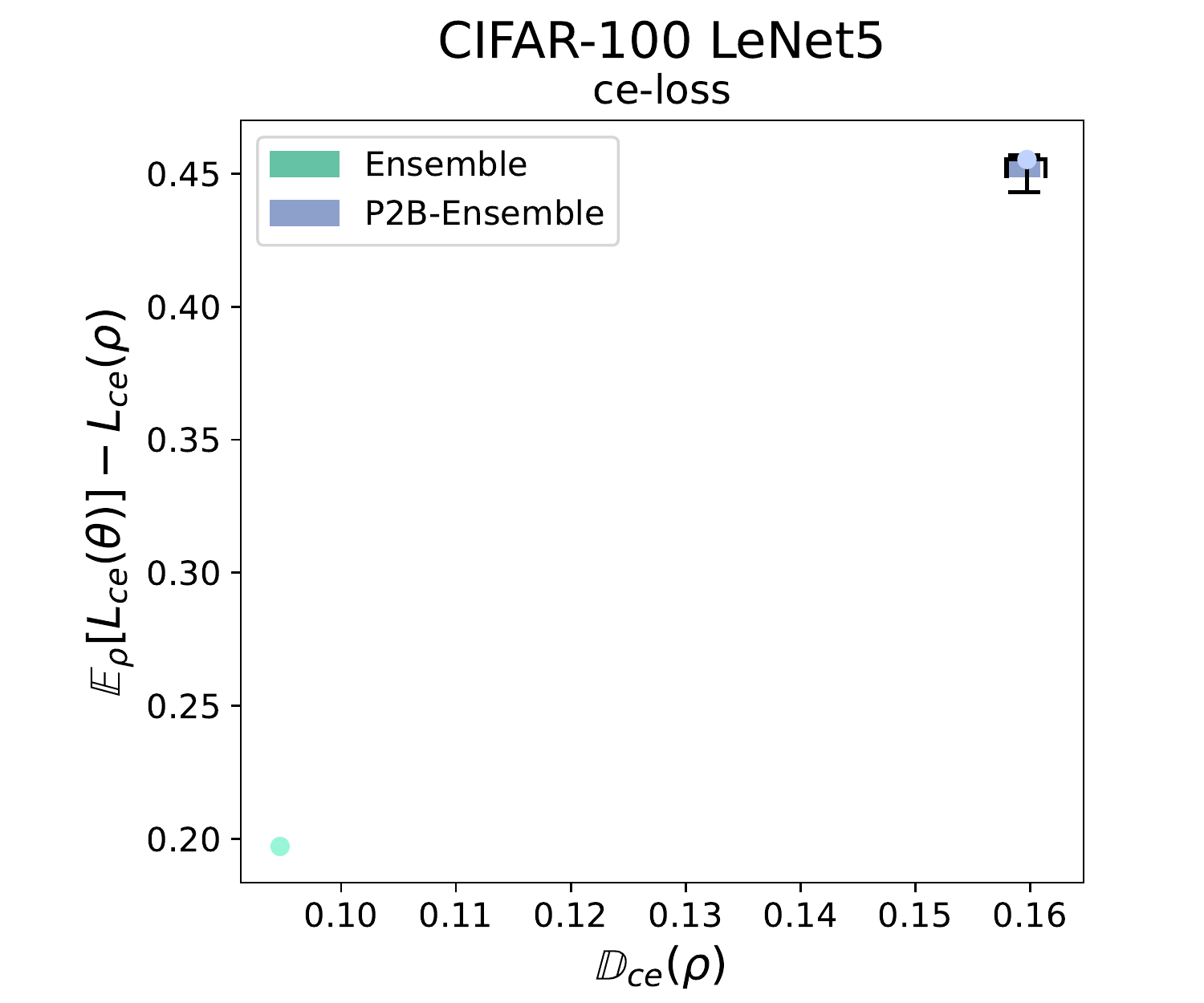}}
&
\multicolumn{1}{c}{\hspace{-15pt}\includegraphics[width=0.35\linewidth]{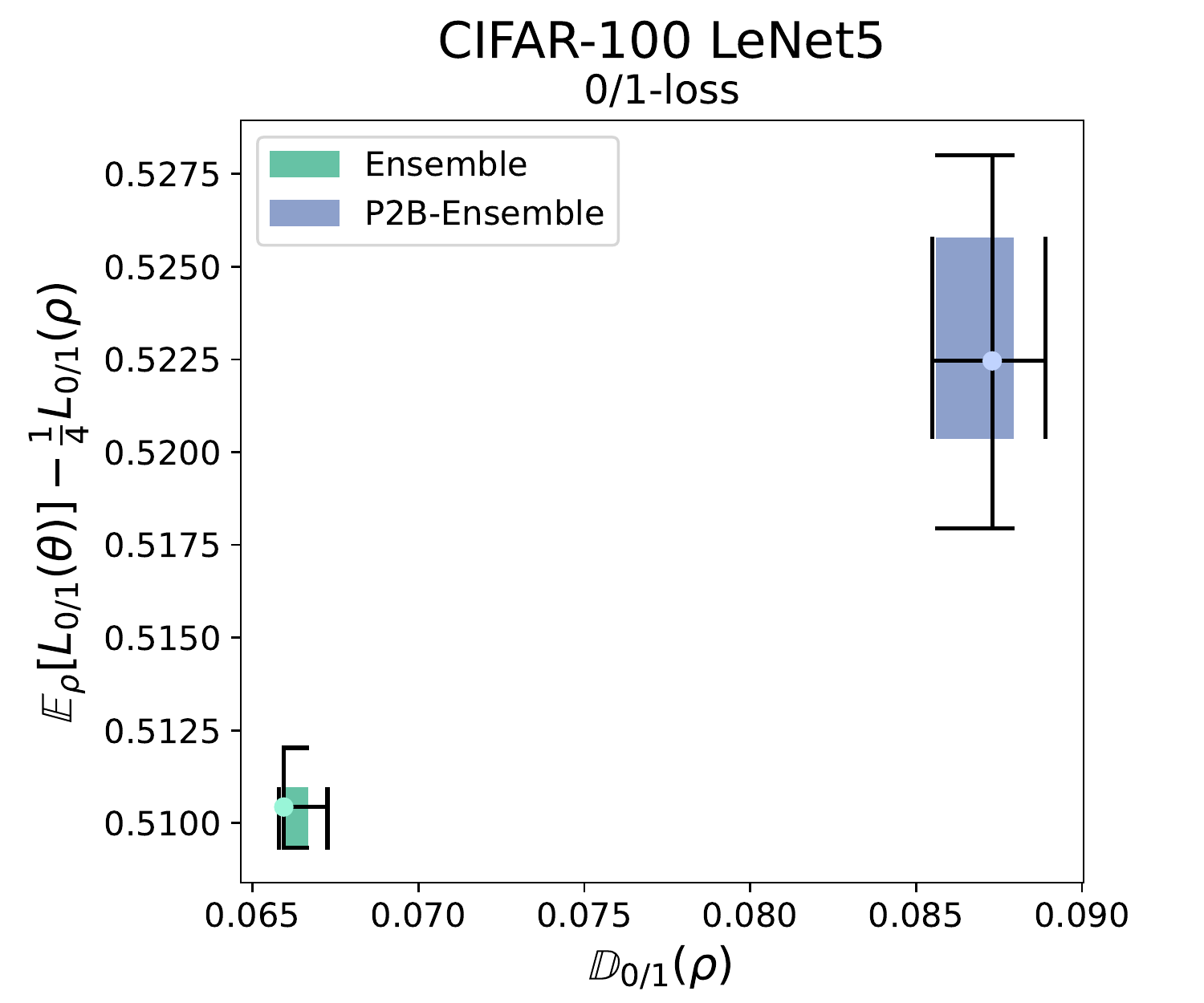}}
&
\multicolumn{1}{c}{\hspace{-15pt}\includegraphics[width=0.35\linewidth]{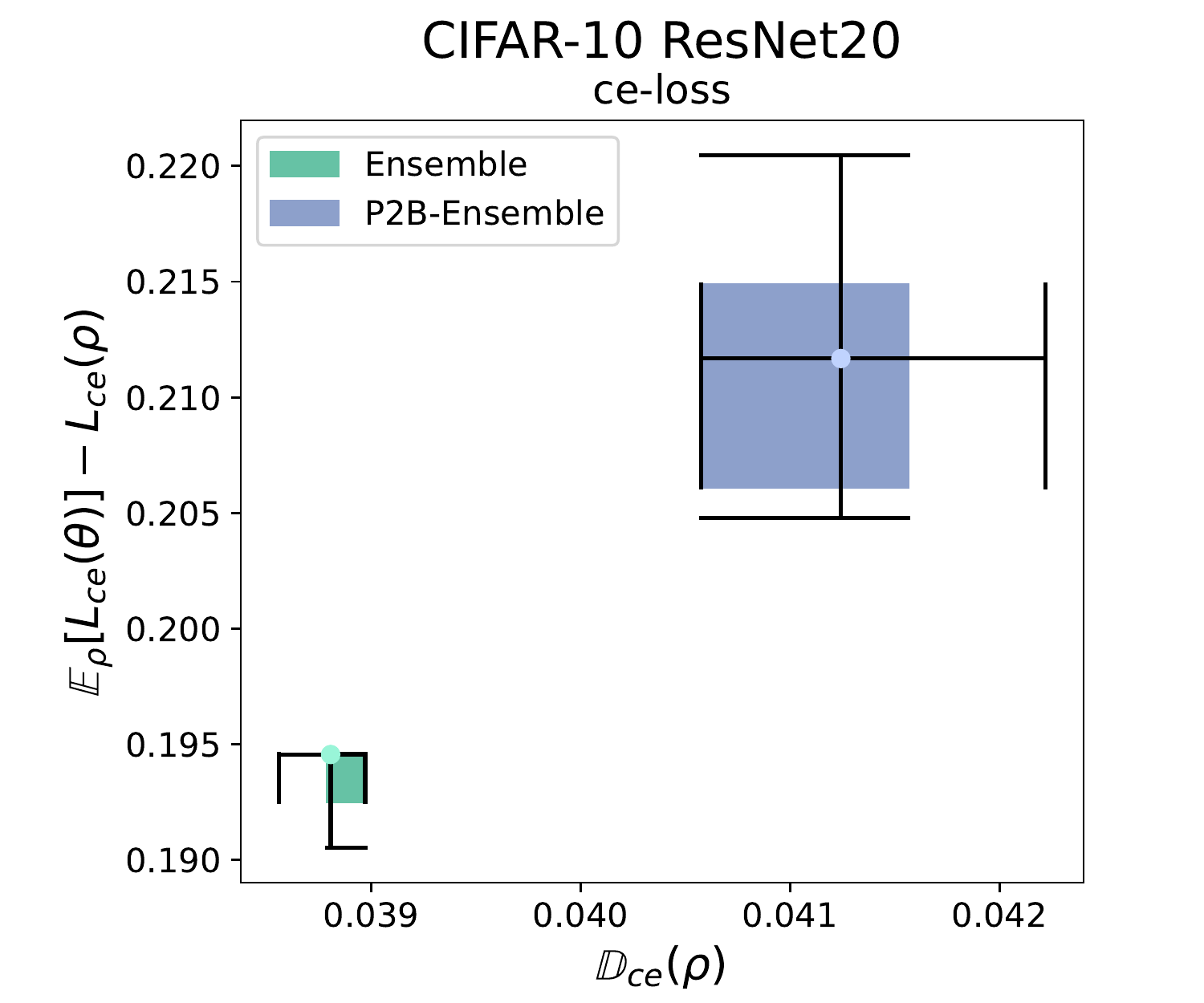}}
\\
\multicolumn{1}{c}{\hspace{-15pt}\includegraphics[width=0.35\linewidth]{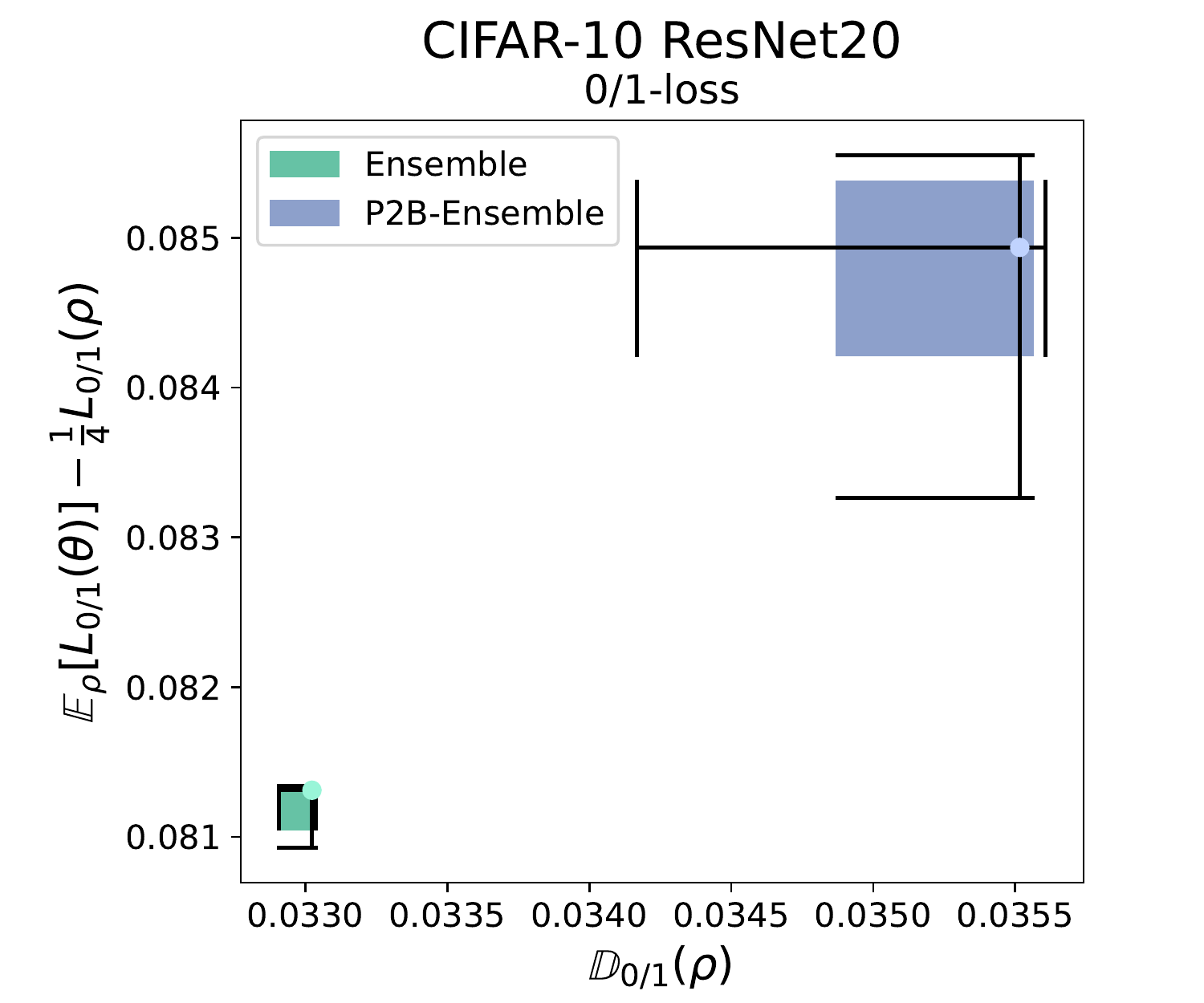}}
&
\multicolumn{1}{c}{\hspace{-15pt}\includegraphics[width=0.35\linewidth]{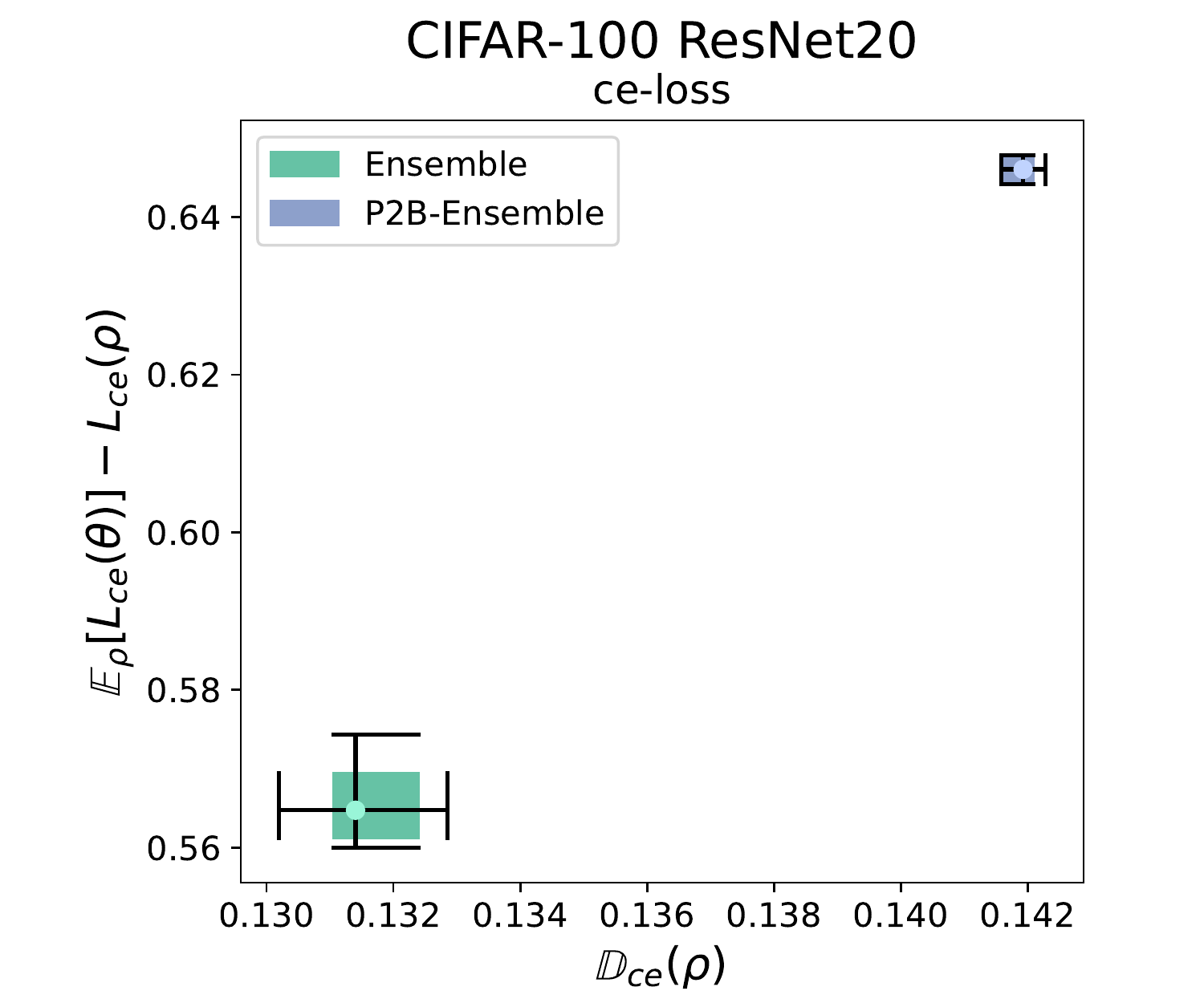}}
&
\multicolumn{1}{c}{\hspace{-15pt}\includegraphics[width=0.35\linewidth]{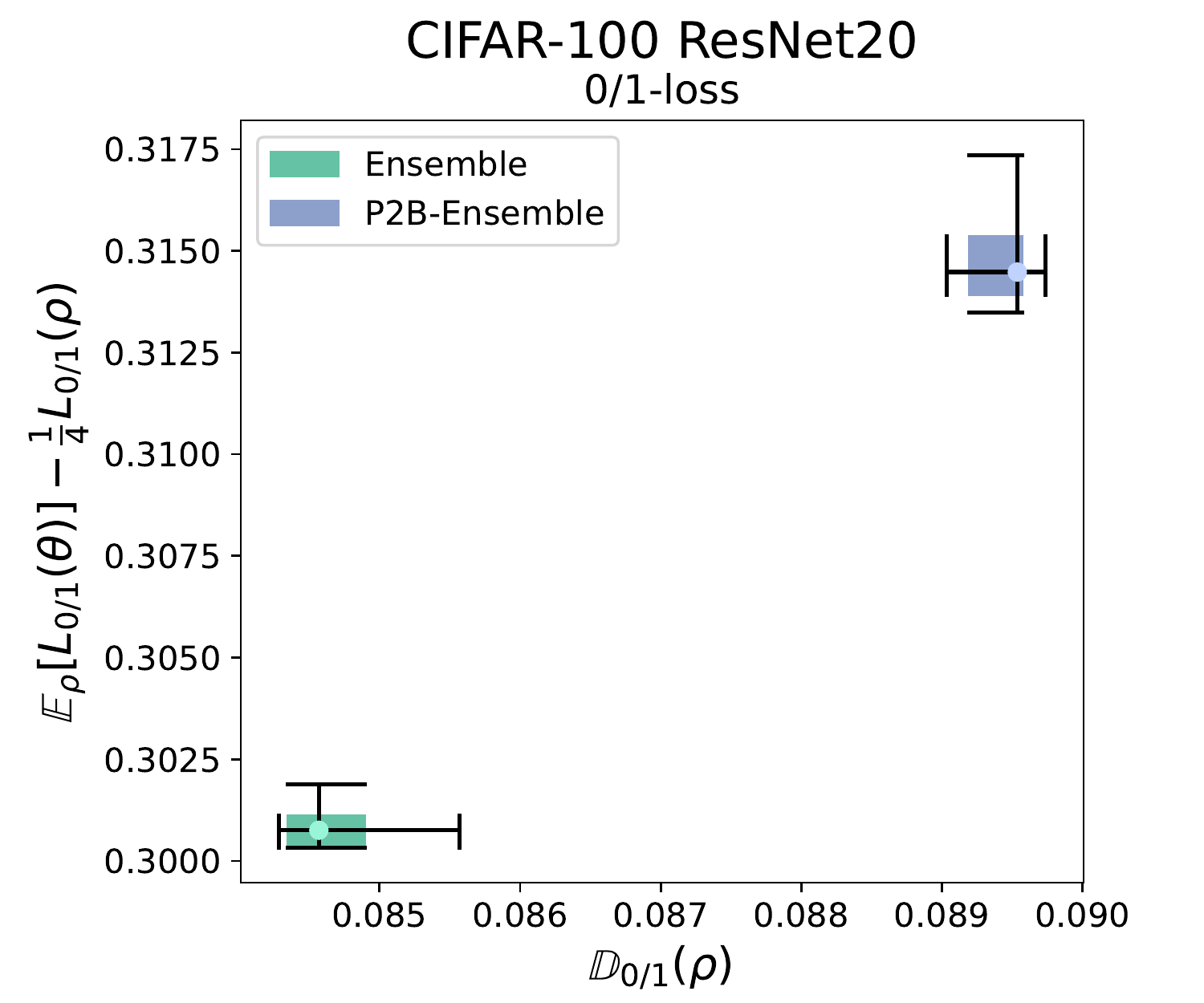}}
\\
\end{tabular}
  \caption{Each box-plot represents a set of ensemble models learned with the same algorithm. The X-axis represents the diversity of the ensemble and the Y-axis represents the gap between the loss of the individual ensemble models and the loss of the ensemble (a positive gap indicates that the ensemble performs better than the individual models).}\label{fig:collorary:gap}
\end{figure*}

\subsection{Experimental Settings}

We performed the empirical evaluation for the three ensembles detailed in Section \ref{sec:preliminaries}. Note that regression ensembles are associated to the $\MSE$-loss, weighted majority vote ensembles are associated to the $\zeroone$-loss, and model averaging ensembles to the $\CE$-loss. We will use these losses as a way to refer to the different ensembles (e.g. the generalization error of a regression ensemble will be denoted by $\L_{\MSE}(\rho)$).

The regression ensemble is evaluated on the Wine-Quality \citep{cortezpaulo} data set using a  multilayer-perceptron with one layer containing 50 hidden units and a dropout layer, this model is denoted as MLP50. 

The majority vote and the model averaging ensembles are evaluated on two standard data sets, CIFAR-10 and CIFAR-100 \citep{krizhevsky2009learning}, using two networks: LeNet5 \citep{lecun1989backpropagation} and Resnet20 \citep{he2016deep}. LeNet5 is chosen for its simplicity, meaning it does not operate on the interpolation regime (i.e. the empirical error is not close to zero). On the other hand, we use ResNet20 because it operates in (or close to) the interpolation regime for the CIFAR-10 and the CIFAR-100 data sets. More complex networks could have been employed, but for the aim of this empirical evaluation ResNet20 is powerful enough. 

All ensembles are made of four individual models. This is not an arbitrary amount as this setting is the default one in the widely used neural network ensemble library \emph{Uncertainty Baselines}\footnote{\url{https://github.com/google/uncertainty-baselines}}. However, Appendix \ref{app:experiments} shows results for other ensemble sizes.

\begin{figure*}[!htb]
\centering
\begin{tabular}{ccc}
\multicolumn{1}{c}{\hspace{-15pt}\includegraphics[width=0.35\linewidth]{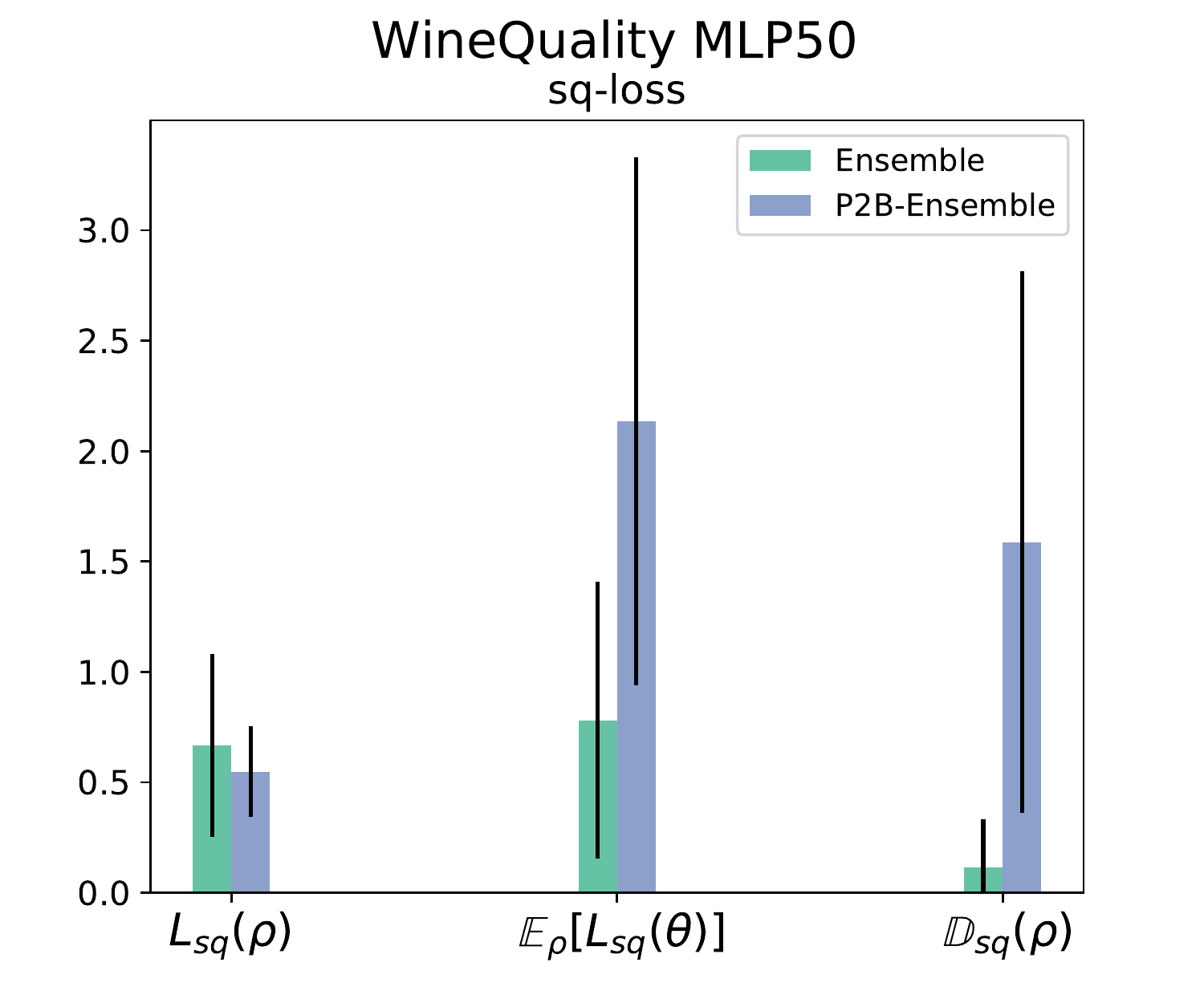}}
&
\multicolumn{1}{c}{\hspace{-15pt}\includegraphics[width=0.35\linewidth]{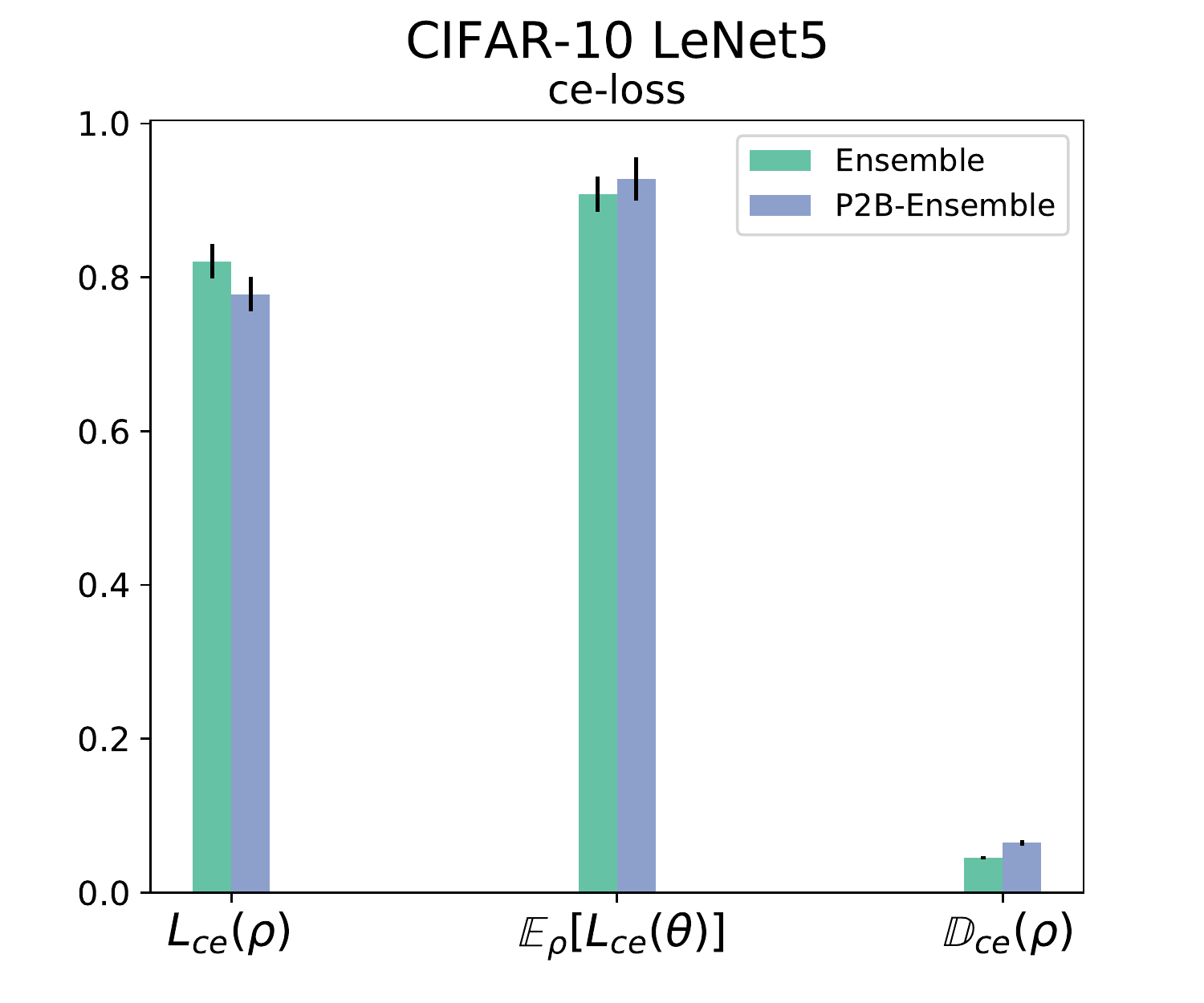}}
&
\multicolumn{1}{c}{\hspace{-15pt}\includegraphics[width=0.35\linewidth]{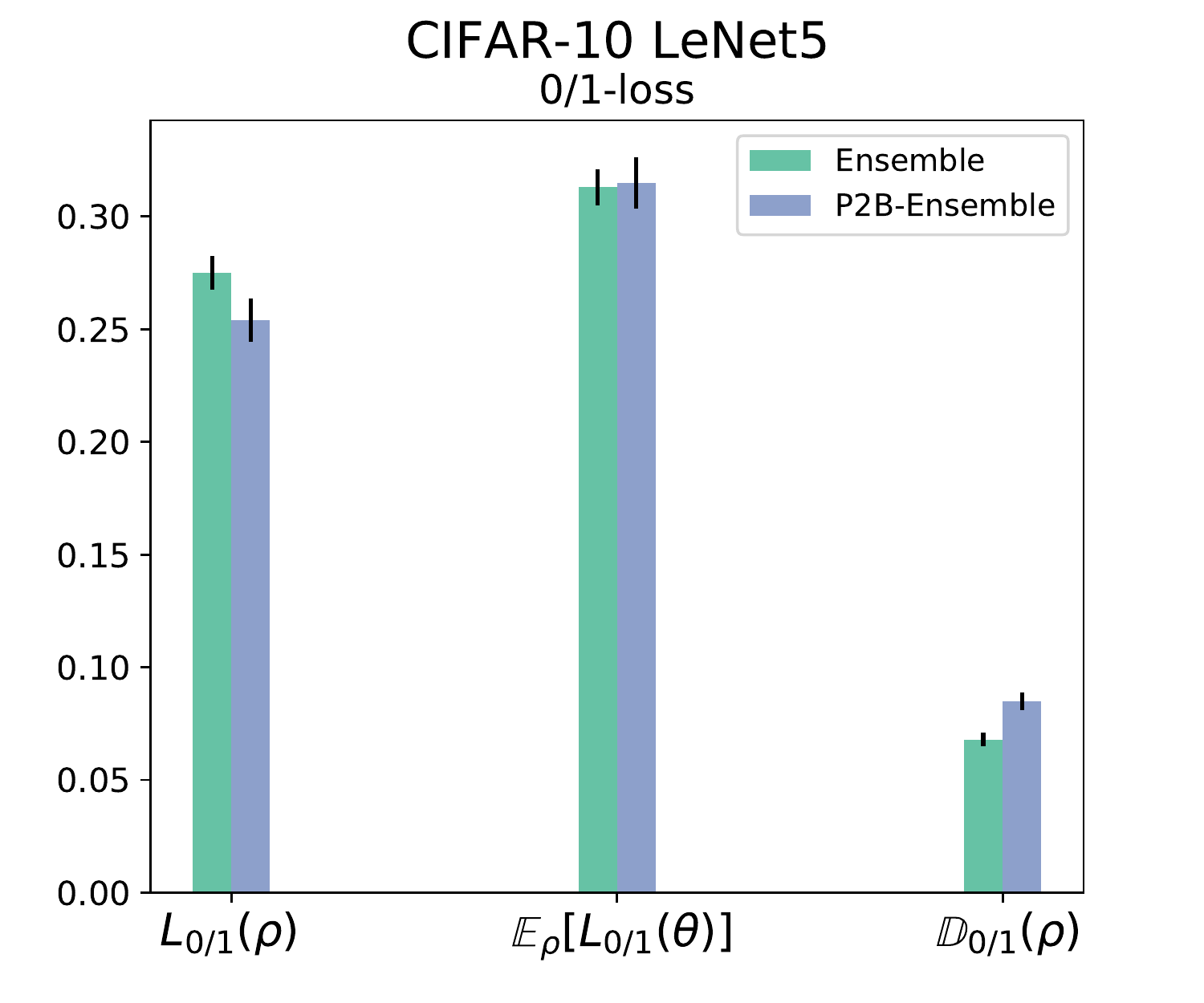}}
\\
\multicolumn{1}{c}{\hspace{-15pt}\includegraphics[width=0.35\linewidth]{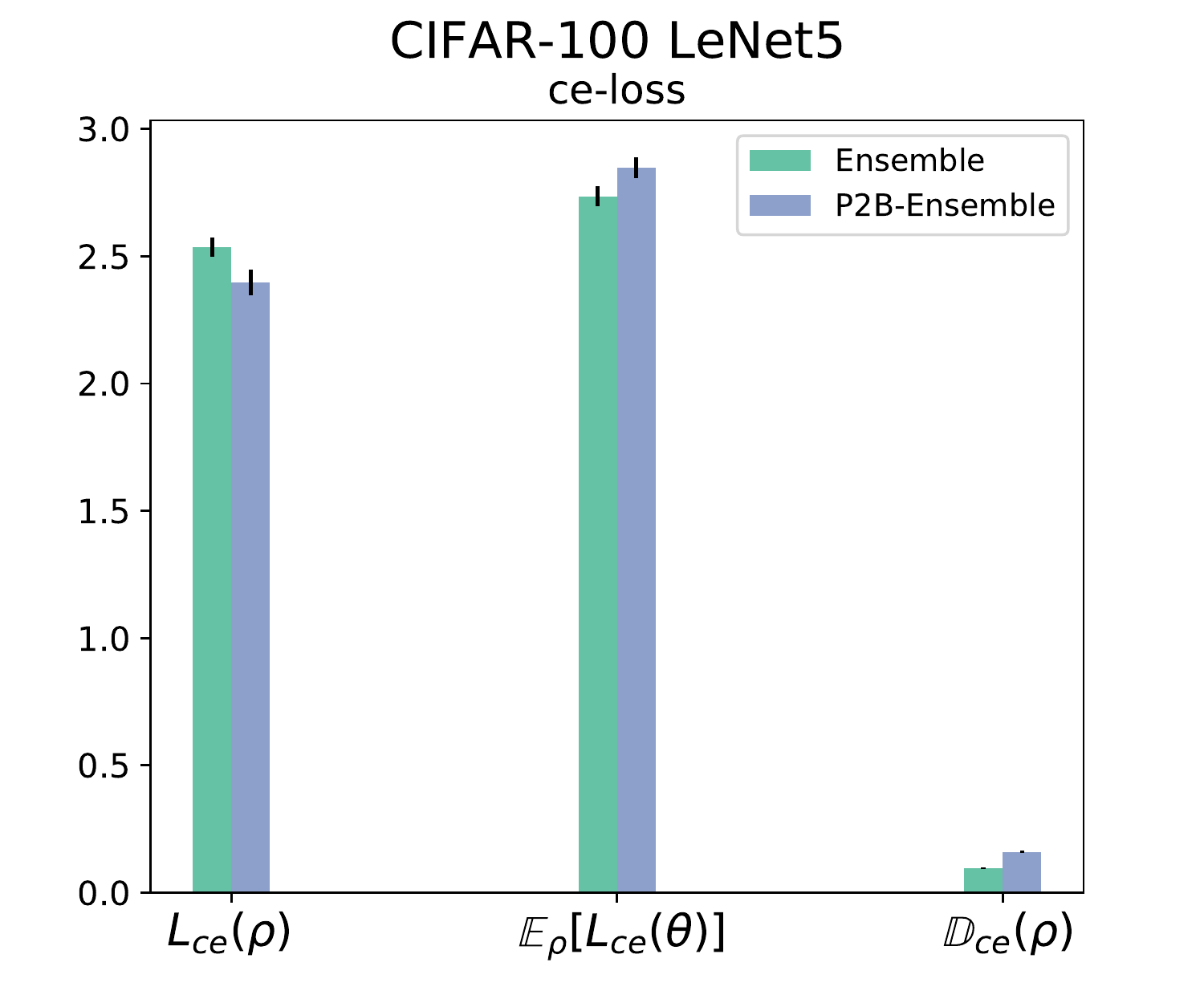}}
&
\multicolumn{1}{c}{\hspace{-15pt}\includegraphics[width=0.35\linewidth]{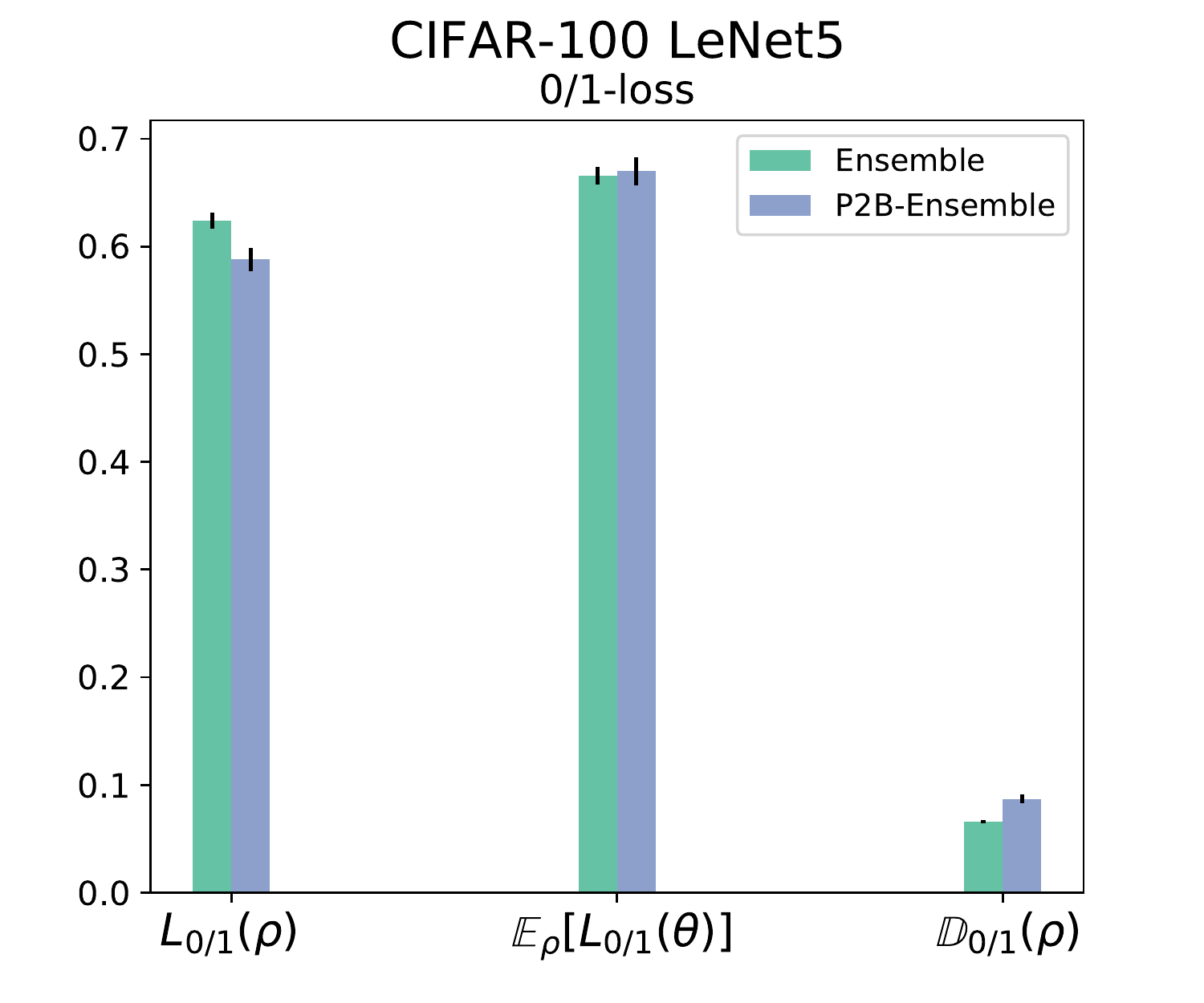}}
&
\multicolumn{1}{c}{\hspace{-15pt}\includegraphics[width=0.35\linewidth]{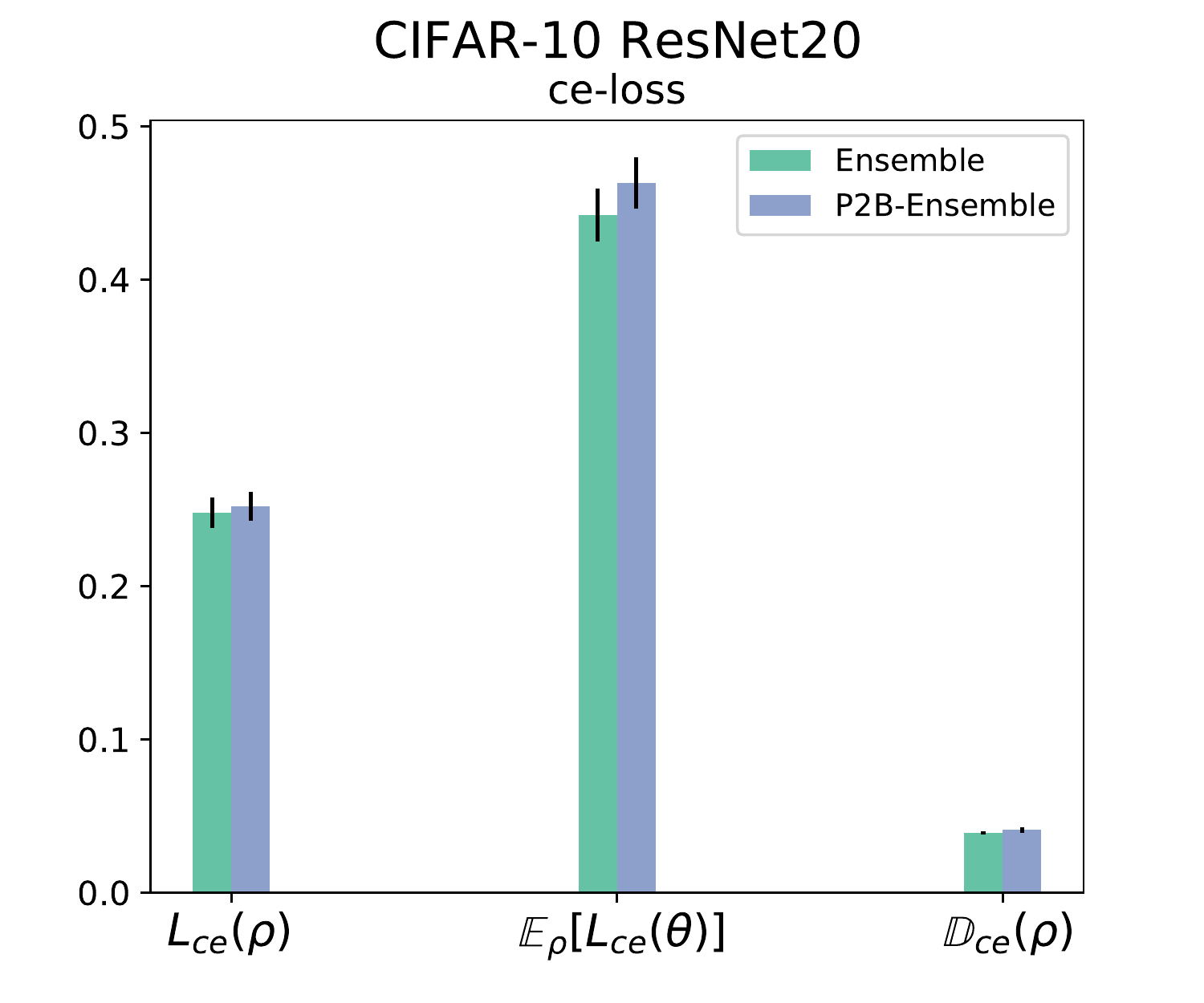}}
\\
\multicolumn{1}{c}{\hspace{-15pt}\includegraphics[width=0.35\linewidth]{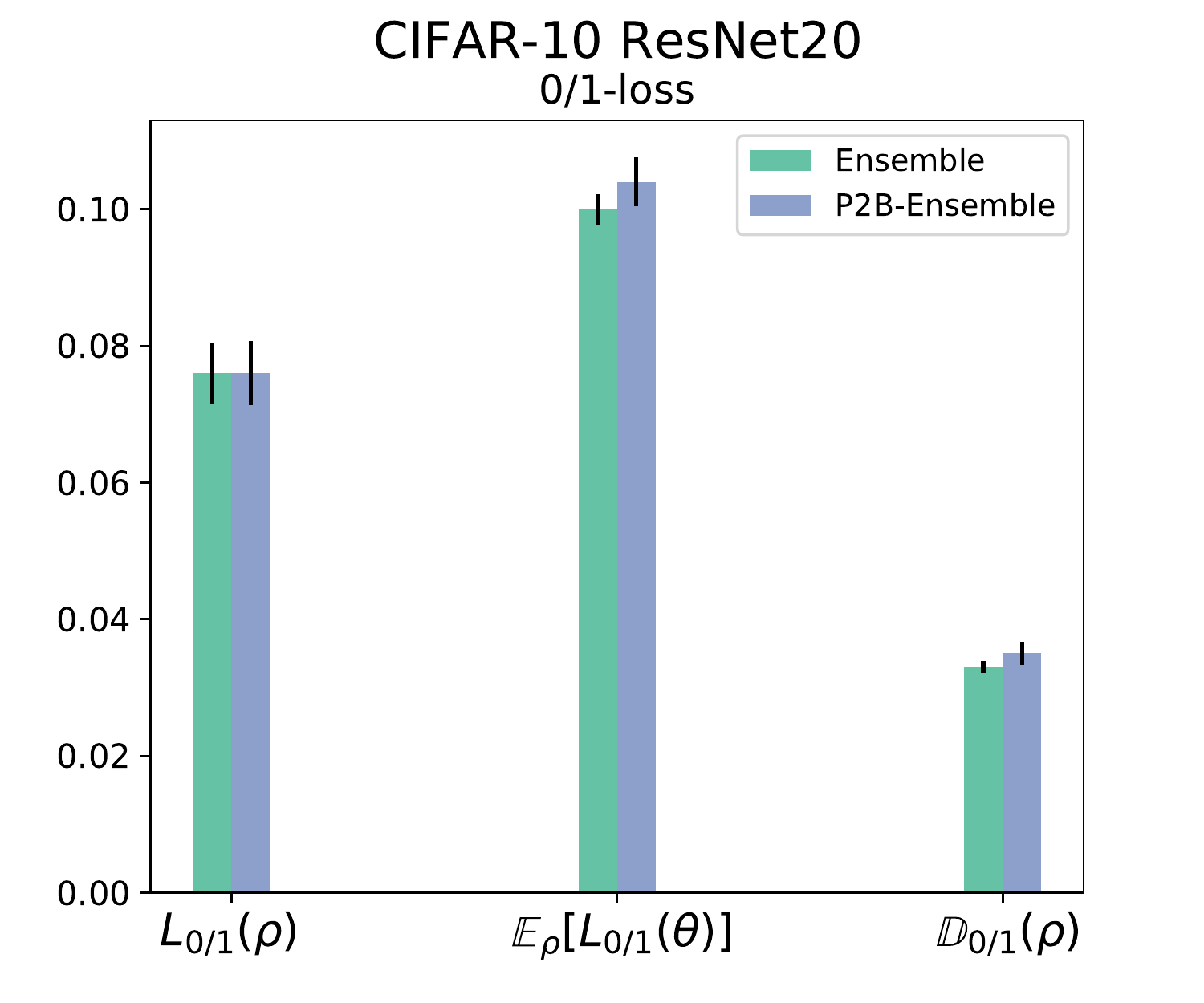}}
&
\multicolumn{1}{c}{\hspace{-15pt}\includegraphics[width=0.35\linewidth]{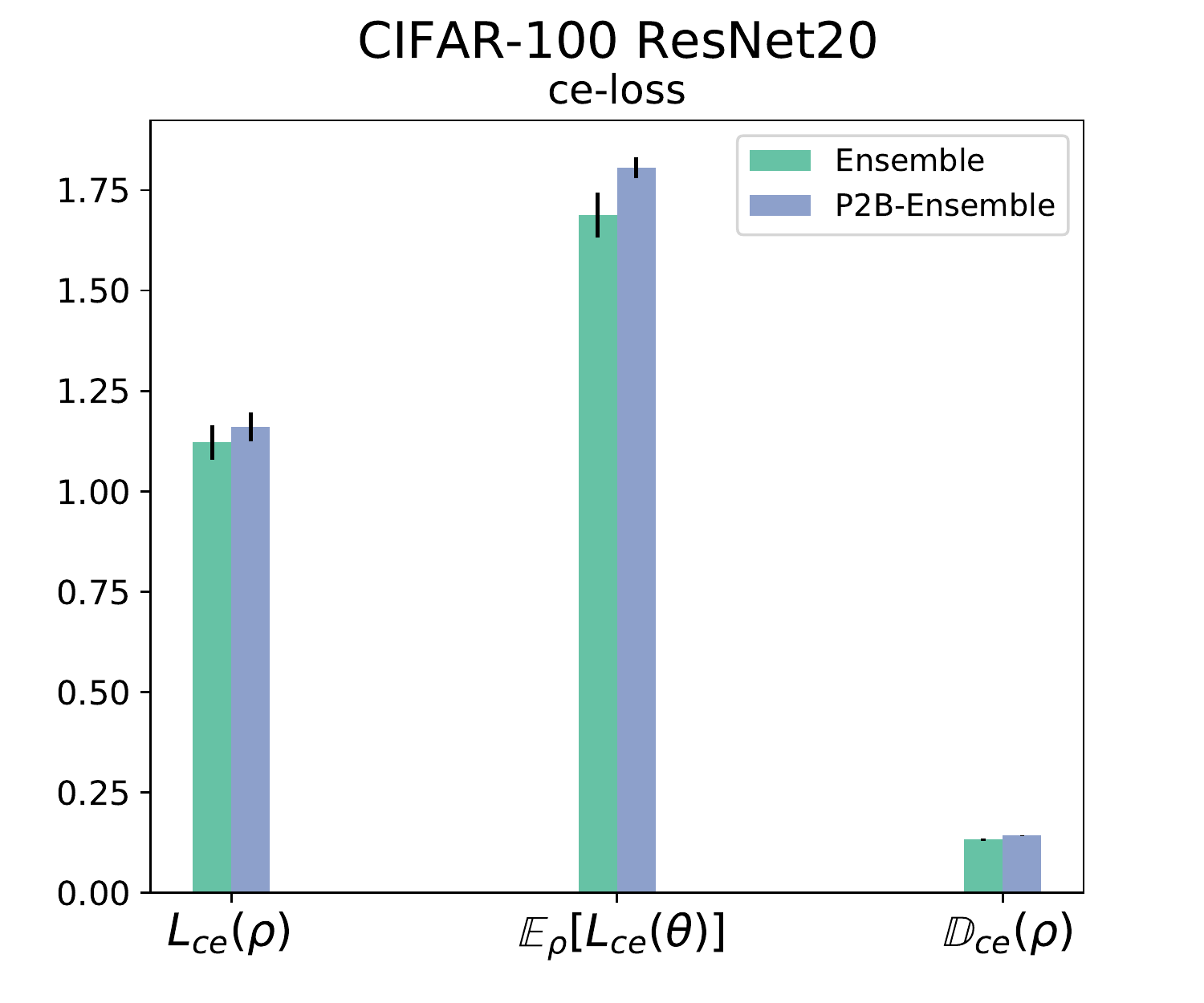}}
&
\multicolumn{1}{c}{\hspace{-15pt}\includegraphics[width=0.35\linewidth]{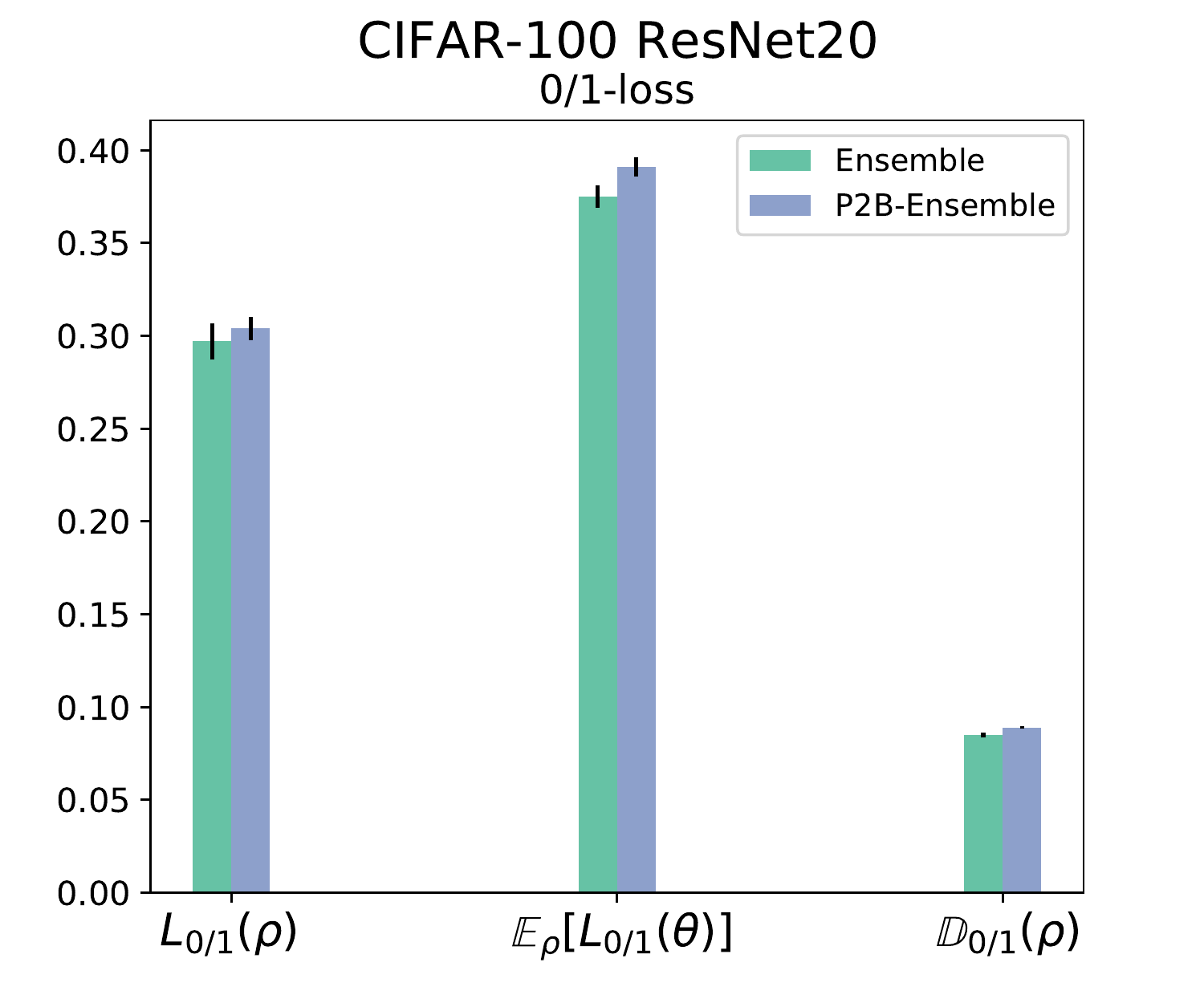}}
\\

\end{tabular}
  \caption{Mean plus/minus three standard deviations of the ensemble test error $L(\rho)$, average test error of the individual models $\E_\rho[L(\bmtheta)]$ and ensemble diversity in the test set $\mathbb{D}(\rho)$.}\label{fig:PACLearning}
\end{figure*}

We consider two ensemble learning algorithms; \emph{ensemble} \citep{lakshminarayanan2017simple}, where each randomly initialized model is independently optimized using gradient descent, and \emph{P2B-Ensemble}  \cite{masegosa2020learning}, where the randomly initialized models are jointly learned minimizing the PAC-Bayes bound of Theorem~\ref{thm:2ndPACBayes}. The former is the best representative approach of neural network ensemble learning algorithms with a randomization approach \citep{lakshminarayanan2017simple}. In both cases, we use the $\CE$-loss for the CIFAR-10 and CIFAR-100 data sets, and the $\MSE$-loss for Wine-Quality data set. For learning the majority vote ensembles we optimize the $\CE$-loss instead of the $\zeroone$-loss because the $\zeroone$-loss is non-differentiable, and the $\CE$-loss is a good proxy. In any case, we can compute the error and diversity measures for this ensemble and analyse their relationship. 


We employ the test data set to approximate the generalization errors of the ensemble $\L(\rho)$, of the individual models $\L(\bmtheta)$ and, also, the expected diversity of the ensemble $\mathbb{D}(\rho)$. For the experiments with the $\CE$-loss, we use a more complex but tighter version of the diversity also introduced in \cite{masegosa2020learning} (see Appendix \ref{subsec:tighter_ineq} for details). We do not evaluate the ensembles using out-of-distribution benchmarks \citep{snoek2019can} because the aim of this experimentation is to empirically evaluate our theoretical analysis, which assumes that both the test and train data are generated from the same distribution.

Five executions with different seeds were made for each experiment. Full details of the experimental settings are given in Appendix \ref{app:experiments}. The needed code to reproduce the experiments is provided in the supplementary material, and is based on publicly available libraries.

\subsection{Experimental Evaluation}\label{sec:experimental_ev}

We start this evaluation by assessing how tight are the upper bounds provided by Theorem \ref{thm:decomposition}. Figure \ref{fig:thm:decomposition} show the results of this evaluation. Each point corresponds to an ensemble model, which is characterized by a given $\rho_\delta$ distribution. The distance of the points to the identity line measures the tightness of the upper bound of Theorem \ref{thm:decomposition}. In this results, we can see that for the $\MSE$-version, the upper bound is completely tight, as stated in the Theorem. For the $\CE$-version, the upper bound is also fairly tight too. For the $\zeroone$-version, we evaluate two versions. The original one with $\alpha=4$, which is quite loose. And another version with $\alpha=1$, which is not an upper bound, to illustrate that it seems plausible that there may exist the possibility to derive bounds with $\alpha$ values closer to 1. Because $\alpha=4$ is a worst case factor \citep{masegosa2020second}.

In Section \ref{sec:diversitygeneralization}, we discussed how, according to our theoretical analysis, diversity is directly related to the enhanced performance we get when combining different models. More precisely, as shown in Corollary~\ref{collorary:gap}, those ensembles with higher diversity $\mathbb{D}(\rho)$ should raise a higher gap between the average performance of the individual models $\E_\rho[L(\bmtheta)]$ and the performance of the ensemble $\L(\rho)$. Figure \ref{fig:collorary:gap} shows how high diversity is consistently linked to a higher performance gap (i.e. $\E_\rho[L(\bmtheta)] - L(\rho)$) in all the cases. This figure also shows how \textit{PAC2B-Ensemble}, which explicitly promotes diversity, consistently induces higher-diversity ensembles than the standard \textit{Ensemble} algorithm.

In Section \ref{sec:diversity:learning} we discussed why the \textit{P2B-Ensemble} algorithm, which minimizes the PAC-Bayes bound of Theorem~\ref{thm:2ndPACBayes}, learns better ensembles if the individual network models do not operate in the interpolation regime. By looking at the results of Figure \ref{fig:PACLearning}, we can see how this analysis matches our empirical findings:

\begin{itemize}
    \item The \textit{P2B-Ensemble} algorithm effectively induces ensembles with higher diversity that have better generalization performance than the \textit{Ensemble} algorithm for MLP50 and LetNet5.
    
    \item However, \textit{P2B-Ensemble} on ResNet20 does not induce ensembles with much higher diversity than the \textit{Ensemble} algorithm. Moreover, it performs similar or worst than the \textit{Ensemble} algorithm. 
\end{itemize}

Figure~\ref{fig:empirical_diversity} shows that the empirical error and empirical diversity terms of the bound of Theorem~\ref{thm:2ndPACBayes} for the ResNet20 ensemble is much smaller than for the LeNet5 ensemble. In that case, ResNet20 is operating close to the in the interpolation regime for CIFAR-10 and close to the interpolation regime for CIFAR-100.

\begin{figure}[!hbt]
\centering
\begin{tabular}{cccccc}
\multicolumn{2}{c}{\hspace{-5pt}\includegraphics[width=0.5\linewidth]{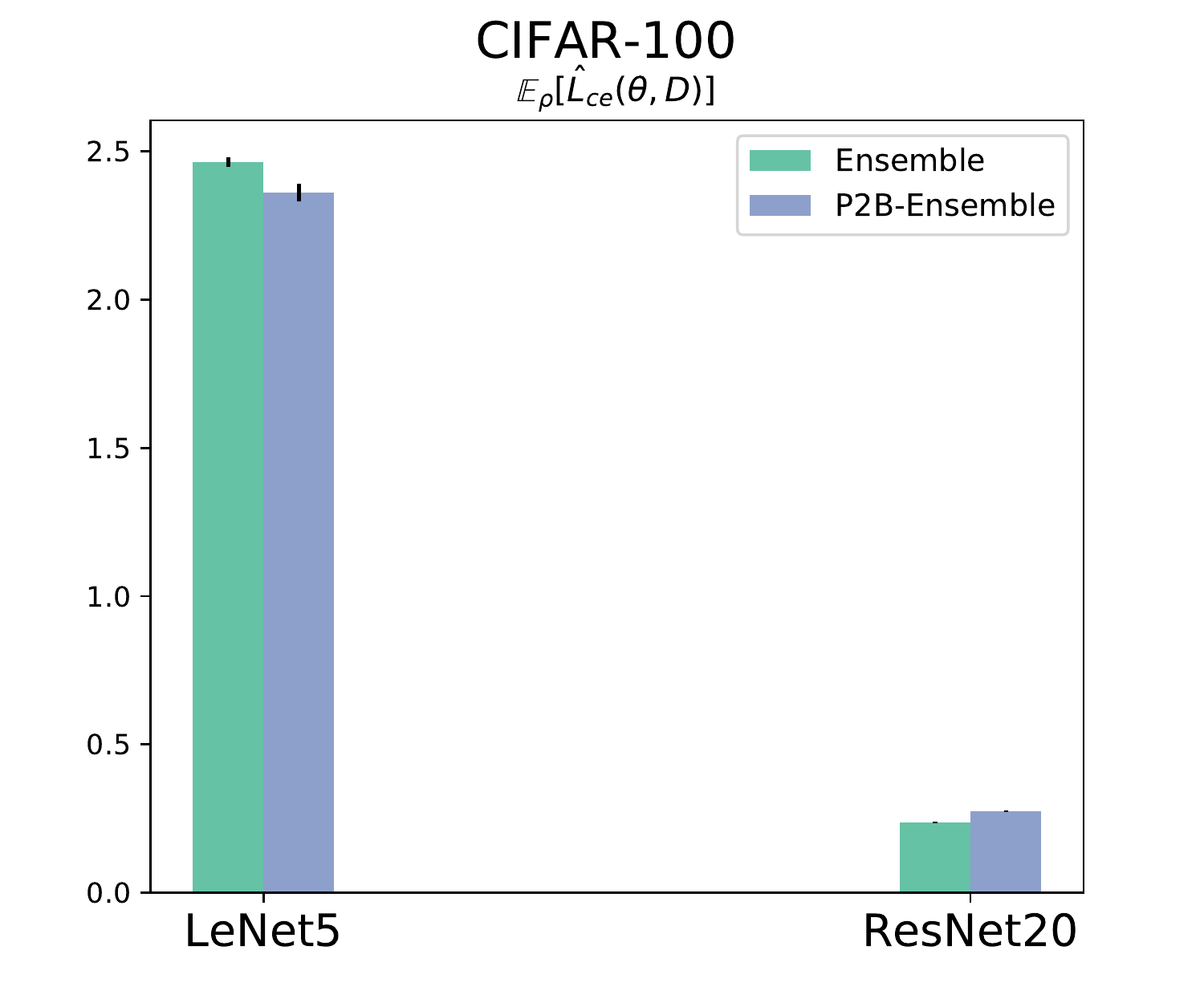}}
&
\multicolumn{2}{c}{\hspace{-5pt}\includegraphics[width=0.5\linewidth]{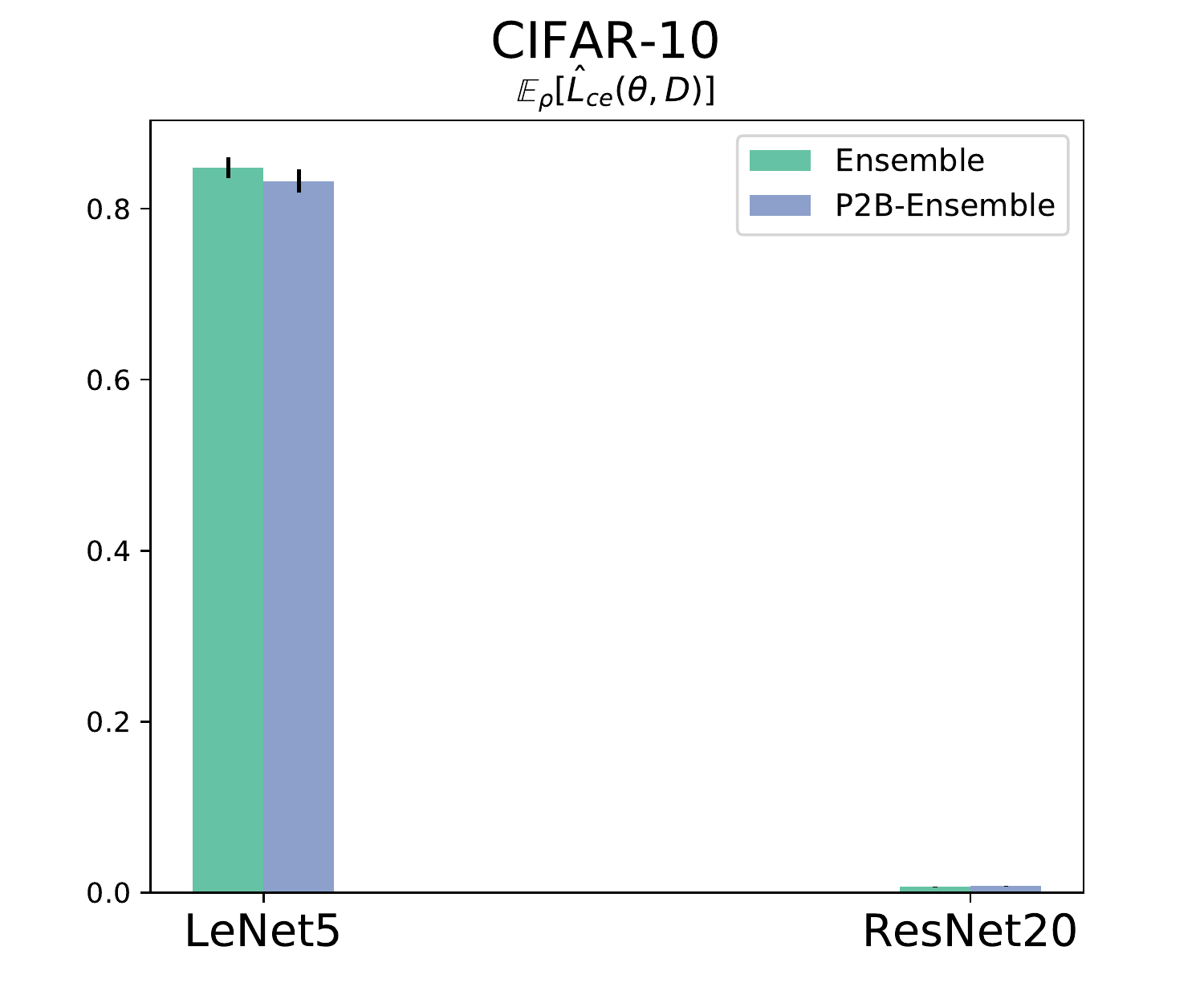}}
\\
\multicolumn{2}{c}{\hspace{-5pt}\includegraphics[width=0.5\linewidth]{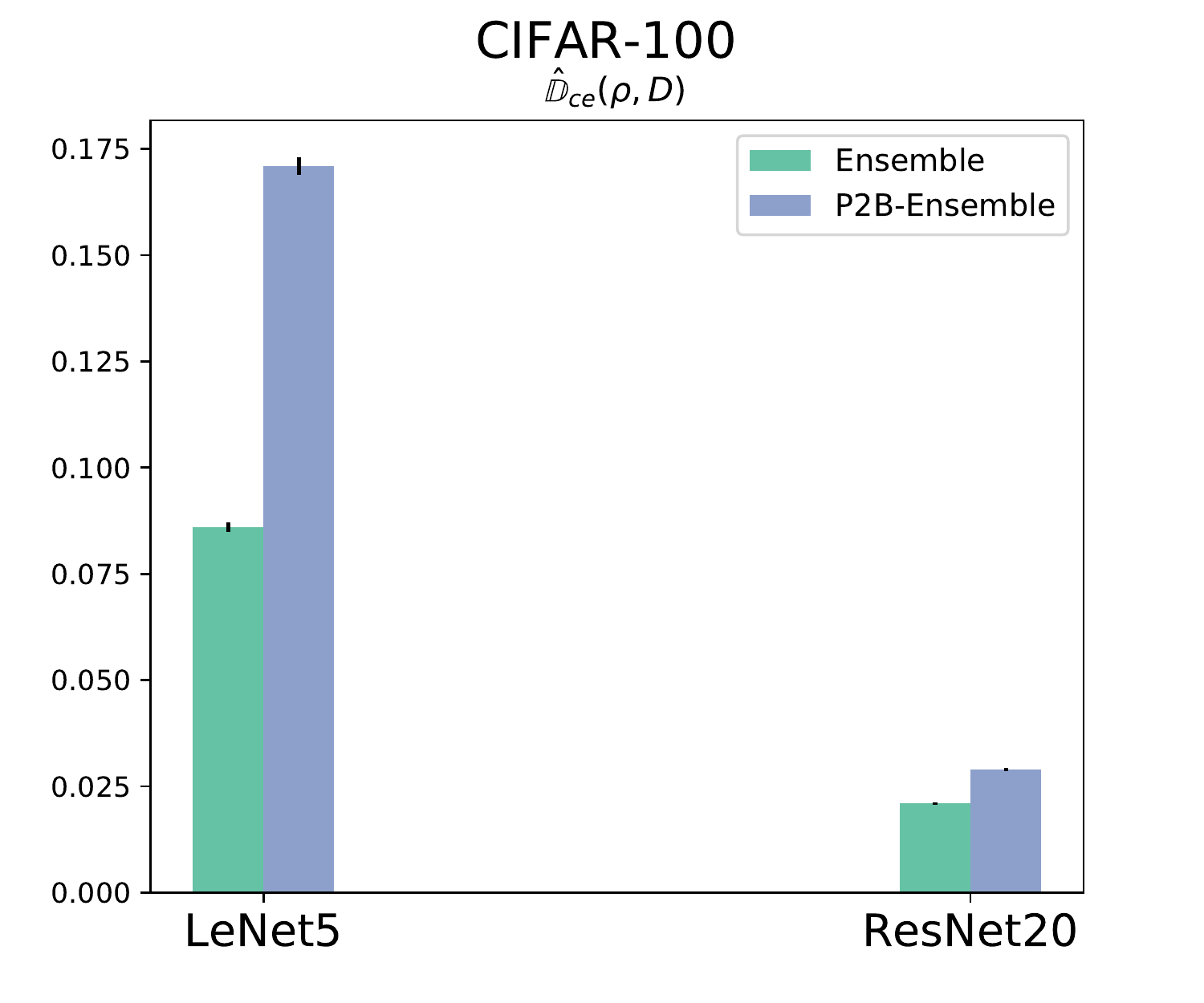}}
&
\multicolumn{2}{c}{\hspace{-5pt}\includegraphics[width=0.5\linewidth]{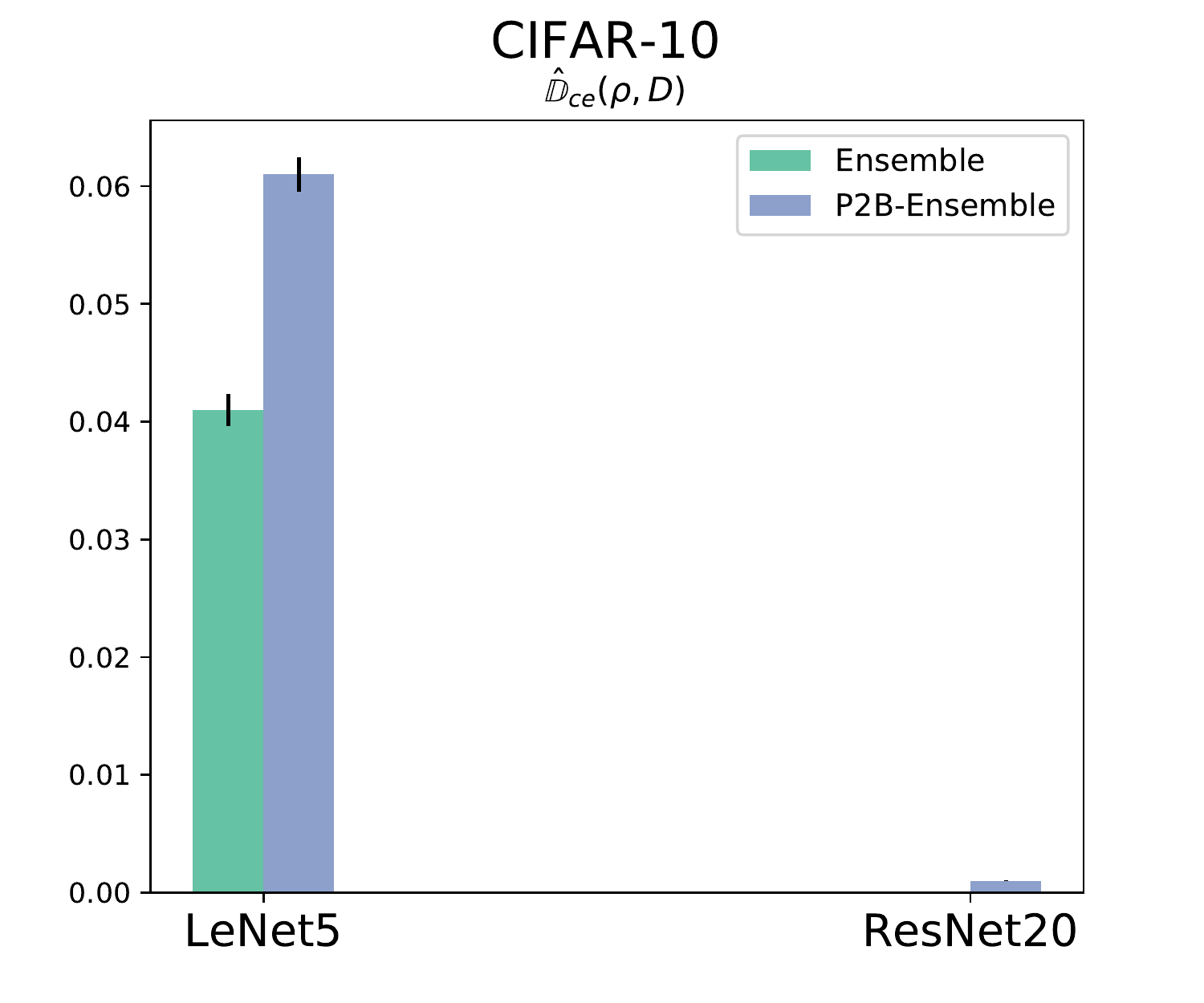}}
\\
\end{tabular}
  \caption{Empirical error of individual models $\E_\rho[\hat{L}(\bmtheta,D)]$ and empirical diversity $\hat{\mathbb{D}}(\rho,D)$ of LetNet5 and ResNet20 on CIFAR-10 and CIFAR-100 data sets.}\label{fig:empirical_diversity}
\end{figure}



\section{Discussion and Limitations}

Although it is well-known that diversity plays a key role in the performance of an ensemble, no theoretical works have previously established a well founded description of this relationship between diversity and generalization that applies to a wide range of ensemble models. This work aims to contribute a new stepping stone in this direction.

We have shown in Theorem \ref{thm:decomposition} that employing an upper-bound-based decomposition of the error of an ensemble is a promising approach that applies to a wide range of ensembles. For example, we have shown which is the role that correlation among predictors plays in ensemble diversity. 
This decomposition has also allowed us to apply a PAC-Bayesian analysis that has established a neat link between the empirical diversity of an ensemble and its generalization performance. And we have also discussed why ensembles of deep neural networks promote diversity through randomization strategies. 

Although some of the results presented in this work are straightforward consequences results that are readily available in the literature, the main contribution of this work has been to put them together and use them as a framework for reasoning about diversity and generalization of neural network ensembles.

This theoretical analysis also presents several limitations that open exciting research directions. First, the upper bounds of Theorem \ref{thm:decomposition} could potentially be tighter. For example, C-bounds \citep{GLL+15} are known to be tighter than our $\zeroone$-bound (see \citep{masegosa2020second} for a discussion at this respect). 

Same happens with the PAC-Bayesian bounds employ in Theorem \ref{thm:2ndPACBayes}. This family of PAC-Bayesian bounds builds on the the upper bounds of Theorem \ref{thm:decomposition}. Hence, if these bounds are not tight then we may expect the same for the associated PAC-Bayesian bound. At the same time, we have only chosen a general form of PAC-Bayesian bounds. Thus, more specialized and tighter PAC-Bayesian bounds could be employed instead, as done by \citep{masegosa2020second}. 



\section*{Acknowledgements}
This research is part of projects PID2019-106758GB-C31, PID2019-106758GB-C32, funded by MCIN/AEI/10.13039/ 501100011033, FEDER ``Una manera de hacer Europa'' funds. This research is also partially funded by Junta de Andaluc\'{i}a grant P20-00091. Finally we would like to thank the ``Mar\'{i}a Zambrano'' grant (RR\_C\_2021\_01) from the Spanish Ministry of Universities and funded with NextGenerationEU funds.

\bibliography{bibliography-yevgeny,bibliography-masegosa,others}
\bibliographystyle{plainnat}

\vfill

\pagebreak

\appendix

\renewcommand\thefigure{\thesection.\arabic{figure}} 
\renewcommand\thetable{\thesection.\arabic{table}} 
\renewcommand{\theequation}{\thesection.\arabic{equation}}
\renewcommand{\thetheorem}{\thesection.\arabic{theorem}}

\onecolumn

\section{Decomposing the Loss of an Ensemble Using an Upper Bound}\label{app:4.1}

\subsection{Proof of Theorem 1}

We first provide the following preliminary result:

\begin{corollary}\label{cor:pairwise}
	The diversity terms $\mathbb{D}(\rho)$ defined in Theorem \ref{thm:decomposition} can be written as, 
	\begin{align*}
	\mathbb{D}_{\MSE}(\rho) & =  \E_{\rho^2}\Big[\E_\nu\Big[h_R(\bmx;\bmtheta)^2 - h_R(\bmx;\bmtheta)h_R(\bmx;\bmtheta')\Big]\Big]\\
	\mathbb{D}_{\zeroone}(\rho) & = \E_{\rho^2}\Big[\E_\nu\Big[\1(h(\bmx;\bmtheta)= y)\1(h(\bmx;\bmtheta')\neq y)\Big]\Big]\\
	\mathbb{D}_{\CE}(\rho) & = \E_{\rho^2}\left[\E_\nu\left[\frac{p( y \mid \bmx,\bmtheta)^2 - p(y\mid \bmx,\bmtheta)p(y \mid \bmx,\bmtheta')}{\displaystyle  2\max_{\bmtheta\in \bmTheta} p(y \mid \bmx,\bmtheta)^2}\right]\right]
	\end{align*}
	\noindent where $\rho^2$ is a shorthand for the product distribution $\rho \times \rho$ over $\bmTheta \times \bmTheta$ and the shorthand $\E_{\rho^2}[f(\bmtheta,\bmtheta')] = \E_{\bmtheta\sim\rho, \bmtheta'\sim\rho}[f(\bmtheta,\bmtheta')]$.
\end{corollary}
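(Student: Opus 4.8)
The plan is to start from the unified form $\mathbb{D}(\rho) = \E_\nu\big[\V_\rho(f(y,\bmx;\bmtheta))\big]$ recorded at the end of Theorem \ref{thm:decomposition} and turn the inner variance into a pairwise average over $\rho^2$. For any fixed $(\bmx,y)$ and any $\rho$-square-integrable $f$, the elementary identity $\V_\rho(f) = \E_\rho[f^2] - (\E_\rho[f])^2$, together with the fact that under the product measure $\rho^2=\rho\times\rho$ the two coordinates $\bmtheta,\bmtheta'$ are i.i.d.\ $\sim\rho$, gives
\[
\V_\rho\big(f(y,\bmx;\bmtheta)\big) \;=\; \E_{\rho^2}\big[f(y,\bmx;\bmtheta)^2 - f(y,\bmx;\bmtheta)\,f(y,\bmx;\bmtheta')\big].
\]
Since $\bmTheta$ is finite, $\E_{\rho^2}[\cdot]$ is a finite linear combination of $\E_\nu[\cdot]$ terms, each of which is finite (the $\zeroone$ and $\CE$ integrands are bounded, and in the $\MSE$ case each $\L_{\MSE}(\bmtheta)<\infty$ forces finite second moments), so the $\E_\nu$ and $\E_{\rho^2}$ expectations may be interchanged, yielding
\[
\mathbb{D}(\rho) \;=\; \E_{\rho^2}\Big[\E_\nu\big[f(y,\bmx;\bmtheta)^2 - f(y,\bmx;\bmtheta)\,f(y,\bmx;\bmtheta')\big]\Big].
\]

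The second step is to substitute the explicit $f$ attached to each loss in Theorem \ref{thm:decomposition}. For the $\MSE$-loss $f=h_R(\bmx;\cdot)$, which does not involve $y$, so the first displayed formula of the corollary drops out directly. For the $\CE$-loss $f = p(y\mid\bmx,\cdot)/\big(\sqrt2\,\max_{\bmtheta}p(y\mid\bmx,\bmtheta)\big)$; here the normalising maximum ranges over the whole set $\bmTheta$ and is therefore the same constant for the sampled $\bmtheta$ and $\bmtheta'$, so it factors out of the numerator and the third formula follows verbatim.

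The only case requiring a brief manipulation is the $\zeroone$-loss, with $f = \1(h(\bmx;\bmtheta)\neq y)$. Using $\1(\cdot)^2 = \1(\cdot)$ we have $f(\bmtheta)^2 - f(\bmtheta)f(\bmtheta') = \1(h(\bmx;\bmtheta)\neq y)\big(1 - \1(h(\bmx;\bmtheta')\neq y)\big) = \1(h(\bmx;\bmtheta)\neq y)\,\1(h(\bmx;\bmtheta')= y)$, and since $\rho^2$ is symmetric under swapping its two coordinates, this has the same $\E_{\rho^2}$-average as $\1(h(\bmx;\bmtheta)= y)\,\1(h(\bmx;\bmtheta')\neq y)$, which is the claimed expression. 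I do not foresee a genuine obstacle: the argument is just the ``variance as a pairwise expectation'' trick, a trivial interchange of a finite sum with an integral, and the $\zeroone$ symmetrisation; the only mild point to watch is the legitimacy of that interchange, which is immediate because $\bmTheta$ is finite.
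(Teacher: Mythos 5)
Your proposal is correct and follows essentially the same route as the paper: both rest on the identity $\V_\rho(f)=\E_{\rho^2}\big[f(\bmtheta)^2-f(\bmtheta)f(\bmtheta')\big]$ applied pointwise in $(\bmx,y)$ and then pushed through $\E_\nu$. If anything you are more careful than the paper's one-line argument, since you explicitly justify the interchange of $\E_\nu$ with $\E_{\rho^2}$ and spell out the indicator algebra $\1(h(\bmx;\bmtheta)\neq y)\,\1(h(\bmx;\bmtheta')=y)$ together with the swap-symmetry of $\rho^2$ needed to match the stated $\zeroone$ formula, details the paper leaves implicit.
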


\begin{proof}
	This Corollary raises from the fact that the variance of the classifiers can be decomposed as:
	\[
	\begin{aligned}
	Var_{\rho}(f(\bmtheta)) &= \E_\rho[f(\bmtheta)^2] - \E_\rho[f(\bmtheta)]^2 = \E_\rho[f(\bmtheta)^2] - \E_{\rho^2}[f(\bmtheta)f(\bmtheta')] \\
	&= \E_{\rho^2}\Big[f(\bmtheta)^2 - f(\bmtheta)f(\bmtheta')\Big],
	\end{aligned}
	\]
	where \(f\) is determined by the considered loss function: for the $\MSE$-loss, \(f(\bmtheta)= h_R(\bmx;\bmtheta)\), the $\CE$-loss, \( f(\bmtheta)=  p(y \mid \bmx,\bmtheta)\), and the $\zeroone$-loss \( f(\bmtheta) = \1(h(\bmx;\bmtheta)\neq y)\).
	
\end{proof}

\textbf{Theorem 1.} \emph{Under the settings given in Section \ref{sec:rw}, we have that}
\begin{eqnarray}
L_{\MSE}(\rho)  &=  & \E_\rho[\L_{\MSE}(\bmtheta)] - 	\mathbb{D}_{\MSE}(\rho), \label{eq:app:decomposition:mse}\\
L_{\CE}(\rho)  &\leq & \E_\rho[\L_{\CE}(\bmtheta)] - \mathbb{D}_{\CE}(\rho),\label{eq:app:decomposition:ce}\\
L_{\zeroone}(\rho)   &\leq& 4 (\E_\rho[\L_{\zeroone}(\bmtheta)] - \mathbb{D}_{\zeroone}(\rho)).\label{eq:app:decomposition:zeroone}
\end{eqnarray}
\noindent \emph{The diversity terms $\mathbb{D}(\rho)$ have the following expressions:}
\begin{eqnarray}
\mathbb{D}_{\MSE}(\rho) &=& \E_\nu\Big[\E_\rho\Big[\big(h_R(\bmx;\bmtheta) - \E_\rho\left[h_R(\bmx;\bmtheta)\right]\big)^2\Big]\Big],\\
\mathbb{D}_{\CE}(\rho) &=& \E_\nu\left[\frac{1}{\displaystyle  2\max_{\bmtheta\in \bmTheta} p(y|\bmx,\bmtheta)^2}\E_\rho\left[\left(p(y\mid\bmx,\bmtheta) - \E_\rho[p(y\mid\bmx,\bmtheta)]\right)^2\right]\right],\\
\mathbb{D}_{\zeroone}(\rho) &=& \E_\nu\Big[\E_\rho\Big[\big(\1(h_W(\bmx;\bmtheta)\neq y) - \E_\rho\left[\1(h_W(\bmx;\bmtheta)\neq y)\right]\big)^2\Big]\Big],
\end{eqnarray}
\noindent \emph{where $\mathbb{D}_{\CE}(\rho)$ is well-defined since $\max_{\bmtheta\in \bmTheta} p(y\mid\bmx,\bmtheta)\leq 1$.}

\begin{proof}
	Let us begin with the mse-error. Recall the definition of the expected mean squared error of a regression ensemble reparameterized by the distribution \( \rho \), that is,
	\[
	L_{mse}(\rho) = \mathbb{E}_\nu \left[(y-h_R(\bmx;\rho))^2\right],
	\]
	where \( h_R(\bmx; \rho) = \mathbb{E}_\rho \left[h_R(\bmx ; \bmtheta)\right] \) with \( h_R(\bmx ; \bmtheta) \) an individual regression model. We can get the desired result by expanding the square in each of the elements on the right hand side of the equation, that is:
	\[
	\begin{aligned}
	\mathbb{E}\left[L_{mse}(\bmtheta)\right] &= \mathbb{E}_{\rho, \nu} \left[(y - h_R(\bmx; \bmtheta))^2\right] =  \mathbb{E}_{\rho, \nu} \left[y^2 - 2yh_R(\bmx; \bmtheta) + h_R(\bmx ; \bmtheta)^2\right]\\
	&= \mathbb{E}_\nu \left[y^2 -2y h_R(\bmx;\rho) + \mathbb{E}_\rho [h_R(\bmx; \bmtheta)^2]\right],
	\end{aligned}
	\]
	where we used that \( y \) is constant under \( \mathbb{E}_\rho \). On the other hand,
	\[
	\begin{aligned}
	\mathbb{D}_{\MSE}(\rho) &= \mathbb{E}_{\nu,\rho} \left[  (h_R(\bmx, \bmtheta) - \mathbb{E}_\rho[h_R(\bmx;\bmtheta)])^2\right] = \mathbb{E}_{\nu,\rho} \left[  (h_R(\bmx, \bmtheta) - h_R(\bmx;\rho))^2\right]\\
	&= \mathbb{E}_{\nu,\rho} \left[ h_R(\bmx, \bmtheta)^2 -2h_R(\bmx, \bmtheta)h_R(\bmx, \rho) + h_R(\bmx;\rho)^2\right]\\
	&= \mathbb{E}_\nu \left[   \mathbb{E}_\rho[h_R(\bmx, \bmtheta)^2] -2h_R(\bmx, \rho)^2 + h_R(\bmx;\rho)^2 \right].
	\end{aligned}
	\]
	Finally, subtracting both expressions, we get that
	\[
	\begin{aligned}
	\E_\rho[\L_{\MSE}(\bmtheta)] - \mathbb{D}_{\MSE}(\rho)  &= \mathbb{E}_\nu \left[ y^2 - 2yh_R(\bmx; \rho) + h_R(\bmx; \rho)^2\right]\\
	&= \mathbb{E}_\nu \left[(y - h_R(\bmx; \rho))^2\right]\\
	&= L_{mse}(\rho).
	\end{aligned}
	\]
	Continuing with the cross-entropy error, we will be using the Taylor's theorem with a remainder of second order over the logarithm function. That is, given \( \log x \) and a fixed value \( a > 0\),
	\[
	\log x = \log a + \frac{1}{a}(x - a) - \frac{1}{2 \xi^2}(x - a)^2, \quad \xi \in (x, a).
	\]
	Applying this to \( p(y \mid \bmx, \bmtheta) \) centered at \( \E_\rho [p(y \mid \bmx, \bmtheta)] > 0 \),
	\[
	\begin{aligned}
	\log p(y \mid \bmx, \bmtheta) &= \log \E_\rho [p(y \mid \bmx, \bmtheta)] + \frac{1}{\E_\rho [p(y \mid \bmx, \bmtheta)]}\left( p(y \mid \bmx, \bmtheta)- \E_\rho [p(y \mid \bmx, \bmtheta)] \right) \\
	&\quad- \frac{1}{2\xi^2}\left(p(y \mid \bmx, \bmtheta) - \E_\rho [p(y \mid \bmx, \bmtheta)]\right)^2,
	\end{aligned}
	\] 
	taking expectation over \( \rho \) at both sides,
	\[
	\mathbb{E}_\rho [\log p(y \mid \bmx, \bmtheta)] = \log \E_\rho [p(y \mid \bmx, \bmtheta)] - \mathbb{E}_\rho \left[\frac{1}{2\xi^2}\left(p(y \mid \bmx, \bmtheta) - \E_\rho [p(y \mid \bmx, \bmtheta)]\right)^2\right].
	\]
	Rearranging terms,
	\[
	-\log \E_\rho [p(y \mid \bmx, \bmtheta)]= -\mathbb{E}_\rho [\log p(y \mid \bmx, \bmtheta)]- \mathbb{E}_\rho \left[\frac{1}{2\xi^2}\left(p(y \mid \bmx, \bmtheta) - \mathbb{E}_\rho [p(y \mid \bmx, \bmtheta)]\right)^2\right].
	\]
	The desired inequality raises from the fact that \( \xi \) is between \( p(y \mid \bmx, \bmtheta) \) and \( \E_\rho [p(y \mid \bmx, \bmtheta)] \), and hence, is upper bounded by \( max_{\bmtheta \in \bmTheta} p(y \mid \bmx, \bmtheta) \). Additionally, the square in the last term is always positive, implying the whole term is positive. Using this two properties,
	\[
	\begin{aligned}
	-\log \mathbb{E}_\rho [p(y \mid \bmx, \bmtheta)] &\leq -\mathbb{E}_\rho [\log p(y \mid \bmx, \bmtheta)]\\
	&\quad- \mathbb{E}_\rho \left[\frac{1}{2 \max_\bmtheta p(y \mid \bmx, \bmtheta)^2}\left(p(y \mid \bmx, \bmtheta) - \mathbb{E}_\rho [p(y \mid \bmx, \bmtheta)]\right)^2\right].
	\end{aligned}
	\]  
	Finally, taking expectations wrt \( \nu \) on both sides raises the desired result. Lastly, let us consider the $0/1$-error. In order to prove this result, we are using Markov's inequality for monotonically increasing functions, in this case, for \( \psi(a) = a^2 \). That is, for a given random variable \( X \), 
	\[
	\mathbb{P}(|x| \geq a) \leq \frac{\mathbb{E}[\psi(|x|)]}{\psi(a)} = \frac{\mathbb{E}[|x|^2]}{a^2}, \quad \text{for any } a > 0.
	\]
	Applying this theorem to \( \mathbb{E}_\rho\left[ \1(h_W(\bmx ; \bmtheta) \neq y)\right] \) we get that
	\[
	\mathbb{P}\left(  \mathbb{E}_\rho \left[\1 \left(h_W(\bmx; \bmtheta) \neq y\right)\right] \geq 0.5 \right) \leq 4 \mathbb{E}_\nu\left[ \mathbb{E}_\rho \left[\1 \left(h_W(\bmx; \bmtheta) \neq y\right)\right]^2\right].
	\]
	We may notice that if majority vote makes an error, at least half (\( \rho \)-weighted) of the classifiers are wrong, that is
	\[
	\1(h_W(\bmx ; \rho) \neq y) \leq \1[\mathbb{E}_\rho[\1h_W(\bmx;\bmtheta) \neq y] \geq  0.5],
	\]
	which implies
	\[
	\begin{aligned}
	\mathbb{E}_\nu \left[ \1 \left(h_W(\bmx ; \rho) \neq y\right) \right] &\leq \mathbb{E}_\nu \left[ \1 \left( \mathbb{E}_\rho\left[\1 \left(h_W(\bmx;\bmtheta) \neq y \right)\right] \geq  0.5\right) \right] = \\
	& =  \mathbb{P}\left(  \mathbb{E}_\rho \left[\1 \left(h_W(\bmx; \bmtheta) \neq y\right)\right] \geq 0.5 \right).
	\end{aligned}
	\]
	Using the derived inequality of the last term,
	\[
	L_{0/1}(\rho) = \mathbb{E}_\nu \left[ \1 \left(h_W(\bmx ; \rho) \neq y\right) \right] \leq 4 \mathbb{E}_\nu\left[ \mathbb{E}_\rho \left[\1 \left(h_W(\bmx; \bmtheta) \neq y\right)\right]^2\right]. 
	\]
	In order to conclude the proof, we need to show that the right hand side of the inequality is the desired upper bound of the \( 0/1 \) loss. Using that \( Var(X) = \mathbb{E}[(X - \mathbb{E}[X])^2] = \mathbb{E}[X^2] - \mathbb{E}[X]^2\) over \( \1(h_W(\bmx;\bmtheta)\neq y) \), we get that   
	\[
	\begin{aligned}
	\mathbb{D} _{0/1}(\rho)) &= \E_\nu\left[\E_\rho\left[(\1(h_W(\bmx;\bmtheta)\neq y) - \E_\rho\left[\1(h_W(\bmx;\bmtheta)\neq y)\right])^2\right]\right])\\
	&= \E_\nu\left[\E_\rho\left[\1(h_W(\bmx;\bmtheta)\neq y)\right] -\E_\rho \left[\1(h_W(\bmx;\bmtheta)\neq y)\right]^2 \right]\\
	&= \E_\rho\Big[\E_\nu\left[\1(h_W(\bmx;\bmtheta)\neq y)\right]\Big] -\E_\nu \left[\E_\rho \left[\1(h_W(\bmx;\bmtheta)\neq y)\right]^2 \right].
	\end{aligned}
	\]
	Which implies that 
	\[
	\begin{aligned}
	4\left(\mathbb{E}_\rho[L_{0/1}(\bmtheta)] - \mathbb{D} _{0/1}(\rho)\right) &= 4\Big(\mathbb{E}_\rho\left[ \E_\nu \left[\1 (h_W(\bmx; \bmtheta) \neq y)\right]\right] -  \mathbb{D} _{0/1}(\rho)\Big)\\
	&= 4\E_\nu \left[\E_\rho \left[\1(h_W(\bmx;\bmtheta)\neq y)\right]^2 \right].
	\end{aligned}
	\]
\end{proof}

\subsection{Tighter variant of Theorem 1 inequality}\label{subsec:tighter_ineq}

The following result demonstrates how to define a tighter second-order Jensen bound for the cross-entropy error using the Jensen inequality stated in \citep{liao2019sharpening}. 

\textbf{Theorem}. Any distribution $\rho$ over $\bmTheta$ satisfies the following inequality, 
\[
L_{ce}(\rho)\leq \mathbb{E}_{\rho(\bmtheta)}[L(\bmtheta)] - \mathbb{V}^T_{ce}(\rho),
\]
where  $\mathbb{V}^T_{ce}(\rho)$ is the normalized variance of $p(y \mid \bmx, \bmtheta)$ wrt $\rho(\bmtheta)$,
\[
\mathbb{V}^T_{ce}(\rho) = \mathbb{E}_{\nu}\Big[h(m,\mu)\mathbb{E}_{\rho(\bmtheta)}\Big[(p(y \mid \bmx, \bmtheta) - p(y))^2\Big]\Big].
\]
Where $\mu=\mathbb{E}_{\rho}[p(y \mid \bmx, \bmtheta)]$, $m= \max_\bmtheta p(y \mid\bmx, \bmtheta)$ and $h(m,\mu) = \frac{\ln \mu - \ln m }{(m - \mu)^2} + \frac{1}{\mu(m - \mu)}$.
\begin{proof}[Proof sketch]
	Apply \citep{liao2019sharpening}'s result to the random variable $p(\bmx \mid \bmtheta)$, following the same strategy used in the proof of Theorem \ref{thm:decomposition}.
\end{proof}

\section{How to Measure the Diversity of an Ensemble?}\label{app:4.2}

\subsection{Proof of Theorem \ref{theorem:diversity_as_variance}}

\textbf{Theorem \ref{theorem:diversity_as_variance}}. The diversity terms $\mathbb{D}(\rho)$ defined in Theorem \ref{thm:decomposition} can be written as
\[
    \mathbb{D} (\rho) = \V_{\nu\times\rho}\Big(f(y,\bmx;\bmtheta)\Big) - \E_{\rho\times\rho}\Big[Cov_{\nu}(f(y,\bmx;\bmtheta),f(y,\bmx;\bmtheta'))\Big] 
\]
    \noindent were $\rho\times\nu$ denotes the joint distribution over $\bmTheta \times({\cal\mathbf{X}},{\cal Y})$ and \(f\) is completely determined by the considered loss function.

\begin{proof}
    \begin{align*}
         \mathbb{D} (\rho) &= \E_{\nu}\Big[ \V_{\rho}\big[f(y, \bmx; \bmtheta)\big] \Big]\\
         &= \E_\nu \Big[ \E_{\rho^2} \big[f(y, \bmx; \bmtheta)^2 - f(y, \bmx ;\bmtheta)f(y, \bmx ;\bmtheta')\big] \Big]\\
         &= \E_\nu \Big[ \E_{\rho} \big[f(y, \bmx; \bmtheta)^2\big] - \E_{\rho^2} \big[f(y, \bmx ;\bmtheta)f(y, \bmx ;\bmtheta')\big] \Big]\\
         &= \V_{\nu \times \rho}\big[ f(y, \bmx; \bmtheta) \big] + \E_{\nu \times \rho}\big[ f(y, \bmx; \bmtheta) \big]^2- \E_{\nu \times \rho^2} \big[f(y, \bmx ;\bmtheta)f(y, \bmx ;\bmtheta')\big] \Big]\\
         &=\V_{\nu\times\rho}\big[f(y,\bmx;\bmtheta)\big] - \E_{\rho\times\rho}\Big[Cov_{\nu}\big(f(y,\bmx;\bmtheta),f(y,\bmx;\bmtheta')\big)\Big]
    \end{align*}
\end{proof}

\subsection{Proof of Lemma 3}

\textbf{Lemma 3.} The diversity terms $\mathbb{D}(\rho)$ defined in Theorem \ref{thm:decomposition} satisfy the following properties:
\begin{enumerate}[i)]
	\item If all the ensemble’s members provide the same predictions or places all its probability mass in a single model, then $\mathbb{D}(\rho)$ is null.
	\item $0\leq \mathbb{D}(\rho) \leq \E_\rho[\L(\bmtheta)]$
	\item $\mathbb{D}(\rho)$ is invariant to reparametrizations. 
\end{enumerate}

\emph{Proof.}
\begin{enumerate}[i)]
	\item Notice that if all models make the same prediction, \( h(\bmx; \bmtheta) = \mathbb{E}_\rho[h(\bmx ; \bmtheta)] \)  and \( p(y \mid \bmx, \bmtheta) = \E_\rho[p(y \mid \bmx, \bmtheta)] \), which nullifies all diversity definitions from Theorem \ref{thm:decomposition}.
	\item Using that every considered loss function and variance are positive, the given inequality is trivial.
	\item Let \(\phi: \bm{\Omega} \to \bmTheta\) be an injective differentiable function with continuous partial derivatives, with non-zero Jacobian at any point. This result follows from the fact that we are considering a probability distribution that is a finite mixture of delta distributions, as a result, the distribution is compactly supported and the variable change theorem can be applied to a continuous function \( f: \bmTheta \to \mathbb{R} \):
	\[
	\begin{aligned}
	\E_\rho[f(\bmtheta)] &= \int_\bmTheta \rho (\bmtheta) f(\bmtheta) = \int_{\bm\Omega} \rho \circ \phi(\bm\omega) \ f \circ \phi(\bm\omega)\ |det(D\phi)(\bm\omega)| \\
	&= \E_{\rho '}[f \circ \phi (\bm\omega)]
	\end{aligned}
	\]
	where
	\[
	\rho '(\bm\omega) = \rho \circ \phi(\bm\omega)\ |det(D\phi)(\bm\omega)|.
	\]
	The result follows from taking \(f(\bmtheta) = (\bmtheta - \E_\rho (\bmtheta))^2\).
	
\end{enumerate}

\subsection{Pairwise diversity measures}

Note that Corollary \ref{cor:pairwise} shows how the provided diversity measures are pairwise diversity measures \citep{kuncheva2003measures} because they only consider interactions among pair of models.

\section{How is Diversity Related to the Performance of an Ensemble?}\label{app:4.3}

\subsection{Empirical diversity satisfies the same properties as the theoretical}
Let us discuss every point in Lemma 2 from an empirical point of view:

\begin{enumerate}[i)]
	\item It is clear that if all models make the same prediction or place the same probabilities, the empirical diversity is zero.
	\item In order to show this, the same arguments used for the theoretical diversity must be applied to the empirical one, using an empirical version of Theorem 1 that reduces to take the expectation over the empirical distribution at each step.
	\item To show that invariance beholds, we can redo the same proof using the empirical expectation. In this case, given that the empirical probability is compactly supported, the variable change theorem holds.
\end{enumerate}

\subsection{Proof of Theorem 5}

\textbf{Theorem 5 (PAC-Bayes bounds)}. \emph{For any prior distribution $\pi$ over $\bmTheta$ independent of $D$ and for any $\xi\in (0,1)$ and any $\lambda>0$, with probability at least $1-\xi$ over draws of training data $D\sim \nu^n(\bmx,y)$, for all distribution $\rho$ over $\bmTheta$, simultaneously,}
\[
L(\rho) \leq \alpha\left(\E_\rho[\hat L(\bmtheta,D)] - \hat{\mathbb{D}}(\rho,D) + \frac{2\kl{\rho}{\pi}+   \epsilon(\nu,\pi,\lambda,n,\xi)}{\lambda\, n}\right),
\]
\noindent \emph{where $\alpha$ is equal to $1$ if we consider the $\MSE$-loss or the $\CE$-loss, and $4$ is we consider the $\zeroone$-loss.
	$KL$ refers to the Kullback-Leibler divergence between $\rho$ and the prior $\pi$. And $\epsilon(\nu,\pi,\lambda,n,\xi)>0$ is a function, which is independent of $\rho$ but also depends on the specific loss.}

\begin{proof}
	In order to prove this theorem, we are going to show that the rhs term is an upper bound for \( \alpha(\E_\rho[L(\bmtheta)] - \mathbb{D}(\rho))\), which, using Theorem~\ref{thm:decomposition} concludes the proof. First of all, consider the following \emph{tandem} losses:
	\begin{align*}
	L_{mse}(\bmtheta, \bmtheta') & = L_{mse}(\bmtheta)-\E_\nu\Big[h_R(\bmx;\bmtheta)^2 - h_R(\bmx;\bmtheta)h_R(\bmx;\bmtheta')\Big]\\
	L_{\zeroone}(\bmtheta, \bmtheta') & = L_{\zeroone}(\bmtheta) - \E_\nu\Big[\1(h(\bmx;\bmtheta)= y)\1(h(\bmx;\bmtheta')\neq y)\Big]\\
	L_{ce}(\bmtheta, \bmtheta') & = L_{ce}(\bmtheta)- \E_\nu\left[\frac{p(y \mid \bmx,\bmtheta)^2 - p(y \mid \bmx,\bmtheta)p(y \mid \bmx,\bmtheta')}{\displaystyle  2\max_{\bmtheta\in \bmTheta} p(y \mid \bmx,\bmtheta)^2}\right]
	\end{align*}
	which, using Corollary~\ref{cor:pairwise} verifies
	\[
	\E_{\rho^2}[L(\bmtheta, \bmtheta')] =  \E_\rho[L(\bmtheta)] - \mathbb{D}(\rho).
	\]
	Applying \citet[Theorem 3]{germain2016pac} to the tandem loss functionals described above with a prior distribution \(\pi(\bmtheta, \bmtheta') = \pi(\bmtheta)\pi(\bmtheta')\), we got that for any \(\lambda n > 0, \delta \in (0, 1]\), with probability at least \(1 - \delta\):
	\[
	\E_{\rho(\bmtheta, \bmtheta')}[ L(\bmtheta, \bmtheta')] \leq \E_{\rho(\bmtheta, \bmtheta')}[ \hat{L}(\bmtheta, \bmtheta')] + \frac{1}{\lambda n} \left[ \kl{\rho(\bmtheta, \bmtheta')}{\pi(\bmtheta, \bmtheta')}  + \epsilon(\nu,\pi,\lambda,n,\xi) \right].
	\]
	Where 
	\begin{equation}\label{eq:epsilon}
	\epsilon(\nu,\pi,\lambda,n,\xi) = \log \E_{\pi(\bmtheta, \bmtheta')} \left[\E_\nu\left[\exp \left(\lambda\left(L(\bmtheta, \bmtheta') - \hat{L}(\bmtheta, \bmtheta', D)\right)\right)\right]\right] + \log \frac{1}{\delta},
	\end{equation}
	
	and \(\kl{\rho(\bmtheta, \bmtheta')}{\pi(\bmtheta, \bmtheta')} = 2\kl{\rho}{\pi}\). In short, we got that
	\[
	L(\rho) \leq \alpha\left( \E_\rho[\hat{L}(\bmtheta, D)] - \hat{\mathbb{D}}(\rho, D) + \frac{1}{\lambda n} \Big[ 2\kl{\rho}{\pi} + \epsilon(\nu,\pi,\lambda,n,\xi) \Big]\right).
	\]
\end{proof}

\subsection{\texorpdfstring{$\epsilon(\nu,\pi,\lambda,n,\xi)$}\ \ in Theorem 5}

The general expression of \(\epsilon(\nu,\pi,\lambda,n,\xi)\) is described in Equation~\eqref{eq:epsilon}. It can be contextualized for the $\MSE$-loss, the $\CE$-loss and the $\zeroone$-loss by instantiating $\L(\bmtheta,\bmtheta')$ correspondingly, as shown in the beginning of the proof of Theorem 5. 

\section{How to Exploit Diversity to Learn Ensembles}\label{app:4.4}

\subsection{Working with a Finite Parameter space}

The assumption of finite parameter space is not restrictive in this case. We have to consider that if $\bmTheta=\{\bmtheta_1,\ldots,\bmtheta_K\}$, $K$ can be a very large number. Potentially, $\bmTheta$ could contain all finite-precision vectors of size $M$. In any case, the distribution $\rho$ will assign positive probabilities to only those models which are part of the ensemble. 

\subsection{Working with a Continuous Parameter Space}

The only point in this work where considering a continuous parameter space would alter the used reasoning is at Lemma 2, mode precisely the reparameterization invariance property of the considered diversity formulas. The key points is that if \(\bmTheta\) is a continuous non-compact set, such as \(\Re^M\) for a given \(M \in \mathbb{N}\), the change of variable theorem cannot be applied. In order to surpass this difficulty, we could always consider  
\[
\bmTheta = \{\bmtheta\in \Re^M\ : \ \|\bmtheta\|_2 \leq N \} \subset \Re^M.
\]
With \(N\) the highest norm of any representable number within the considered finite-precision of the machine.

\subsection{Mixtures of multivariate Gaussian Distributions approximation}\label{app:MoG}

As detailed in Section~\ref{sec:diversity:learning}, we consider an approach with an uniform Gaussian mixture, denoted by \(\rho_\delta\), to represent an ensemble of \(K\) models. That is, the following distribution models the parameters, given a fixed set of mean values \((\bmtheta_1, \dots, \bmtheta_K)\):
\[
\rho_\delta(\bmtheta) = \frac{1}{K}\sum_{k=1}^{K} \mathcal{N}(\bmtheta; \ \bmtheta_k, \epsilon I).
\]
Using this distribution, the expected value of the loss function is
\[
\E_{\rho_\delta}[\hat{L}(\bmtheta,D)] = \int_\bmtheta \frac{1}{K}\sum_{k=1}^{K} \mathcal{N}(\bmtheta; \ \bmtheta_k, \epsilon I) \hat{L}(\bmtheta,D = \frac{1}{K}\sum_{k=1}^{K} \int_\bmtheta  \mathcal{N}(\bmtheta; \ \bmtheta_k, \epsilon I) \hat{L}(\bmtheta,D).
\]
We then use the following approximation approach to simplify the above expression: given that \(\epsilon\) is sufficiently small, we can approximate the expected value of a function over a highly sharp distribution around its mean, with the evaluation of such function in the mean value of the distribution. More precisely,
\begin{equation}\label{eq:approximateGaussian}
\int_\bmtheta  \mathcal{N}(\bmtheta; \ \bmtheta_k, \epsilon I) f(\bmtheta)\approx f(\bmtheta_k) \quad \forall k =1,\dots,K.    
\end{equation}
Using that, the expected value of the loss can be approximated as
\[
\E_{\rho_\delta}[\hat{L}(\bmtheta,D)] = \frac{1}{K}\sum_{k=1}^{K} \int_\bmtheta \mathcal{N}(\bmtheta; \ \bmtheta_k, \epsilon I) \hat{L}(\bmtheta,D) \approx \frac{1}{K}\sum_{k=1}^K\hat{L}(\bmtheta_k,D).
\]
The same reasoning can be applied to the regularization KL term, along with another approximation:
\begin{equation}\label{eq:approximateGaussianKL}
\sum_{i=1}^{K} \mathcal{N}(\bmtheta_k; \ \bmtheta_i, \epsilon I)\approx \mathcal{N} (\bmtheta_k; \bmtheta_k, \epsilon I) = \frac{1}{\sqrt{(2\pi)^M \epsilon^M}}, \quad \forall k =1,\dots,K,
\end{equation}
with \(M\) the dimensionality of \(\bmtheta\). The main idea behind this is to assume that the considered mean values \((\bmtheta_1,\dots, \bmtheta_K )\) are far enough from each other so that for any pair \((\bmtheta_i, \bmtheta_k), k \neq i\) evaluating a Gaussian distribution centered in one of them \(\bmtheta_i\), with covariance matrix \(\epsilon I \) over the other value \(\bmtheta_k\), is approximately zero. This is not a strong assumption given that we can fix the value of \(\epsilon\) to any value, more precisely, we could set it so that the minimum euclidean distance between any pair of \((\bmtheta_1,\dots, \bmtheta_K )\) is greater than \(3\epsilon\), which contains the \(99.7\%\) of the density of the distribution.

As a result, the following approximation raises for the regularization term:
\[
\begin{aligned}
\kl{\rho_\delta}{\pi} &= \int_{\bmtheta} \rho_\delta(\bmtheta) \log \frac{\rho_\delta(\bmtheta)}{\pi(\bmtheta)} \\
&= \frac{1}{K}\sum_{k=1}^{K} \int_{\bmtheta} \mathcal{N}(\bmtheta; \ \bmtheta_i, \epsilon I) \left(\log \frac{1}{K}\sum_{i=1}^{K} \mathcal{N}(\bmtheta; \ \bmtheta_i, \epsilon I) - \log \pi(\bmtheta)\right) \\
\text{(by Equation \eqref{eq:approximateGaussian})}&\approx \frac{1}{K}\sum_{k=1}^{K}\left(\log \frac{1}{K}\sum_{i=1}^{K} \mathcal{N}(\bmtheta_k; \ \bmtheta_i, \epsilon I) - \log \pi(\bmtheta_k)\right)\\
\text{(by Equation \eqref{eq:approximateGaussianKL})}&\approx  \frac{1}{K}\sum_{k=1}^{K}\left(\log \frac{1}{K}\mathcal{N}(\bmtheta_k; \ \bmtheta_k, \epsilon I) - \log \pi(\bmtheta_k)\right)\\
&= -\frac{1}{K} \sum_{k=1}^K \log \pi(\bmtheta_k) + \frac{1}{K}\sum_{k=1}^K\log \frac{1}{K} \frac{1}{\sqrt{(2\pi)^M \epsilon^M}} \\
\end{aligned}
\]

We also apply the approximation given by Equation~\eqref{eq:approximateGaussian} over the general variance formula:

\[
\begin{aligned}
\hat{V}_{\rho_\delta}(f(\bmtheta)) &= \E_{\rho_\delta^2}\Big[f(\bmtheta)^2 - f(\bmtheta)f(\bmtheta')\Big] = \E_{\rho_\delta}\Big[f(\bmtheta)^2\Big] - \E_{\rho_\delta^2}\Big[f(\bmtheta)f(\bmtheta')\Big] \\
&= \frac{1}{K}\sum_{k=1}^K f(\bmtheta_k)^2 - \frac{1}{K^2}\sum_{i=1}^K \sum_{j=1}^K f(\bmtheta_i) f(\bmtheta_j).\\
\end{aligned}
\]
Given this, it is easy to approximate each of the diversity terms defined in Corollary~\ref{cor:pairwise}. Where \(f\) is determined by the considered loss function: for the $\MSE$-loss, \(f(\bmtheta)= h_R(\bmx;\bmtheta)\), the $\CE$-loss, \( f(\bmtheta)=  p(y \mid \bmx,\bmtheta)\), and the $\zeroone$-loss \( f(\bmtheta) = \1(h(\bmx;\bmtheta)\neq y)\). 

\subsection{Ensemble Learning Algorithms Which Explicitly Promote Diversity}\label{app:ensemblespromotingdiversity}

\paragraph{Negative Correlation Learning \citep{liu1999ensemble}:} This learning algorithm proposes the following minimization objective, which is written following our notation:
\[\E_\rho[\hat \L_{\MSE}(\bmtheta,D)] - \lambda\underbrace{\E_D[\frac{1}{K}\sum_{k=1}^K (h_R(\bmx; \bmtheta_k) - h_R(\bmx;\rho))\sum_{j\neq  k}(h_R(\bmx; \bmtheta_j) + h_R(\bmx;\rho))]}_{\text{NC}(\rho,D)} \]
\noindent where $\lambda\in[0,1]$, $\E_D[\cdot]$ denotes expectation wrt the empirical distribution of the data, and $\text{NC}(\rho,D)$ denotes the empirical negative correlation which promotes diversity. After simple algebraic manipulations, we can prove that $\text{NC}(\rho,D)=-\hat{\mathbb{D}}_{\MSE}(\rho,D)$. So, this algorithm matches our learning algorithm for the $\MSE$-loss. In consequence, our work provides a novel interpretation of the negative correlation ensemble learning algorithm \citep{liu1999ensemble} based on PAC-Bayesian bounds.

\paragraph{Generalized Ambiguity Decomposition \citep{jiang2017generalized}:} This work tries to extend \citet{krogh1995validation}'s decomposition to general loss functions. It employs a similar decomposition of the loss function in individual model errors and ensemble diversity. But their decomposition is not based on upper bounds. It only matches our decomposition for the $\MSE$-loss. This work does not relate the generalization error of the ensemble with the empirical diversity, as we do through PAC-Bayesian upper bounds. They do not consider multi-class classification problems. And the $\zeroone$-loss decomposition does not include a diversity term, and they do not consider weighted majority vote.

\paragraph{Generalized Negative Correlation Learning \citep{buschjager2020generalized}:} This work mainly builds on the decompositions given by \cite{jiang2017generalized}. They first arrive to a learning objective which includes a diversity term. This diversity term is different from the ones presented here. They do not consider the $\zeroone$-loss and weighted majority vote. However, they disregard the learning objective including a diversity term and advocate for a learning objective of the form: $\lambda \hat{\L}(\rho,D) + (1-\lambda)\E_\rho[\hat{\L}(\bmtheta,D)]$ with $\lambda\in[0,1]$.

\subsection{Standard Ensemble Learning Algorithms Do Not Explicitly Promote Diversity}

Standard learning algorithms can be interpreted as methods trying to minimize the following objective function:
\[\E_{\rho_\delta}[\hat\L(\bmtheta,D)] + \frac{\kl{\rho}{\pi}}{\lambda n}\]
where either the $\CE$-loss or the $\MSE$-loss are employed. 

This learning objective does not include the $\hat{\mathbb{D}}(\rho,D)$ term encouraging diversity. In fact, under the approximations discussed in Appendix \ref{app:MoG} and discarding constant terms, this learning objective can be expressed as:
\[\frac{1}{K}\sum_k \big(\hat\L(\bmtheta_k,D) - \frac{\ln \pi(\bmtheta_k)}{\lambda n}\big) \]
\noindent where each $\bmtheta_k$ can be learned independently from the rest due to the presence of the $\hatV(\rho,D)$ term. 

\section{Experimental Evaluation}\label{app:experiments}

\subsection{Experimental Settings}

The experimental evaluation was carried under Google Colab Pro (\href{https://colab.research.google.com}{colab.research.google.com})
 in an environment with 8 TPU cores. The Python packages used are those listed below (together with the corresponding dependencies): 

\begin{itemize}
	\item   Google Uncertainty Metrics \footnote{\url{https://github.com/google/uncertainty-baselines}}.
	\item	edward development version \footnote{\url{https://github.com/google/edward2/tree/f36188e58bd6e0454f551fed6b29beded1b777ad}}
	\item	fsspec 2021.5.0
	\item	gcsfs 2021.5.0
	\item	robustness-metrics development version \footnote{\url{https://github.com/google-research/robustness_metrics/tree/5a380c4ec11fa85255b5db643c64efa42d749110}}
	\item	tensorflow 2.4.1
	\item	tensorflow-datasets 4.0.1
	\item	tensorflow-estimator 2.4.0
	\item	tensorflow-gcs-config 2.4.0
	\item	tensorflow-hub 0.12.0
	\item	tensorflow-metadata 0.30.0
	\item	tensorflow-probability 0.12.1
\end{itemize}

The set of hyper-parameters considered in the experimentation are shown in Table~\ref{table:hyp}.

\begin{table}
	\centering
	\begin{tabular}{r|ccc}
		Hyper-parameter & LeNet5 & ResNet20 & MLP50 \\\hline			
		base learning rate	&	0.001	&	0.1	&	0.001	 \\ 
		epochs	&	200	&	250	&	250	 \\ 
		l2 Regularization	&	$2\cdot10^{-4}$	&	$2\cdot10^{-4}$		&	$2\cdot10^{-4}$		 \\ 
		learning rate decay	&	[60,120,160]	&	[60,120,160]	&	[60,120,160]	 \\ 
		per core batch size	&	64	&	64	&	32\\
	\end{tabular}
	\vspace*{0.5cm}
	\caption{Hyperparameters for each model.}\label{table:hyp}
\end{table}

\begin{figure}[!htb]
	\centering
	\begin{tabular}{cc}
		\includegraphics[scale = 0.4]{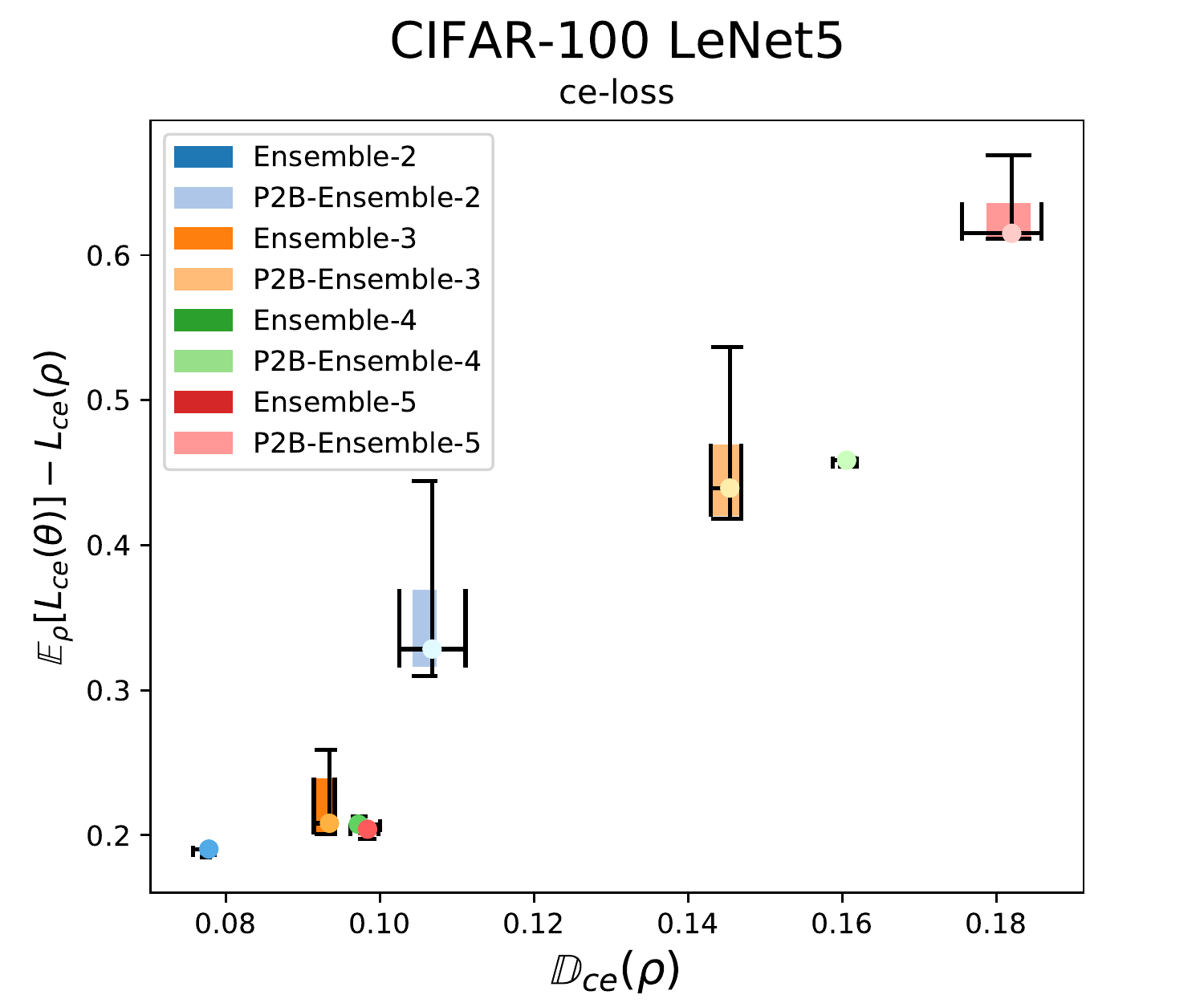}
		&  
		\includegraphics[scale = 0.4]{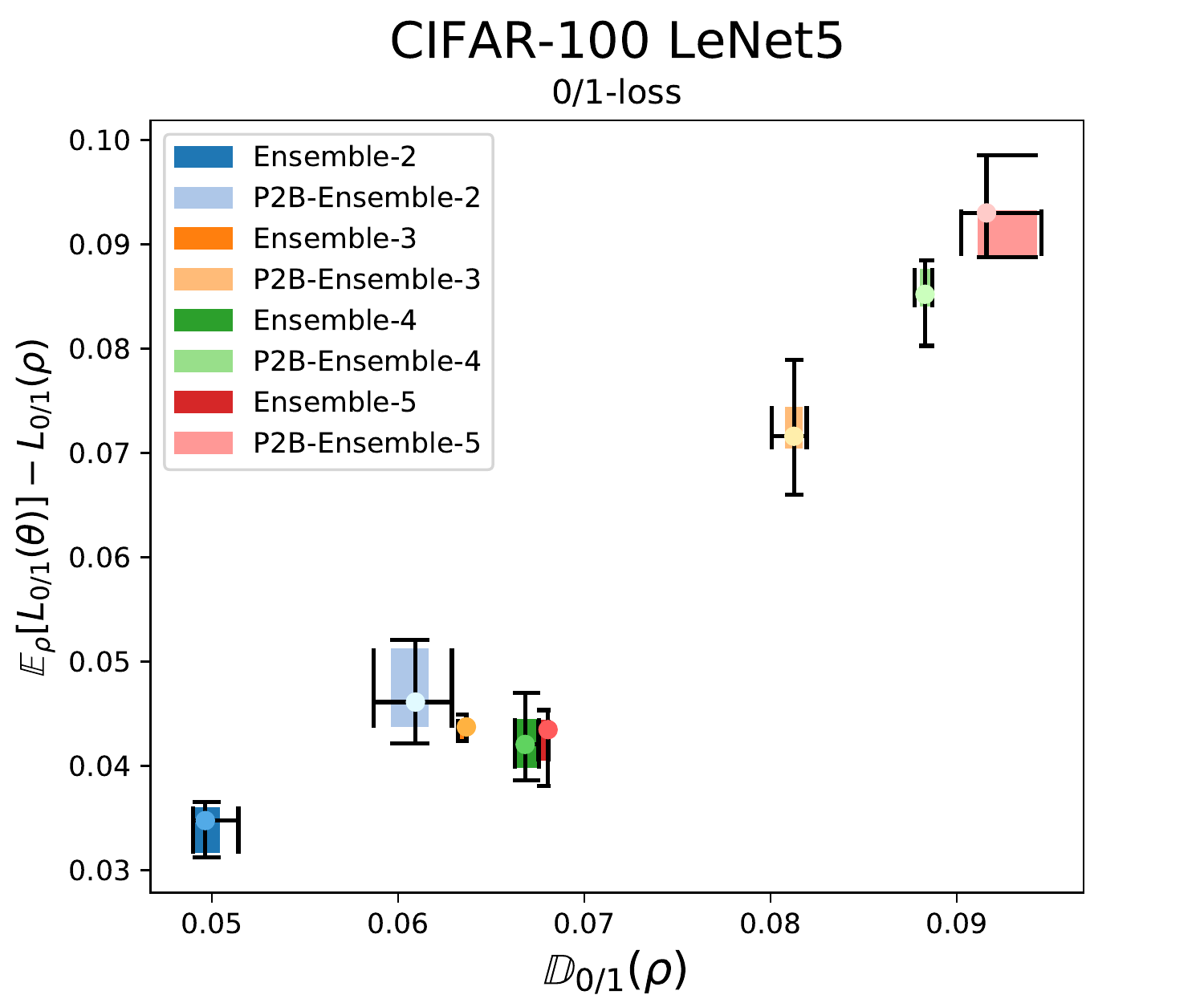}
		\\
		\includegraphics[scale = 0.4]{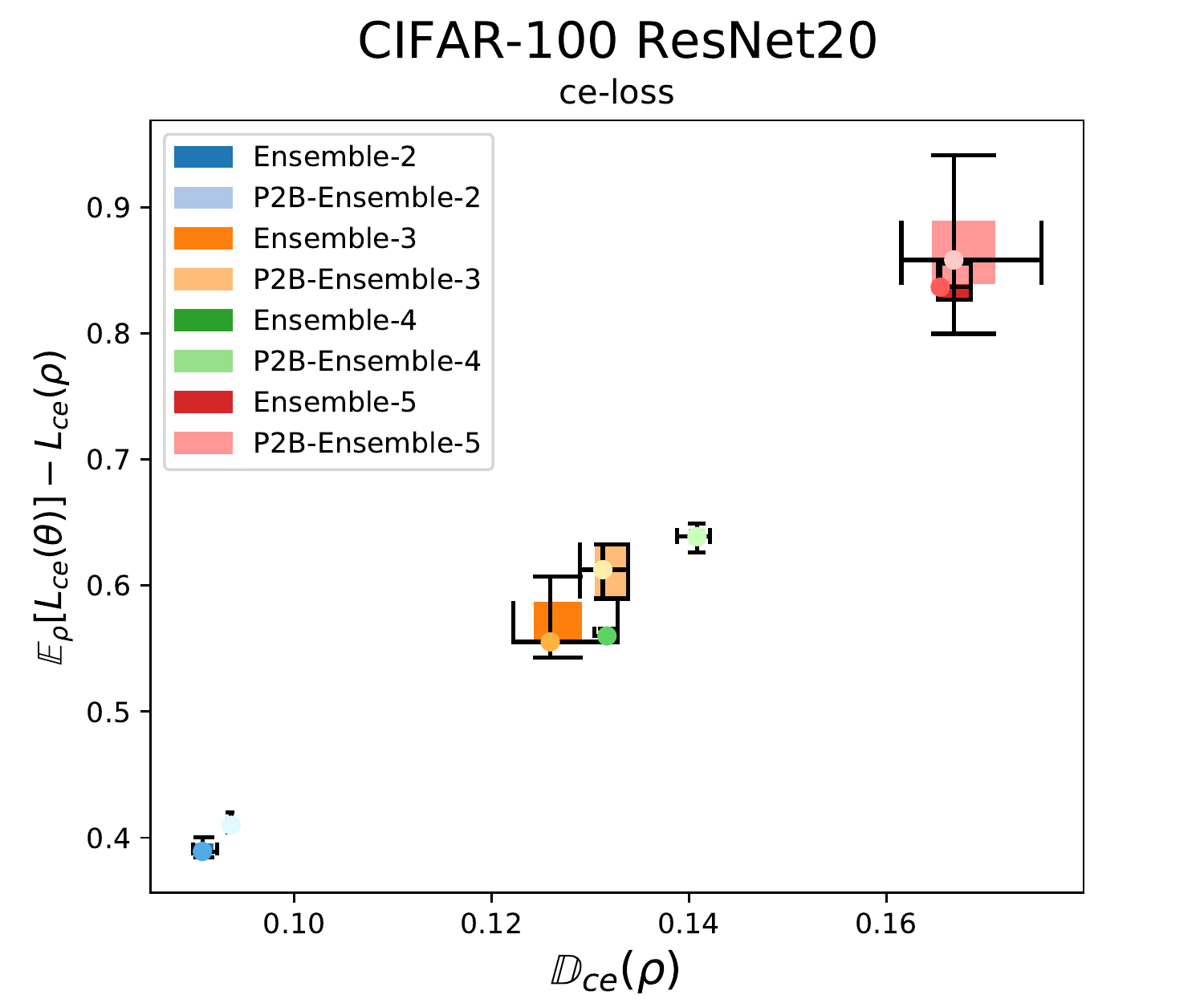}
		&
		\includegraphics[scale = 0.4]{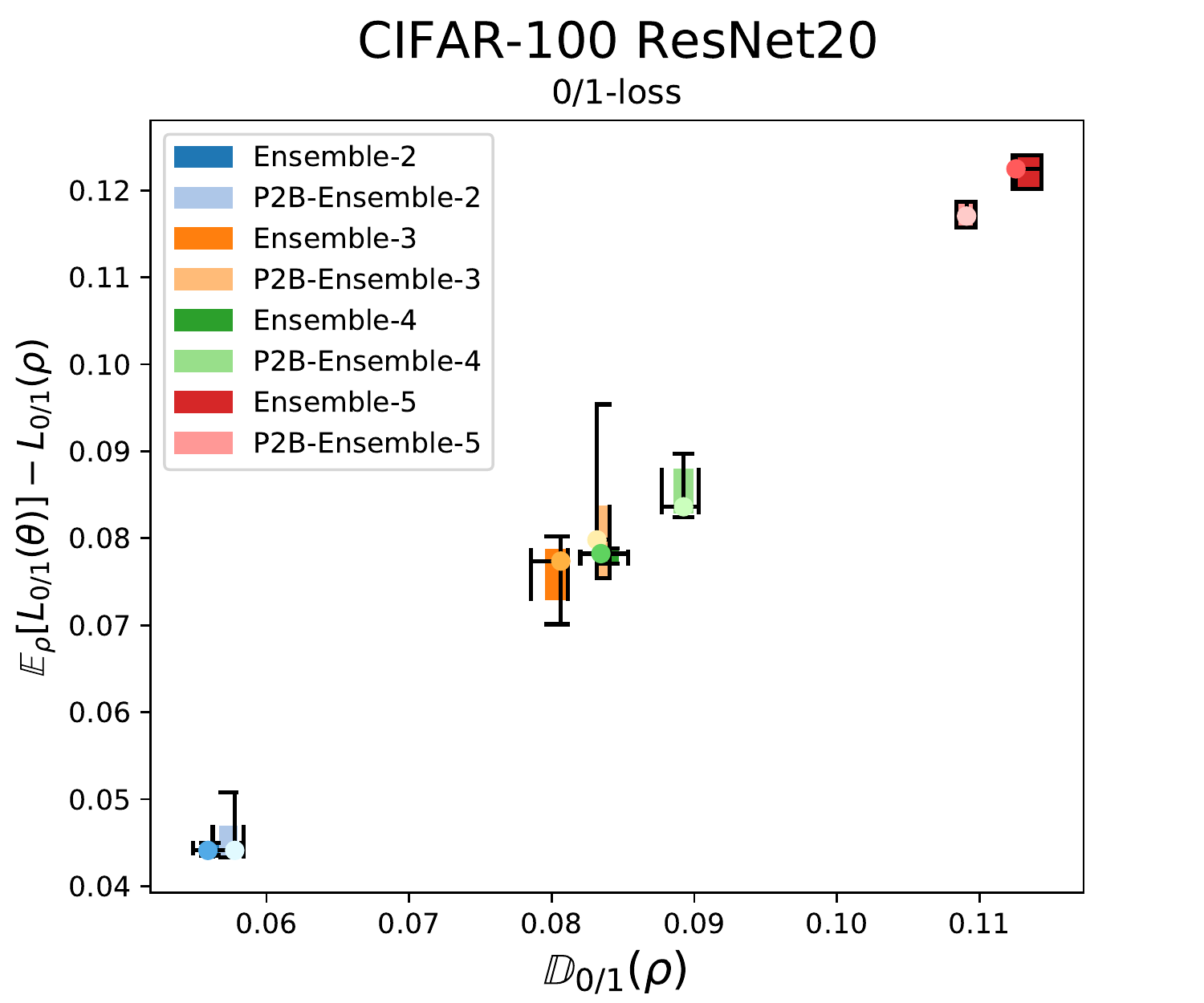}
		\\
		\multicolumn{2}{c}{\includegraphics[scale = 0.4]{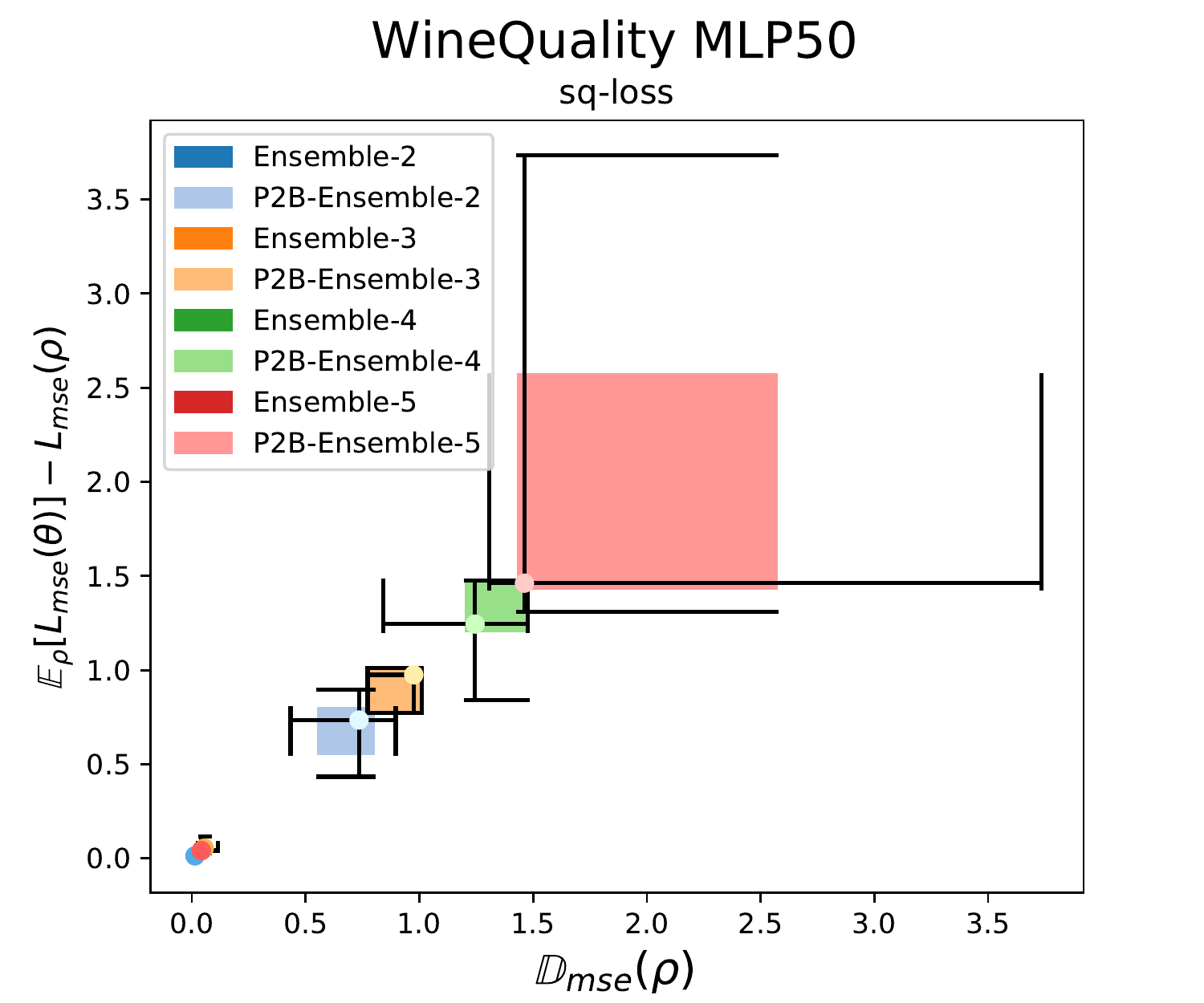}}
	\end{tabular}
	\caption{Each box-plot represents a set of ensemble models learned with the same algorithm and a different ensemble size. The X-axis represents the diversity of the ensemble and the Y-axis represents the gap between the loss of the individual ensemble models and the loss of the ensemble (a positive gap indicates that the ensemble performs better than the individual models). }\label{fig:collorary:gap:extra}
\end{figure}

\subsection{Experiments with different ensemble sizes}

Figure~\ref{fig:collorary:gap:extra} is an extension of Figure \ref{fig:collorary:gap} where different ensemble sizes are considered. Here we can see again how those ensembles with higher diversity $\mathbb{D}(\rho)$ present a higher gap between the average performance of the individual models $\E_\rho[L(\bmtheta)]$ and the performance of the ensemble $\L(\rho)$, as stated in Corollary \ref{collorary:gap}. This figure also shows that, for LeNet5 and MLP50, \textit{Ensemble} do not get a significant gain in diversity by increasing the size of the ensemble. But, for ResNet20, \textit{Ensemble} steadily increases diversity by increasing the ensemble size. We hypothesize that for simple networks random intialization is not as effective to capture different modes as happens with complex neural networks. On the contrary, \textit{P2B-Ensemble} always get an increase in diversity when increasing the ensemble size.

Figure~\ref{fig:EnsembleSize:Barplot} show the generalization performance of the ensembles learned with the \textit{Ensemble} and \textit{P2B-Ensemble} algorithms, for different ensemble size, from two to five. The obtained results from LeNet5 show how increasing the number of models decreases the performance of the ensemble, in contrast to ResNet20 and MLP-50, where the opposite outcome raises. We do not have a convincing explanation for this phenomenom.

\begin{figure}[!htb]
	\centering
	\begin{tabular}{cc}
		\includegraphics[scale = 0.4]{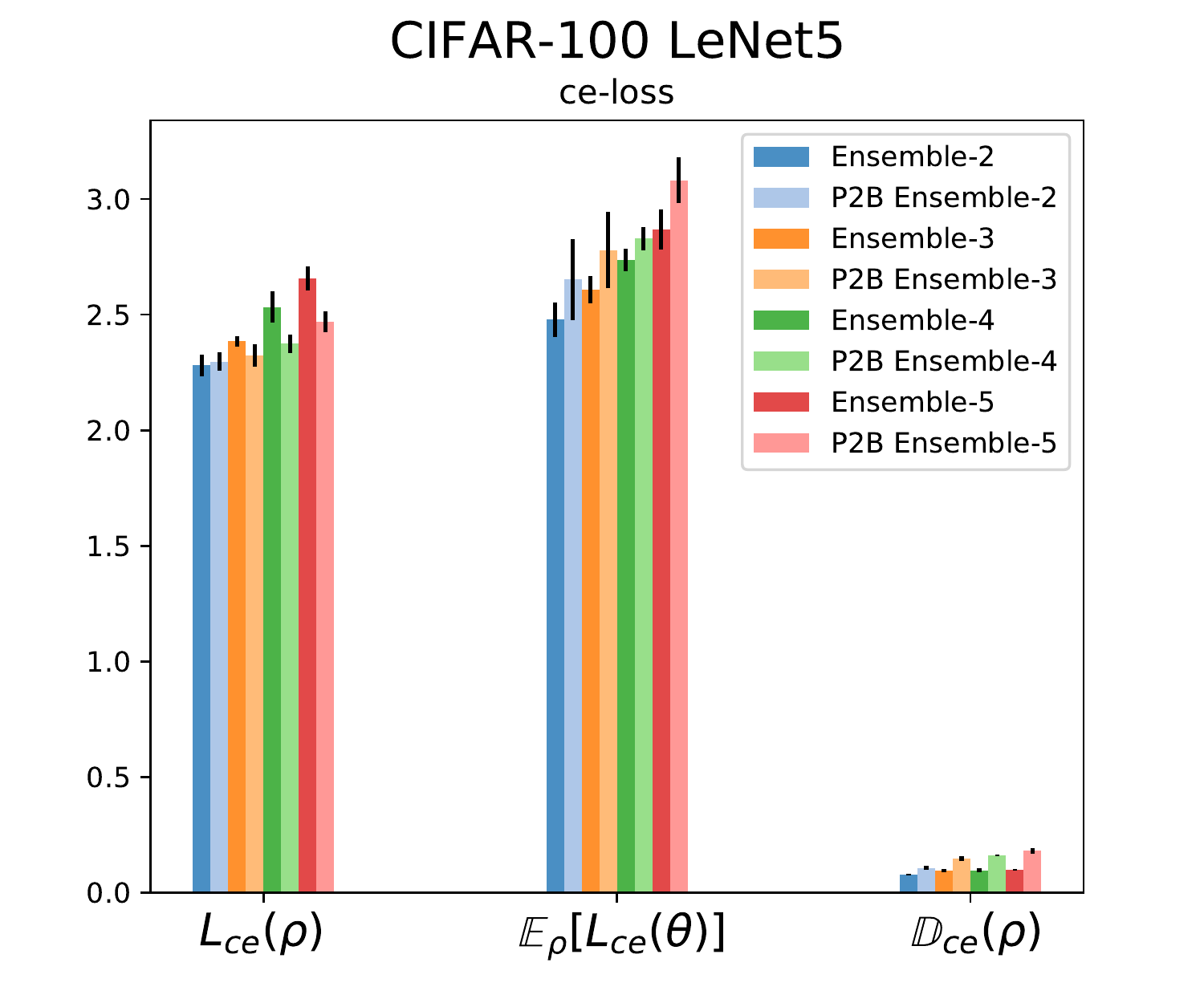}
		&  
		\includegraphics[scale = 0.4]{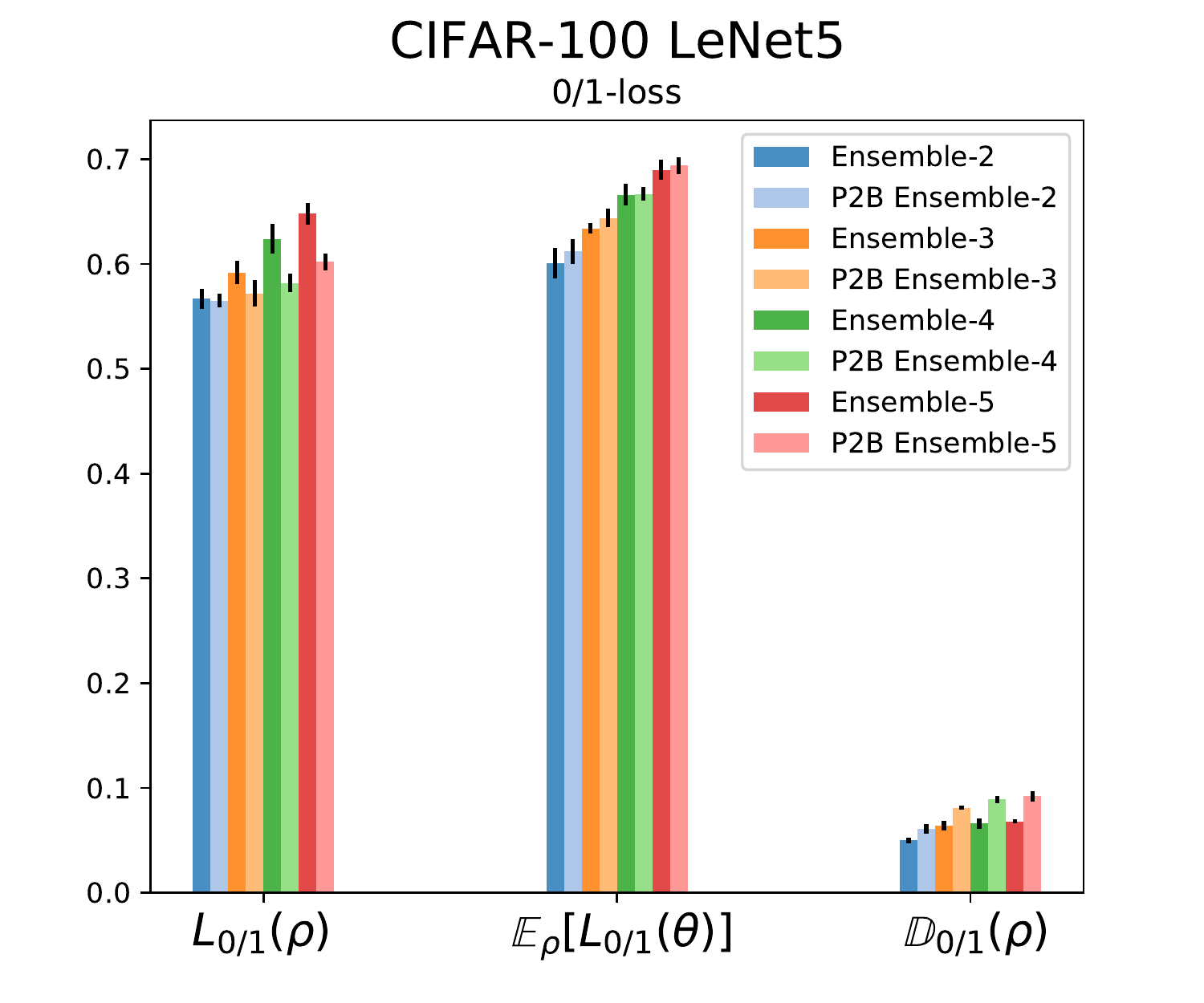}
		\\
		\includegraphics[scale = 0.4]{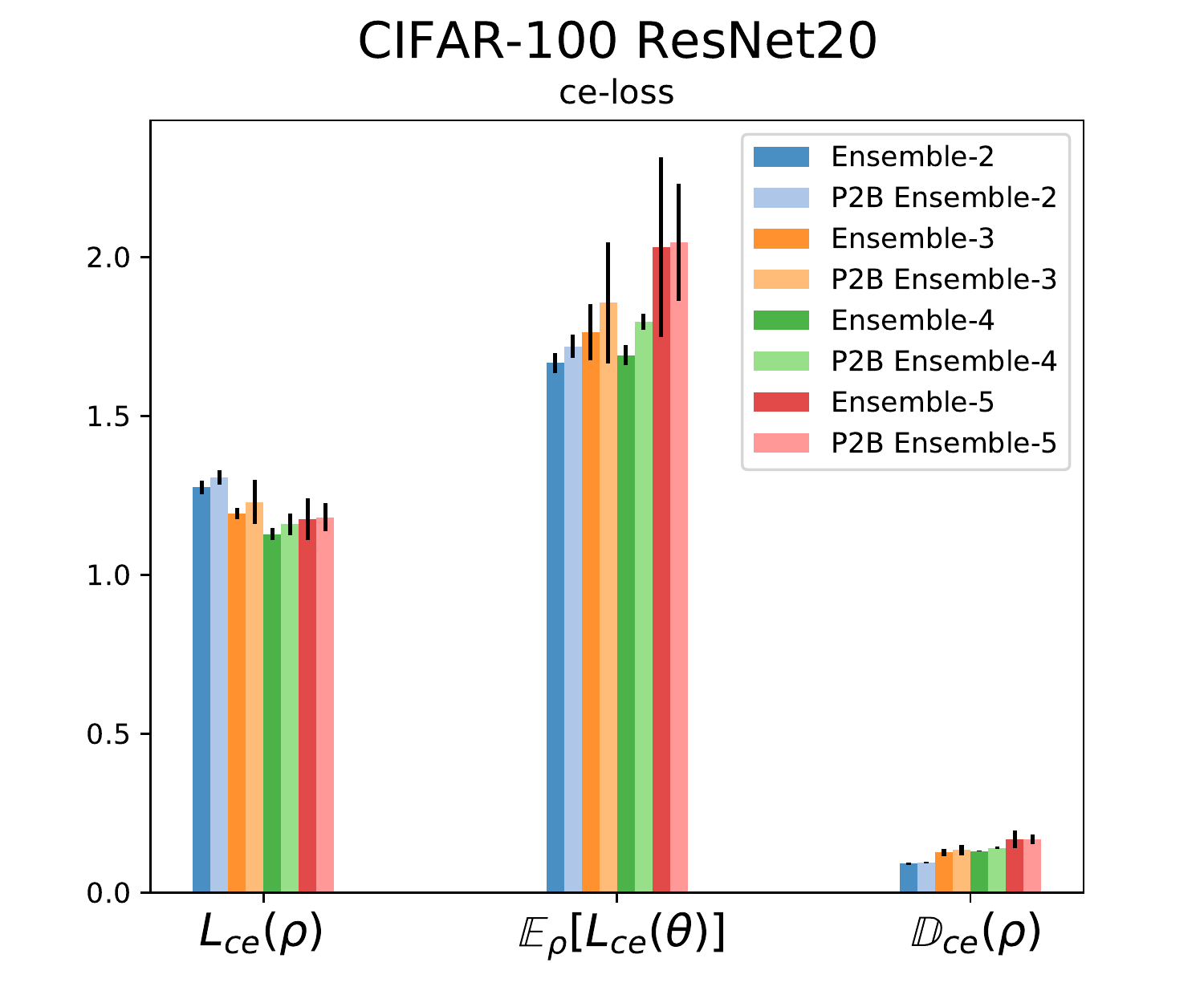}
		&
		\includegraphics[scale = 0.4]{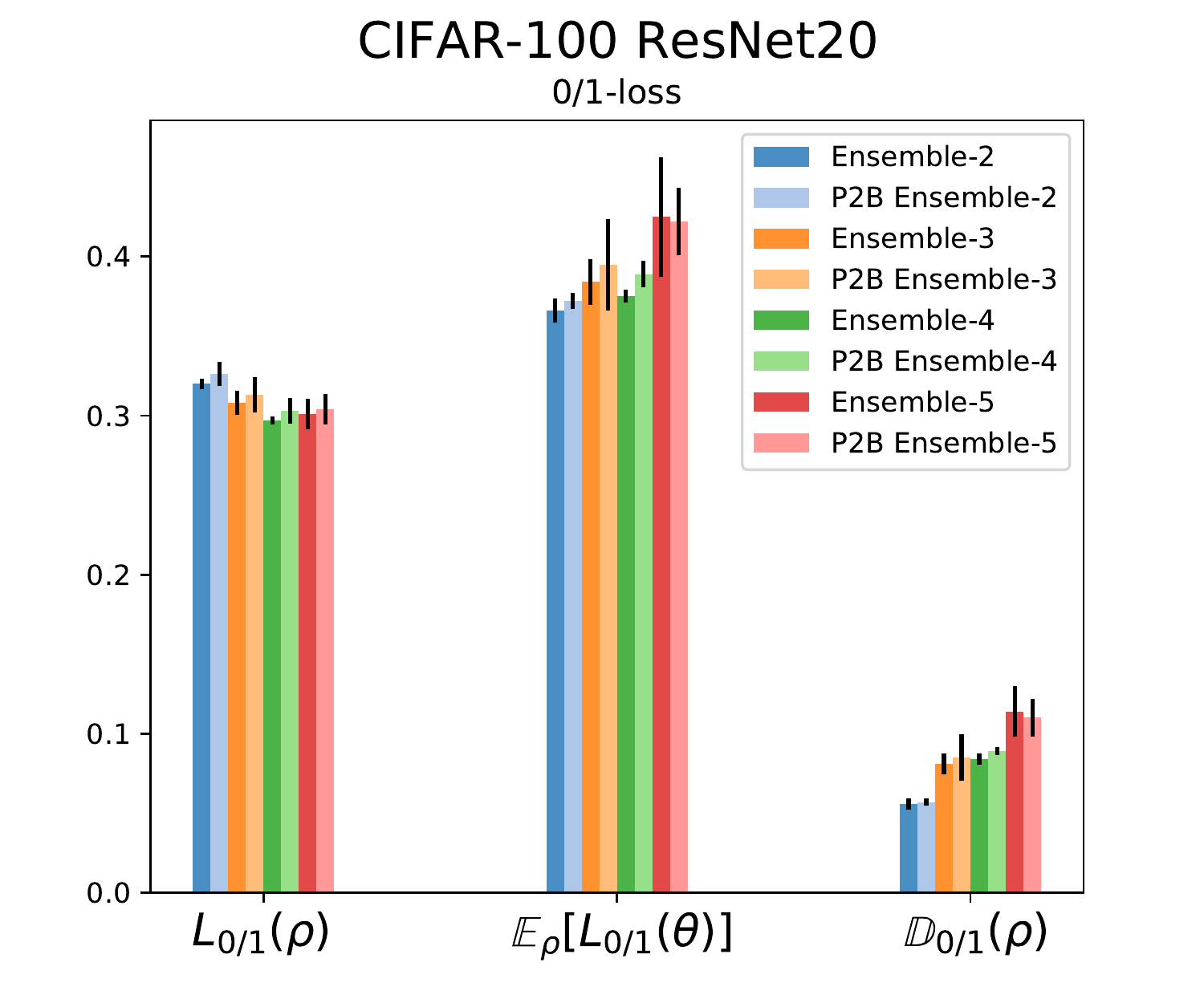}
		\\
		\multicolumn{2}{c}{\includegraphics[scale = 0.4]{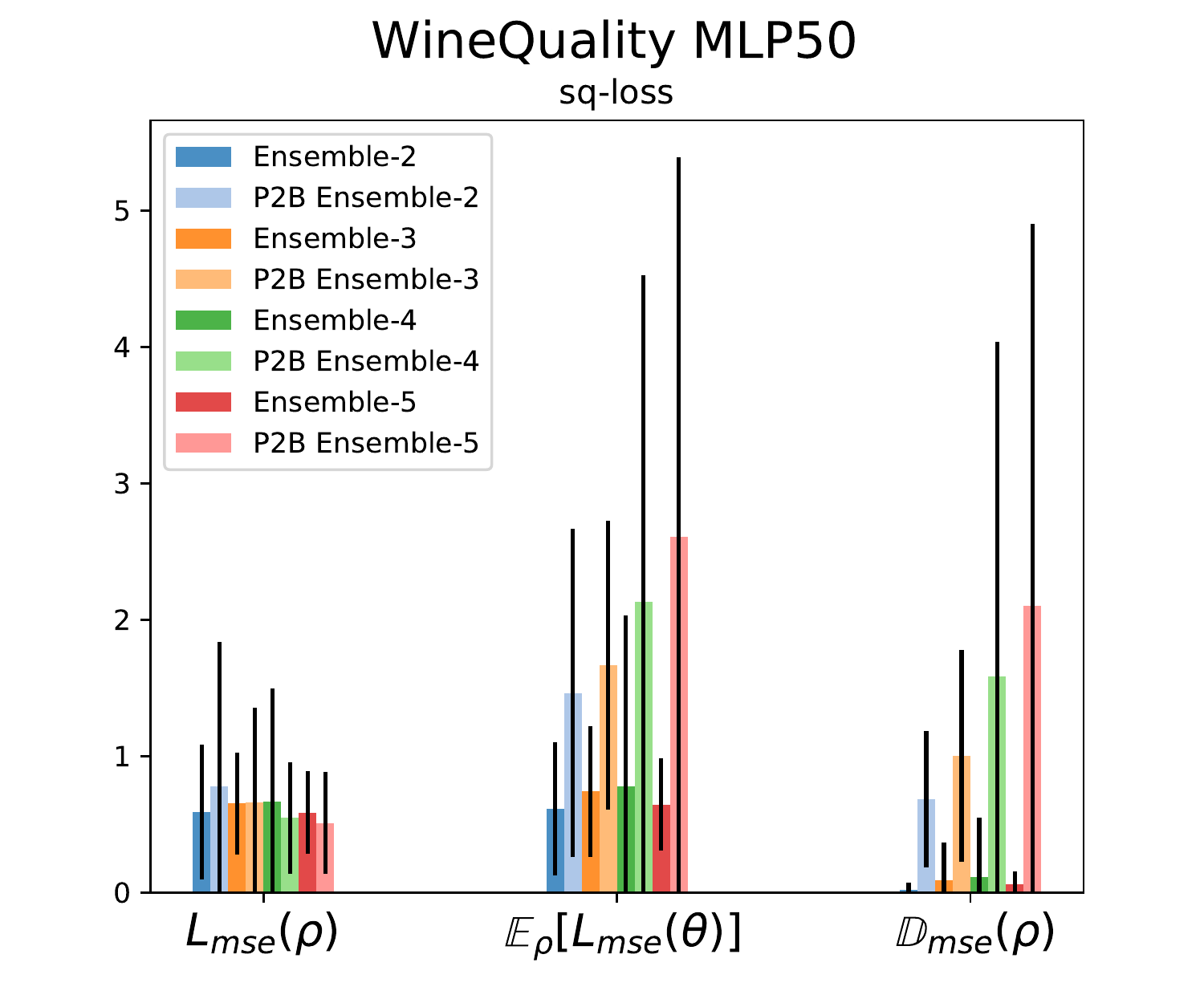}}
	\end{tabular}
	\caption{Mean plus/minus three standard deviations of the ensemble error, average individual models errors and ensemble diversity using LeNet5, ResNet20 and MLP50. Ensemble and P2B-Ensemble algorithms with ensembles of sizes, 2, 3, 4 and 5.}
	\label{fig:EnsembleSize:Barplot}
\end{figure}

\subsection{Evaluation of Corollary \ref{collorary:singlemodel}}

\begin{figure}[t]
	\centering
	\begin{tabular}{cc}
		\hspace{-5pt}\includegraphics[width=0.5\linewidth]{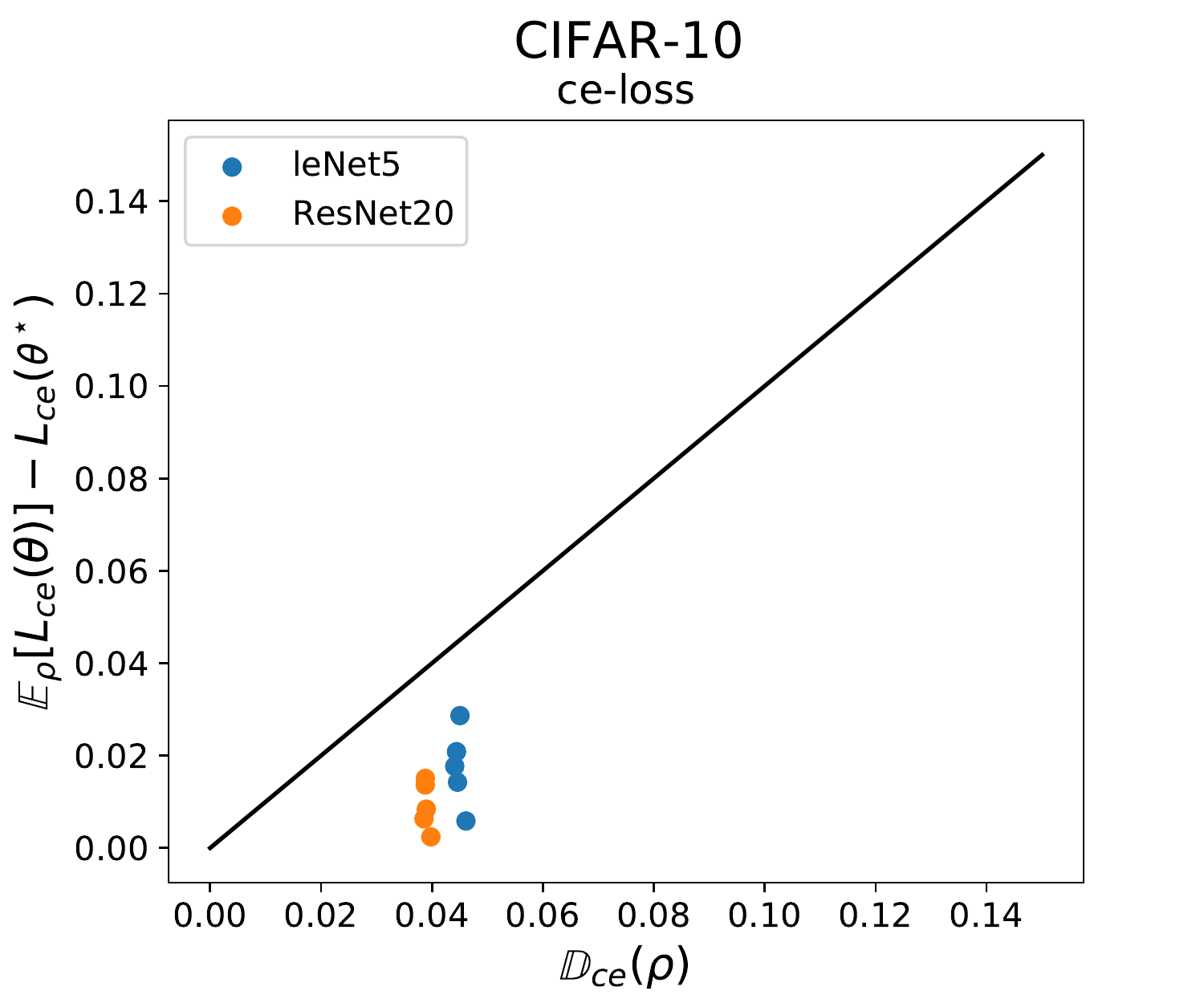}
		&
		\hspace{-5pt}\includegraphics[width=0.5\linewidth]{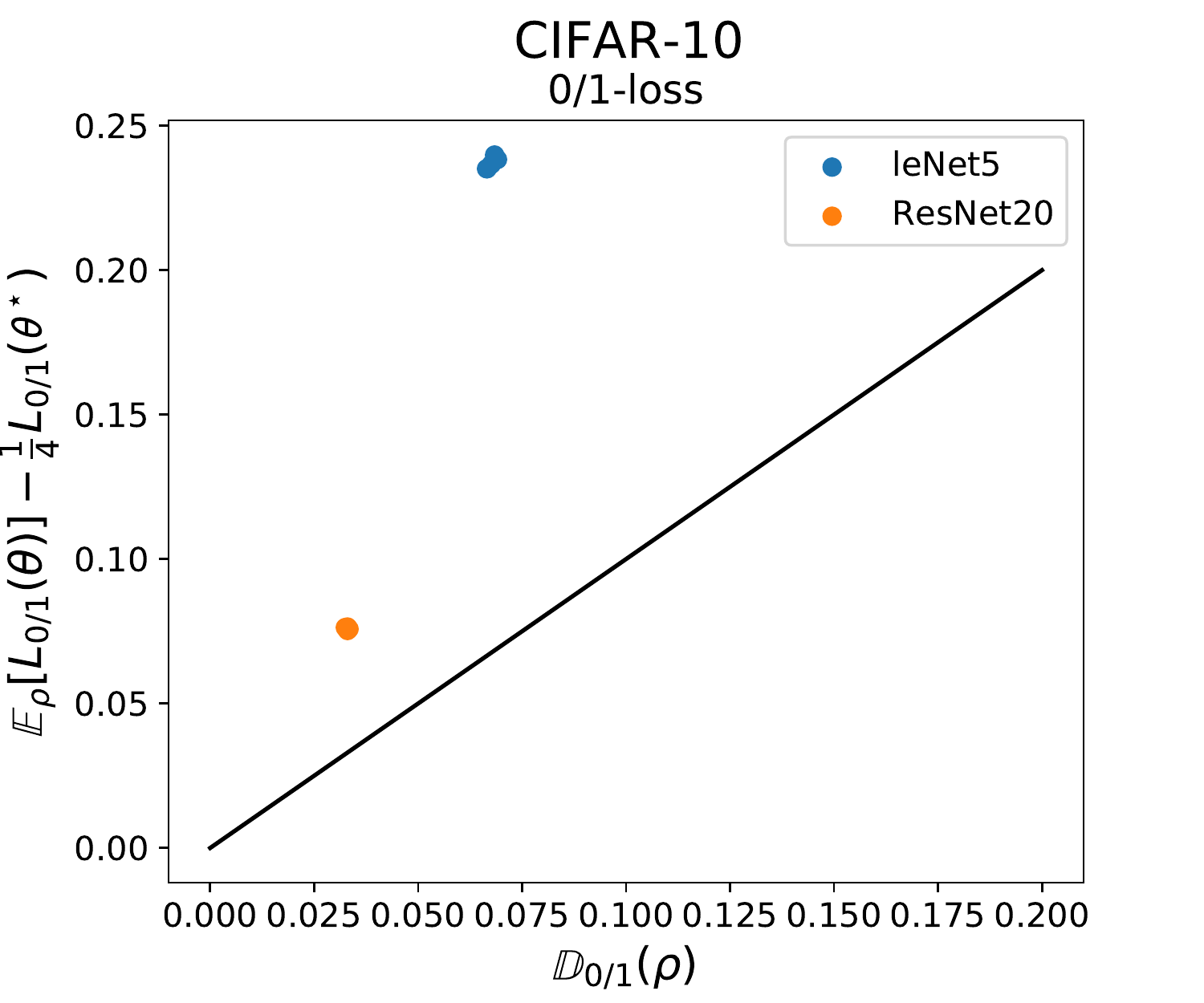}
		\\
		\hspace{-5pt}\includegraphics[width=0.5\linewidth]{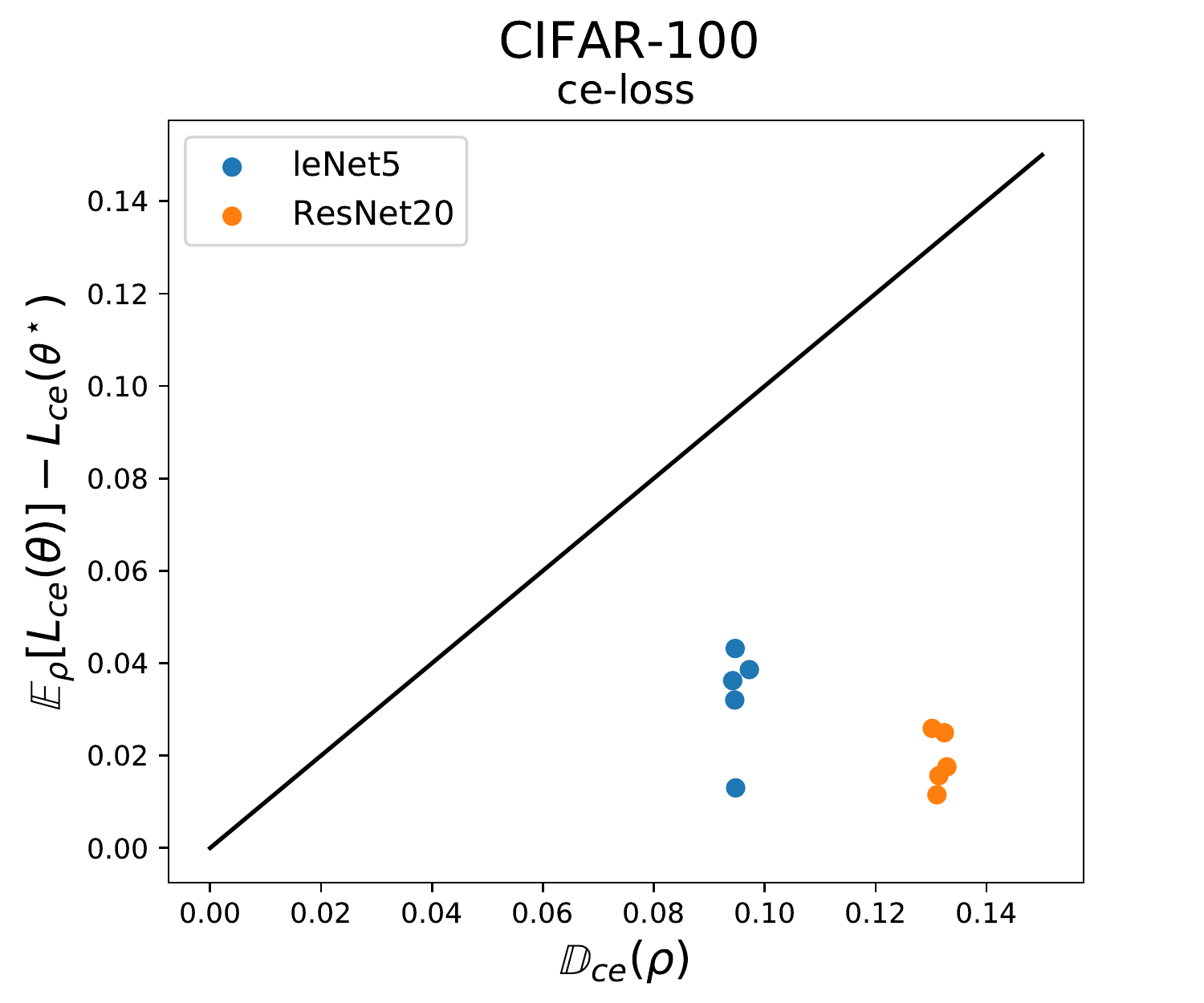}
		&
		\hspace{-5pt}\includegraphics[width=0.5\linewidth]{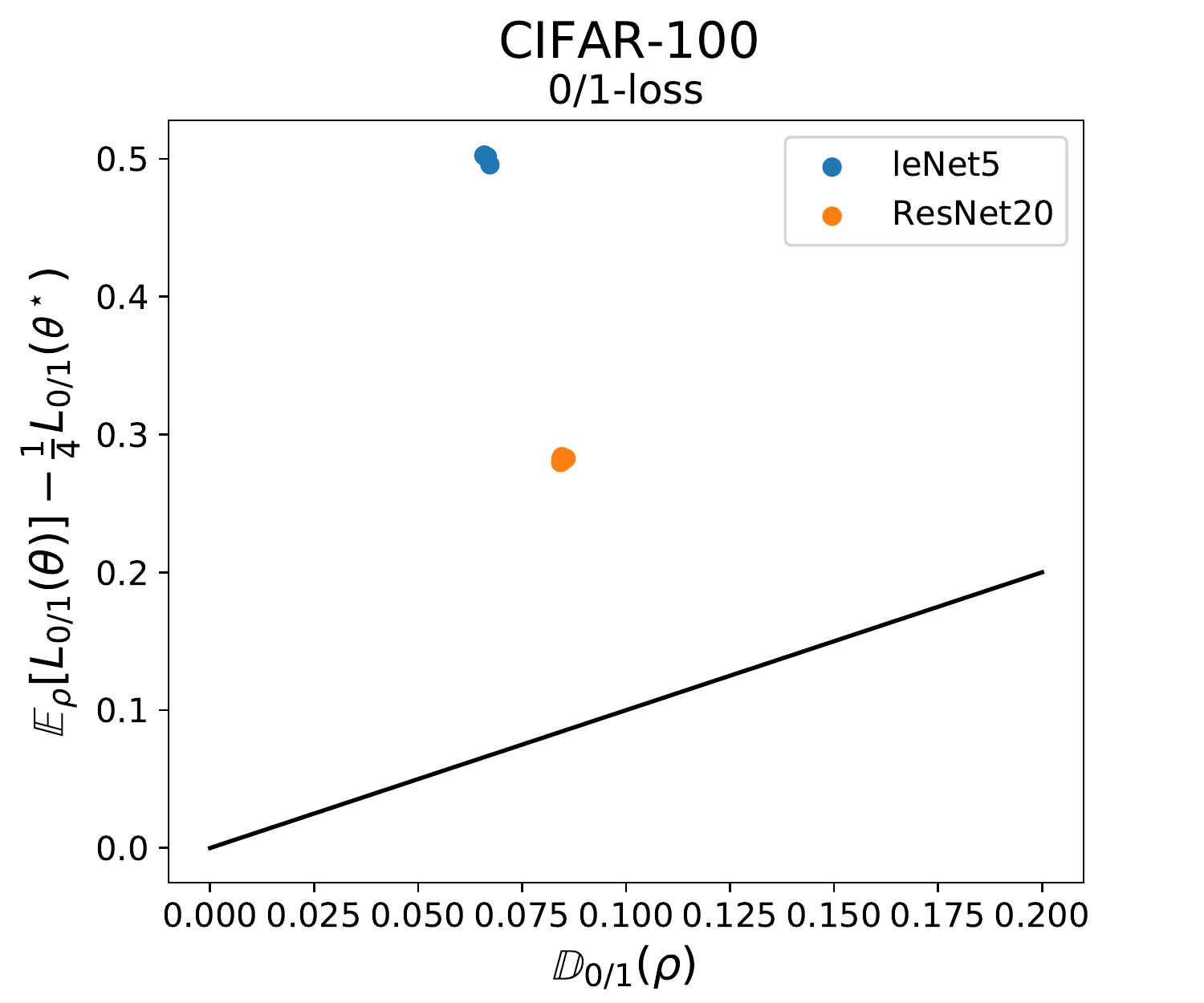}
		\\
		\multicolumn{2}{c}{\hspace{-5pt}\includegraphics[width=0.5\linewidth]{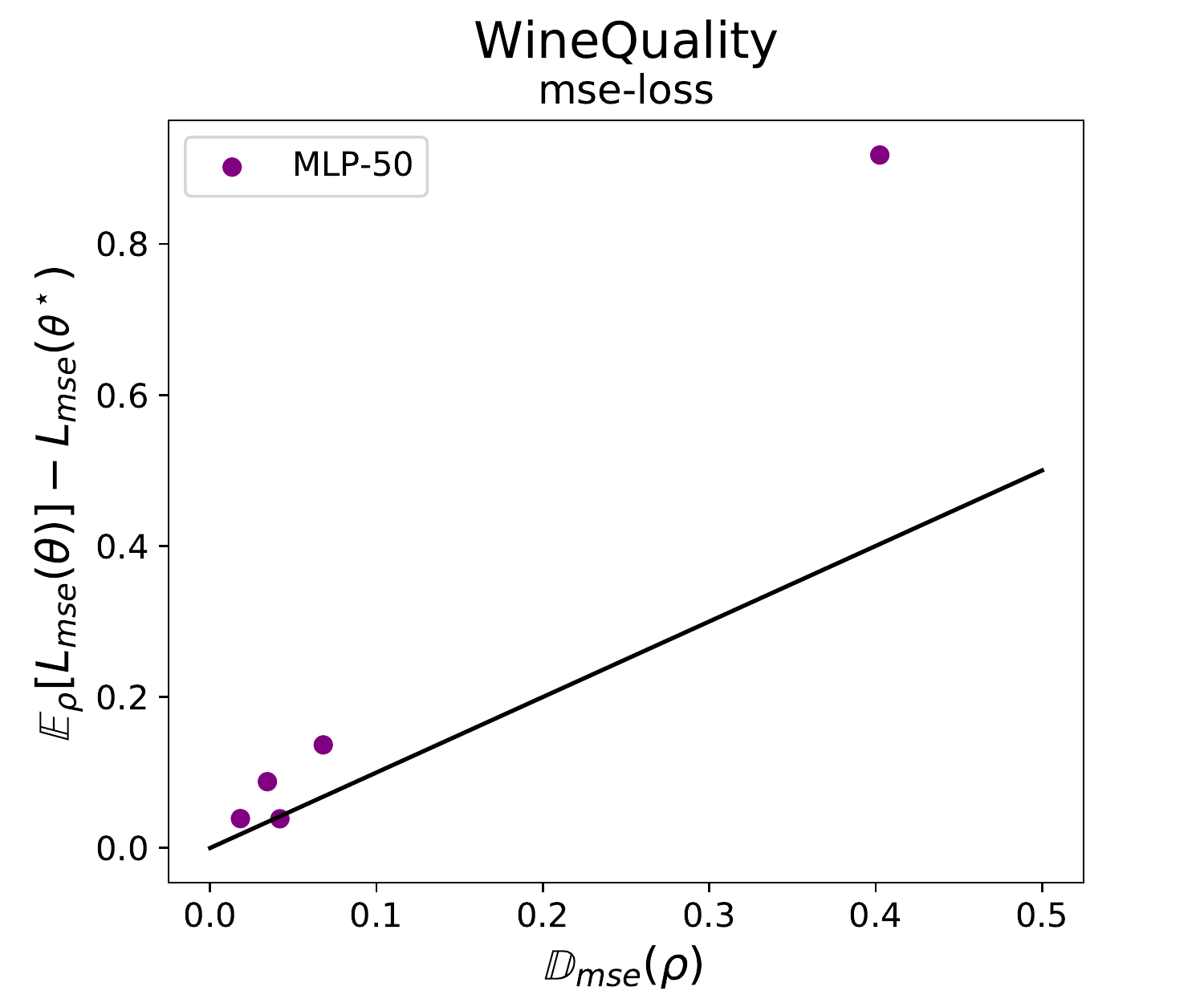}}
		\\
	\end{tabular}
	\caption{\textbf{Evaluation of Corollary \ref{collorary:singlemodel}}. Each point is an ensemble model learned with \textit{Ensemble}. $\bmtheta^\star$ denotes the best single individual model of the ensemble. Ensembles below the black line satisfy the condition of the corollary and outperform the best single individual model. }\label{fig:collorary:singlemodel}
\end{figure}

Finally, we want to through evidence about how good Corollary \ref{collorary:singlemodel} to decide when an ensemble is better than a single model. Figure \ref{fig:collorary:singlemodel} shows how often the condition of Corollary \ref{collorary:singlemodel} is met by those ensembles learned with \textit{Ensemble}, the standard ensemble learning algorithm. In consequence, these ensembles perform better than the best individual model in isolation. Points below the line represents ensembles satisfying this condition. As can be seen, all ensemble models learned by \textit{Ensemble} for the $\CE$-loss in CIFAR-10 and CIFAR-100 satisfy the condition and performs better than the best individual model. So, we can see that the random initialization is an effective way to learn high quality ensembles. However, this is not always the case for $\MSE$-loss in WineQuality using MLP50. In this case, Ensemble often fails to learn ensembles that generalize better than the best individual model. The problem is, as we show in Figure \ref{fig:collorary:gap}, that \textit{Ensemble} learn ensembles with very low diversity. 

In the case of the $\zeroone$-loss, we have the same ensembles than for the $\CE$-loss (remember, majority vote ensembles are trained with the $\CE$-loss). In this case, we do not have any model satisfying the condition of Corollary \ref{collorary:singlemodel}. Even though, one can verify that for all the models the ensemble performs better than best single individual model. Again, the factor $\alpha=4$ has a significant impact because it weakens the bound and reduces its applicability.

\end{document}